\documentclass{article}


\usepackage[preprint, nonatbib]{neurips_2022}



\usepackage{mystyle}

\title{Interactive Recommendations for Optimal Allocations in Markets with Constraints}

\author{
  Yigit Efe Erginbas \textsuperscript{1} \thanks{equal contribution} , Soham Phade \textsuperscript{2} \samethanks , Kannan Ramchandran \textsuperscript{1} \\
  \textsuperscript{1} Department of Electrical Engineering and Computer Science\\
  University of California, Berkeley\\
   \textsuperscript{2}  Salesforce Research \\ 
  \texttt{\{erginbas,soham\_phade,kannanr\}@berkeley.edu}\\
}

\begin{document}

\maketitle


\begin{abstract}

Recommendation systems when employed in markets play a dual role: they assist users in selecting their most desired items from a large pool and they help in allocating a limited number of items to the users who desire them the most. Despite the prevalence of capacity constraints on allocations in many real-world recommendation settings, a principled way of incorporating them in the design of these systems has been lacking. Motivated by this, we propose an interactive framework where the system provider can enhance the quality of recommendations to the users by opportunistically exploring allocations that maximize user rewards and respect the capacity constraints using appropriate pricing mechanisms. We model the problem as an instance of a low-rank combinatorial multi-armed bandit problem with selection constraints on the arms. We employ an integrated approach using techniques from collaborative filtering, combinatorial bandits, and optimal resource allocation to provide an algorithm that provably achieves sub-linear regret, namely \smash{$\widetilde{\mathcal{O}} ( \sqrt{N M (N+M) RT} )$} in $T$ rounds for a problem with $N$ users, $M$ items and rank $R$ mean reward matrix. Empirical studies on synthetic and real-world data also demonstrate the effectiveness and performance of our approach.

\end{abstract}


\section{Introduction}




Online recommendation systems have become an integral part of our socioeconomic life with the rapid increases in online services that help users discover options matching their preferences. 
Despite providing efficient ways to discover information about the preferences of users, they have played a largely complementary role to searching and browsing with little consideration of the accompanying \emph{markets} within which recommended items are allocated to the users. Indeed, in many real-world scenarios, recommendations bring about the \emph{allocation} of the corresponding items in a market that has possibly intrinsic constraints. 
In particular, recommendations of candidate items that have associated notions of limited \emph{capacities} naturally give rise to a market setting where users compete for the allocation of the recommended items.

Allocation constraints are common in recommendation contexts.
A few interesting examples include: (1) Point-of-Interest (PoI) recommendation systems (e.g., restaurants, theme parks, hotels), where the PoI can only accomodate limited number of visitors, (2) book recommendation systems employed by libraries, where the books recommended to the borrowers have limited copies, (3) route recommendation systems which aim to suggest the optimal road for travelling while avoiding traffic congestion, (4) course recommendation systems for universities, where each recommended course has limited number of seats. As similar systems become more ubiquitous and impactful in the broader aspects of daily life, there is a huge application drive and potential for delivering recommendations that respect the requirements of the market. Therefore, it is crucial to consider capacity-aware recommendation systems to maximize the user experience. 

\textbf{Main Challenges: }We model the user preferences as rewards that users obtain by consuming different items, while the social welfare is the aggregate reward over the entire system comprising multiple users with heterogeneous preferences, and a provider who continually recommends items to the users and receives interactive reward feedback from them. The provider aims to maximize the social welfare while respecting the \emph{time-varying} allocation constraints: indeed we consider system \emph{dynamics} in terms of user demands and item capacities to be an important aspect of our problem. In the process of identifying the best match between users and target items, the provider encounters two challenges: The first challenge relates to the element of \emph{recommendation} as the provider needs to make recommendations without exact knowledge of the user preferences ahead of time, and hence has to continue exploring user preferences while continually making recommendations. The second challenge relates to the \emph{allocation} aspect of the problem induced by the market constraints. Note that even if matching the users with their most preferred items would result in high rewards, such an allocation may not respect the constraints of the market. For example, in a restaurant recommendation setting, if there is a hugely popular restaurant that most people love, a naive recommender would send many users to the same restaurant, causing overcrowding and considerable user dissatisfaction. 

The key to overcoming the (first) challenge of making accurate recommendations is to learn the user preferences from the reward feedback. Since the preferences of different users for different items are highly correlated, it is natural to employ collaborative filtering techniques that have been widely applied in recommender systems \cite{schafer_2007, bennett_2007, koren_2009, sarwar_01}. In order to learn the user preferences efficiently, previous works have established interactive collaborative filtering systems that query the users with well-chosen recommendations \cite{kawale_2015, zhao_2013}. Typically, these works consider a setting where a single user arrives to the system at each round and the system makes a recommendation that will match the user's preferences. However, this assumption no longer holds in applications having  an associated market structure, as recommendations made to different users in the same time period must also respect the constraints of the market.

The common strategy to tackling the (second) allocation challenge is through pricing mechanisms that ensure social optimality. Such mechanisms have been studied in economics for two-sided (supply and demand) markets and are called Walrasian auctions \cite{smith_1991}. In the networking literature, Kelly has also used similar mechanisms to do optimal bandwidth allocation over a network \cite{kelly_97}. In pricing-based mechanisms, the users choose the items based on their preferences as well as the posted prices. The provider meanwhile successively adjusts these prices in response to the user's demand for the items, so that capacity constraints are satisfied in equilibrium. The equilibrium prices ensure that the limited number of items are allocated to users that are expected to obtain the largest reward. However, these mechanisms still require the users to know and evaluate their preferences for \emph{all} possible items and respond constantly with their updated bids/demands for each item. This is definitely not a scalable solution for the large-scale system (comprising large numbers of users and items) that we target. Furthermore, this framework assumes that users already know their preferences for all items, which is clearly not true in our setting, where users report their preferences through feedback \emph{after} being targeted with their recommended items. For this reason, the provider must \emph{learn} the user preferences in its quest to perform optimal capacity-constrained allocations.

\begin{wrapfigure}{r}{0.36 \textwidth} 
\vspace{-7pt}
\includegraphics[width= 0.36 \textwidth]{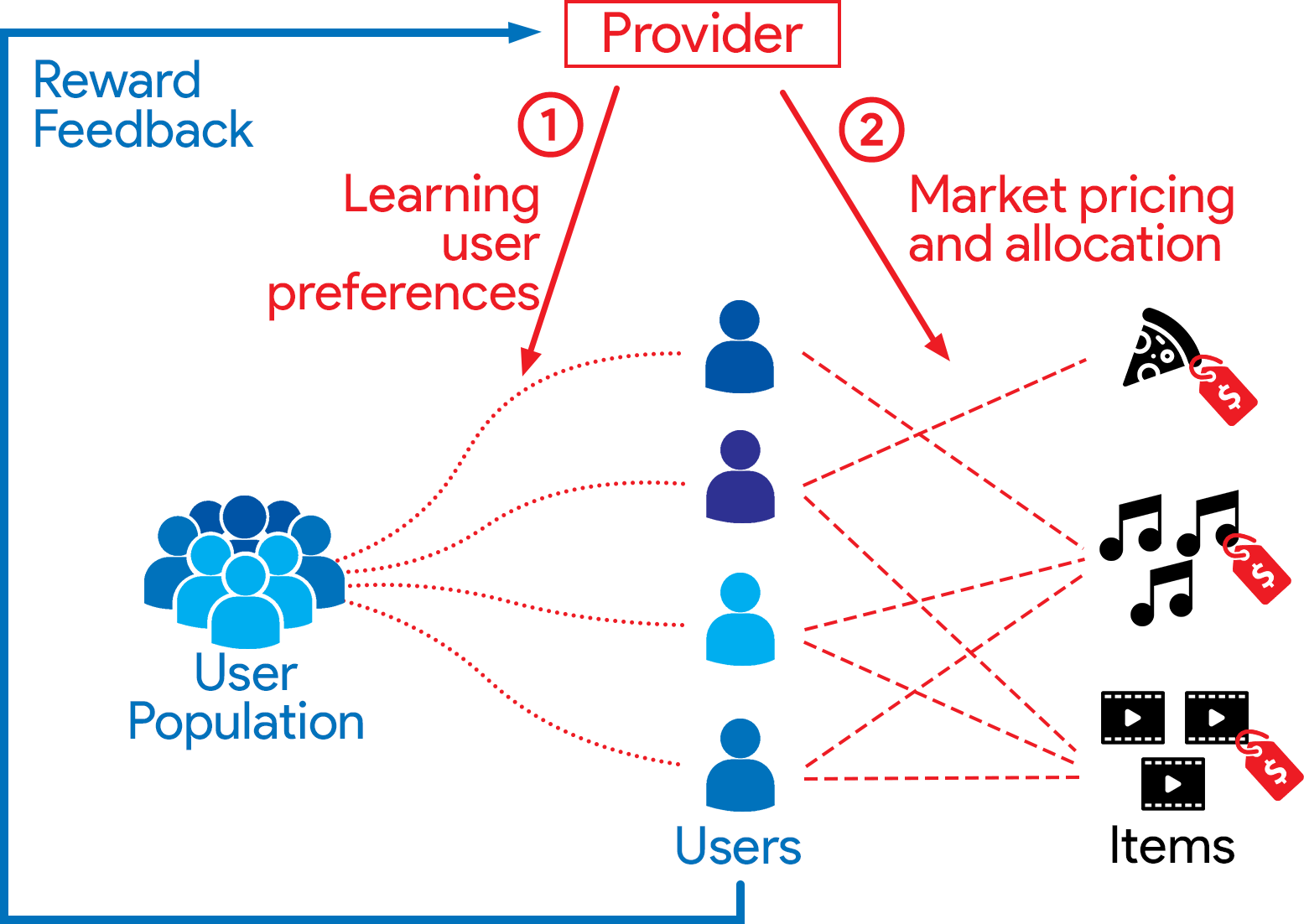}
\caption{The provider interactively learns the user preferences to achieve socially optimal capacity-constrained allocations.}
\label{system_diagram}
\vspace{-8pt}
\end{wrapfigure}
Hence, as depicted in Figure \ref{system_diagram}, the goal of the provider is twofold: (1) to learn the user preferences and make recommendations that will guide the users to choose the items that they are likely to obtain high rewards, (2) to achieve allocations that will satisfy the capacity constraints. To achieve these goals, we envision developing the following market-aware recommendation mechanism for the provider. By recommending items, the provider helps the users to narrow down their options so that users can comprehend and evaluate their preferences among a smaller number of offered items. In addition, being aware of the market structure, the provider carefully determines the item prices that play the role of an intermediary for satisfying the constraints of the market. We believe that this is an important and practically-relevant question to be resolved because it allows for the analysis of many interesting real-world interactive recommendation settings with market constraints. In its full generality, this framework requires us to model the user decisions in a way that will capture the effects of the recommendations and prices that they are presented. In order to avoid the complications introduced by this modelling challenge and to obtain a profound understanding of fundamental aspects of the problem, we begin with focusing our attention on a central question whose solution will be key to making progress towards our longer-term goal of developing a complete framework.


Specifically, we focus our study on these essential aspects of the problem: recommending and allocating the items while interactively learning the user preferences, which to the best of our knowledge has not been addressed in the literature. In essence, we analyze a special case of the mechanism introduced above, by assuming that the provider makes recommendations such that the number of presented choices matches with the number of items the user is willing to consume, so that the users obtain all of the recommended items regardless of their prices. Then, the provider's task reduces to deciding on high-reward allocations while satisfying the constraints by allocating each item to at most certain number of users.

\textbf{Structured Combinatorial Multi-Armed Bandits: } The provider seeks to choose high-reward allocations subject to the constraints, while actively learning the user preferences by making queries that will give rise to the most informative responses. Therefore, it encounters the well-known \textit{exploration-exploitation} dilemma. In essence, there exists a trade-off between two competing goals: maximizing social welfare using the historical feedback data, and gathering new information to improve the performance in the future. In the literature of interactive collaborative filtering, this dilemma is typically formulated as a multi-armed bandit problem where each arm corresponds to allocation of an item to a user \cite{zhao_2013, barraza_2017, wang_2019}. When an item is allocated to a user, a random reward is obtained and the reward information is fed back to the provider to improve its future allocation strategies. However, in contrast to prior works, our setting further requires that a collection of actions taken for different users satisfy the constraints of the market. 

We formulate our problem as a bandit problem with arms having correlated means, and call it Structured Combinatorial Bandit. Based on the standard OFU (Optimism in Face of Uncertainty) principle for linear bandits \cite{dani_2008, abbasi_2011}, we devise a procedure that learns the mean reward values opportunistically so as to solve the system problem of optimal allocation with minimum regret. The estimation method benefits from both the combinatorial nature of the feedback and the dependencies induced by the low-rank structure of collaborative filtering setting. Moreover, using matrix factorization techniques, the algorithm is efficient even at scale in settings with a large number of users and items. As is standard with OFU-based methods, our algorithm maintains a confidence set of the mean rewards for all user-item pairs. If it has less data about some user-item allocation pair, the confidence set becomes wider in the corresponding direction. Then, due to optimism, the algorithm becomes more inclined to attempt the corresponding allocation pairs to explore and collect more information. 

\textbf{Our contributions: } 
\begin{itemize}[nosep, labelindent= 0pt, align= left, labelsep=0.4em, leftmargin=*]
    \item We formulate the problem of making recommendations that will facilitate socially optimal allocation of items with constraints. Our formulation further allows for the analysis of problem settings with dynamic (i.e., time-varying) item capacities and user demands.
    \item We pose the Structured Combinatorial Bandit problem under generic structural assumptions (not only low-rank) and propose an algorithm that achieves sublinear regret bounds in terms of parameters that depend on the problem-specific structure of the arms.
    \item For the recommendation setting, we specialize our results to low-rank structures and obtain a Low-Rank Combinatorial Bandit (LR-COMB) algorithm that achieves \smash{$\widetilde{\mathcal{O}} ( \sqrt{N M (N+M) RT} )$} regret in $T$ rounds for a problem with $N$ users, $M$ items and rank $R$ mean reward matrix.
\end{itemize}
\textbf{Experiments: } We run experiments both on synthetic and real-world datasets to show the efficacy of the proposed algorithms. Results show that proposed algorithm can obtain significant improvements over naive approaches in solving the problem of recommendation and allocation with constraints.

\textbf{Related work: }

\begin{itemize}[nosep, labelindent= 0pt, align= left, labelsep=0.4em, leftmargin=*]

\item \textbf{Combinatorial Multi-Armed Bandits (CMAB) and Semi-Bandits: } The frameworks of CMAB \cite{chen_2013, kveton_2015} and semi-bandits \cite{audibert_2011} model multi-armed bandit problems where the player chooses a subset of arms in each round and observes individual outcomes of the played arms. However, they do not incorporate any structural assumptions about the rewards obtained from the arms. However, in a collaborative filtering setting like ours, the main promise is to leverage the intrinsic structure between different user-item pairs. To close this gap, we pose the problem of Structured Combinatorial Bandits and devise an algorithm that makes use of the structure of the arms as well. Additionally, CMAB framework assumes availability of an oracle that takes the means rewards for the arms and outputs the optimum subset of arms subject to the selection constraints. Due to the combinatorial nature of the problem, this oracle may not be readily available in general CMAB settings. In our case, due to the special structure of the capacity constraints, we can efficiently solve for the optimum allocations given the mean rewards of the allocation pairs.

\item \textbf{Structured Linear Bandits: } Our formulation also shows parallelism with the frameworks of structured linear bandits \cite{johnson_2016, combes_2017} and low-rank linear bandits \cite{lu_2021}. However, it is distinct from them by having the additional ability to capture the combinatorial nature of the problem. In linear bandits, the player only observes the final total reward, but no outcome of any individual arm. Our setup differs from their case because the player (provider) is able to observe individual outcomes of all played arms. Due to this richer nature of the observation model, we can achieve lower regret guarantees than what is available in the literature of structured linear bandits.

\item \textbf{Recommendation with Capacity Constraints: } There have been a few works using the notion of constrained resources to model and solve the problem of recommendation with capacity constraints \cite{christakopoulou_2017, makhijani_2019}. However, these works only consider optimizing the recommendation accuracy subject to item usage constraints without any consideration of the interactive mechanisms that discover user preferences through recommendations.

\item \textbf{Competing Bandits in Matching Markets: } One other related line of literature studies the stable matching problem in two-sided markets \cite{liu_2020}. The model assumes that entities on each side of the market has preference orderings for the other side of the market and the allocations are driven by these preference orderings rather than the prices. In contrast to our work, these mechanisms necessitate at least the entities on one side of the market know their preferences over all the entities on the other side of the market. However, in many real-world settings of optimum recommendation and allocation, like the examples given above, the explicit preferences are not known ahead of time and can only be discovered through interactions. Furthermore, the matching markets only model one-to-one matches, meaning that they do not allow for the items to be allocated for multiple users.
\end{itemize}
\vspace{-2pt}

\section{Problem setting}
\label{sec:setting}

We use bold font for vectors $\mathbf{x}$ and matrices $\mathbf{X}$, and calligraphic font $\mathcal{X}$ for sets. We denote by $[K]$ the set $\{1,2, \dots, K\}$.
For a vector $\mathbf{x}$, we denote its $i$-th entry by $x_i$ and for a matrix $\mathbf{X}$, we denote its $(i,j)$-th entry by $x_{ij}$. We denote the Frobenius inner product of two matrices by $\langle \vect{A}, \vect{B} \rangle = \tr (\vect{A}^\mathrm{T} \vect{B})$, and the Frobenious norm of a matrix $\vect{A}$ by $\|\vect{A}\|_\text{F}$.

Suppose the \emph{system} has $N$ \emph{users} and $M$ \emph{items} in record. 
The items are \emph{allocated} to the users in multiple \emph{rounds} (or \emph{periods}) denoted by $t \in \mathbb{N}$. Allocation of an item $i \in [M]$ to a user $u \in [N]$ results in a random \emph{reward} that has a distribution unknown to the system provider. The expected reward obtained from allocating item $i$ to user $u$ is denoted by $\theta^*_{ui}$ and these values are collected into the mean reward matrix $\mathbf{\Theta}^* \in \mathbb{R}^{N \times M}$. 

We assume that each item has (time-varying) capacity that corresponds to the maximum number of different users it can be allocated to. We denote the capacity of item $i \in [M]$ by $c_{t,i}$, and collect these values into vectors $\mathbf{c}_t \in \mathbb{R}^{M}$. Similarly, each user has a (time-varying) demand that corresponds to the maximum number of different items it can get allocated. We denote the demand of user $u \in [N]$ by $d_{t,u}$, and collect these values into vectors $\mathbf{d}_t \in \mathbb{R}^{N}$. Therefore, each item can only be allocated to at most $c_{t,i}$ different users, while each user can only get allocated at most $d_{t,u}$ different types of items in the period $t$. We shall call these the \emph{allocation constraints}. One can consider the special case where $d_{t,u}$ parameters only take values from $\{0, 1\}$ so that each \emph{active} user gets at most one allocation while the \emph{inactive} users do not get any allocations.

Let $\mathbf{X}_{t}$ denote the \emph{allocation matrix} for round $t$ where the $(u, i)$-th entry is one if user $u$ is allocated item $i$ at round $t$, and zero otherwise. Due to the allocation constraints, any valid $\mathbf{X}_{t}$ must belong to the set of valid allocation matrices $\mathcal{X}_t \subseteq \{0, 1\}^{N \times M}$ defined as:
\begin{equation*}
    \mathcal{X}_t = \{ \mathbf{X} \in \{0, 1\}^{N \times M} : \mathbf{X} \mathds{1}_M \leq \mathbf{d}_t \text{ and } \mathbf{X}^\textrm{T} \mathds{1}_N \leq \mathbf{c}_t\}
\end{equation*}
\vspace{-1pt}
where the inequalities are entry-wise and $\mathds{1}_p$ denotes the all-ones vector of size $p$.

\subsection{Optimal allocations}
\label{sect_opt_allocations}

Given the knowledge about the mean reward matrix $\vect{\Theta}^*$, the optimal allocation $\vect{X}^*_t$ at time $t$ can be obtained by solving the integer program:
\begin{equation}
    \vect{X}^*_t \in \argmax_{\vect{X} \in \mathcal{X}_t} \; \langle \vect{X}, \vect{\Theta}^* \rangle
    \label{integer_num}
\end{equation}
This integer program can be relaxed to a linear program by dropping the integral constraints (setting $0 \leq x_{ui} \leq 1$). In Appendix \ref{appendix_num}, we show that the integrality gap of this problem is zero. \footnote{The integrality gap is the difference between optimal values of the integer program and its linear relaxation.} Hence, any integer solution found for the relaxed problem is also a solution for the allocation problem. 

When the provider does not have direct knowledge of the mean rewards associated with user-item allocation pairs, one standard approach is to employ pricing mechanisms \cite{smith_1991, kelly_97}. 
The idea is to apply dual decomposition on the (partial) Lagrangian function $ L( \vect{X},  \vect{\lambda}) = \langle \vect{X}, \vect{\Theta}^* \rangle + \vect{\lambda}^\textrm{T} ( \vect{c}_t - \vect{X}^\textrm{T} \mathds{1}_N)$ where $\vect{\lambda} \geq 0$ are the Lagrange multipliers (item prices) associated with the capacity constraints. Then, the allocation problem is decomposed into one problem for each user and one problem for the provider where the item prices mediates between the subsidiary problems. Each user calculates its demand by maximizing the corresponding component of the Lagrangian for a given set of prices. On the other side, the provider iteratively updates the prices based on users demands to achieve the optimal pricing. At the end of many consecutive updates from users and the provider, the equilibrium ensures that the limited items are allocated to users that are expected to obtain the largest reward. (See \cite{palomar_2006} for further details.)


However, as discussed in the introduction, this pricing mechanism has limitations in many real-world applications. Most importantly, it requires the user to solve a problem that involves the valuations even for the items that the user has no prior experience with. However, in many real-world scenarios, it is infeasible to request the users to choose among all the items in the system. Secondly, in the process of price discovery, the mechanism asks the users to repetitively respond to the prices by recomputing their demands. However, since it might take many iterations until convergence to the optimal pricing, asking the users to respond many times would be a burden for them. Furthermore, the final prices found by this iterative mechanism are only guaranteed to be optimal for the problem defined by the capacity $\vect{c}_t$ and demand $\vect{d}_t$ parameters at round $t$. If the capacities and demands vary with time, the optimal pricing and allocation for the next allocation round $t+1$ will be different and will be needed to be rediscovered.  

\subsection{Learning the optimal allocations}

\label{learning_opt_alloc}

To address the issues discussed in the previous section, we need mechanisms that can find the optimum allocations using fewer and simpler interactions. One resolution is to recommend a subset of items along with prices intelligently chosen by the provider. This way, the users will be able to easily evaluate their preference on the small number of recommended items and decide on their demand without requiring to consider all items in the system. The provider will decide on well-chosen offerings with correct prices so that it can satisfy the capacity constraints. However, as the provider does not have the complete knowledge of the user preferences, it needs to learn the unknown preference parameters $\vect{\Theta}^*$ from the user feedback so that it can determine better recommendations as well as the correct prices. Based on the examples of applications provided in the introduction, we believe that design of such system dynamics is a practically-relevant question to be resolved.

As a first step in this direction, we decide to restrict our attention to a setting that itself has interesting interactions between learning the user preferences and allocating the items. In order to facilitate our analysis, we consider that the number of choices presented to each user $u$ at round $t$ is limited exactly by the their demand $d_{t, u}$ and users are allocated with all of the items that they are recommended. Therefore, the problem essentially reduces to an allocation problem in which users get allocated a set of items directly by the provider instead of users choosing between the offerings. Then, after each round of allocation, users provide feedback about the items that they have been allocated so that the provider can enhance its performance in the following rounds. Hence, whilst the users get allocated sequentially, the predictions are constantly refined using the reward feedback.

The provider determines the allocations according to an \textit{optimistic} estimate of the true mean reward matrix $\vect{\Theta}^*$. It solves the allocation problem \eqref{integer_num} assuming that the estimated parameter is the underlying reward parameter and obtains an estimate for the optimum allocation at each round $t$. Even though these allocations can be suboptimal due to estimation errors, our analysis shows that the cumulative regret obtained from these sequential allocations can only grow sublinearly with the time horizon. Using this approach, we are coupling the general principle of optimism in the face of uncertainty (OFU) along with capacity aware resource allocation. In the experiments section, we show the importance of this connection by comparing our strategy with algorithms that only focus on one aspect of the problem: a non-OFU algorithm that only aims for achieving momentary performance and an OFU-based algorithm that is unaware of the capacities.

\begin{remark}
\label{remark_price}
When the allocation problem \eqref{integer_num} is solved with the estimated parameters, the Lagrange multipliers for the capacity constraints give estimates for the optimum prices of the items. As long as the user preferences are estimated well enough, these prices emerging from provider's problem are such that users who are aware of their preference for all items would still choose the recommended items. Hence, when the user preferences are learned, the mechanism is able to achieve high-reward allocations that complies with the user incentives under the optimal pricing.
\end{remark}

\subsection{Problem formulation}

In this section, we formulate the provider's problem and its objective. At each time period $t$, the provider chooses multiple user-item allocation pairs collected into a set $\mathcal{A}_t \subseteq [N] \times [M]$. Then, the provider observes a random reward $R_{t, u, i}$ if user $u$ is allocated with item $i$ at round $t$. The total reward is the sum of rewards obtained from the system at all rounds during a time horizon $T$. The task is to repeatedly allocate the items to the users in multiple rounds so that the total expected reward of the system is as close to the reward of the optimal allocation as possible.

Letting $\mathbf{E}_{u, i} \in \mathbb{R}^{N \times M}$ denote the zero-one matrix with a single one at the $(u, i)$ entry, we can write the indicator matrix for the allocation at time $t$ as $\mathbf{X}_{t}  = \sum_{(u, i) \in \mathcal{A}_t} \mathbf{E}_{u, i}$. Consequently, $\mathbf{X}_{t}$ becomes a zero-one matrix with ones at entries $\mathcal{A}_t$ and zeros everywhere else. Note that there is a one-to-one relation between the matrix $\mathbf{X}_{t}$ and the set $\mathcal{A}_t$.

We denote by $H_t$ the history $\{\mathbf{X}_{\tau}, (R_{\tau, u, i})_{(u, i) \in \mathcal{A}_{\tau}}\}_{\tau = 1}^{t-1}$ of observations available to the provider when choosing the next allocation $\mathbf{X}_{t}$. The allocator employs a policy $\pi = \{ \pi_t | t \in \mathbb{N}\}$, which is a sequence of functions, each mapping the history $H_t$ to an action $\mathbf{X}_{t}$. Then, the $T$ period cumulative regret of a policy $\pi$ is the random variable
\vspace{-7pt}
\begin{equation*} \label{cumulative_regret}
    \mathcal{R}(T, \pi) =  \sum_{t = 1}^{T} \left[ \langle \mathbf{X}^*_t, \mathbf{\Theta}^*\rangle - \langle \mathbf{X}_{t}, \mathbf{\Theta}^* \rangle \right]
\end{equation*}
\vspace{-4pt}
where $\vect{X}^*_t \in \argmax_{\vect{X} \in \mathcal{X}_t} \; \langle \vect{X}, \vect{\Theta}^* \rangle$ denotes optimum allocation at time $t$.


\section{Methodology}
\label{sect_methodology}

In order to facilitate our analysis, we start by making the following assumptions that are standard in the multi-armed bandits literature.
\begin{assumption} For all $u \in [N]$, $i \in [M]$ and $t \in \mathbb{N}$, the rewards $R_{t, u, i}$ are independent and $\eta$-sub-Gaussian with mean $\theta^*_{u, i} \in [0, B]$.

\label{rew_assumptio}
\end{assumption}

To model the dependency between the mean rewards obtained from different user-item pairs, we employ the following assumption. We first present our algorithm and theoretical results under the general setting given by this assumption, and specialize for the setting of the collaborative filtering in following sections.
\begin{assumption}
\label{low_assum}
The mean reward matrix $\vect{\Theta}^*$ belongs to a known structure set $\mathcal{L} \subseteq \mathbb{R}^{N \times M}$.
\end{assumption}

In order to make use of initial historical data possibly available to the provider, we assume that the algorithm has access to an initial rough estimate $\overline{\vect{\Theta}}$ that satisfies $\|\overline{\vect{\Theta}} - \vect{\Theta}^* \|_\text{F} \leq G$. Such an estimate can be constructed using an off-the-shelf low-rank matrix completion algorithm on the initialization data. If such observations are not readily available at the time of initialization, they can be obtained by randomly sampling some of the user-item allocation pairs once. It is worth to note that one can also set $\overline{\vect{\Theta}} = \vect{0}$ and let $G$ be some number satisfying $\|\vect{\Theta}^* \|_\text{F} \leq G$.

\vspace{-3pt}
\begin{algorithm}
\caption{Structured Combinatorial Multi-Armed Bandit}
\begin{algorithmic}
\Require horizon $T$, initial estimate $\overline{\vect{\Theta}} \in \mathbb{R}^{N \times M}$ with $\|\overline{\vect{\Theta}} - \vect{\Theta}^* \|_\text{F} \leq G$.
\For{$t = 1, 2, \dots, T$}
\State Find the regularized least squares estimate $ \widehat{\vect{\Theta}}_t = \argmin_{\vect{\Theta} \in \mathcal{L}} \left \{ L_{2,t}(\vect{\Theta}) + \gamma \|\vect{\Theta} - \overline{\vect{\Theta}} \|_2^2 \right \}$
\State Construct the confidence set $\mathcal{C}_t = \{ \vect{\Theta} \in \mathcal{L} : \|\vect{\Theta} - \widehat{\vect{\Theta}}_t \|_{2, E_t} \leq \sqrt{ \beta_t^*(\delta, \alpha, \gamma)}\}$
\State Compute the action vector $\vect{X}_t = \argmax_{\vect{X} \in \mathcal{X}_t} \max_{\vect{\Theta} \in \mathcal{C}_t} \; \langle \vect{X}, \vect{\Theta} \rangle$
\State Play the arms $\mathcal{A}_t$ according to $\vect{X}_t$ 
\State Observe $R_{t, u, i}$ for all $(u, i) \in \mathcal{A}_{t}$
\EndFor
\end{algorithmic}
\label{alg_low}
\end{algorithm}
\vspace{-3pt}

Our method summarized in Algorithm \ref{alg_low} follows the standard OFU (Optimism in Face of Uncertainty) principle \cite{abbasi_2011}. It maintains a confidence set $\mathcal{C}_t$ which contains the true parameter $\vect{\Theta}^*$ with high probability and chooses the allocation $\vect{X}_t$ according to
\begin{equation}
    \vect{X}_t = \argmax_{\vect{X} \in \mathcal{X}_t} \left \{ \max_{\vect{\Theta} \in \mathcal{C}_t} \; \langle \vect{X}, \vect{\Theta} \rangle \right \}
\label{low_oful}
\end{equation}
Typically, the faster the confidence set $\mathcal{C}_t$ shrinks, the lower regret we have. However, the main difficulty is to construct a series of $\mathcal{C}_t$ that leverage the combinatorial observation model as well as the structure of the parameter so that we have low regret bounds. In this work, we consider constructing confidence sets that are centered around the regularized least square estimates. We let the cumulative squared prediciton error at time $t$ be
\begin{equation*}
    L_{2,t}(\vect{\Theta}) = \sum_{\tau=1}^{t-1} \sum_{(u, i) \in \mathcal{A}_\tau} (\theta_{ui}  - R_{\tau, u, i})^2,
\end{equation*}
and define the regularized least squares estimate at time $t$ as
\begin{equation}
    \widehat{\vect{\Theta}}_t = \argmin_{\vect{\Theta} \in \mathcal{L}} \left \{ L_{2,t}(\vect{\Theta}) + \gamma \|\vect{\Theta} - \overline{\vect{\Theta}} \|_2^2 \right \}.
    \label{least_squares_estimate_low}
\end{equation}
Then, the confidence sets take the form $\mathcal{C}_t := \{ \vect{\Theta} \in \mathcal{L} : \|\vect{\Theta} - \widehat{\vect{\Theta}}_t \|_{2, E_t} \leq \sqrt{\beta_t}\}$ where $\beta_t$ is an appropriately chosen confidence parameter, and the regularized empirical 2-norm $\| \cdot \|_{2, E_t}$ is
\begin{equation*}
    \| \vect{\Delta} \|_{2, E_t}^2 := \sum_{u=1}^{N} \sum_{i=1}^{M} (n_{t, u, i} + \gamma) (\Delta_{ui})^2,
\end{equation*}
where $n_{t, u, i} := \sum_{\tau=1}^{t-1} \mathds{1} \{(u,i) \in \mathcal{A}_\tau\}$ is the number of times item $i$ has been allocated to user $u$ before time $t$ (excluding time $t$).
Hence, the empirical 2-norm is a measure of discrepancy that weighs the entries depending on how much they have been explored. Roughly speaking, since the confidence ellipsoid constructed using the 2-norm is wider in directions that are not yet well-explored, the OFU step described in \ref{low_oful} is more inclined to make allocations that include the corresponding user-item pairs. In order to obtain low-regret guarantees for the allocations, the first step is to choose correct $\beta_t$ parameter such that $\mathcal{C}_t$ will contain the true parameter $\vect{\Theta}^*$ for all $t$ with high probability.
In order to take advantage of the structure of the arms, we let $\mathcal{N}(\mathcal{F}, \alpha, \| \cdot \|_{\text{F}})$ denote the $\alpha$-covering number of $\mathcal{F}$ in the Frobenious-norm $\| \cdot \|_{\text{F}}$, and let
\begin{equation*}
    \beta_t^*(\delta, \alpha, \gamma) := 8 \eta^2 \log \left(\mathcal{N}(\mathcal{L}, \alpha, \| \cdot \|_{\text{F}}) / \delta \right) + 2 \alpha t NM \left [ 8 B + \sqrt{8 \eta^2 \log(4NM t^2/\delta)} \right]  + 4 \gamma G^2.
\end{equation*}
Then, the following Lemma establishes that if we set $\beta_t = \beta_t^*(\delta, \alpha, \gamma)$, the resulting confidence sets have the desired properties.
\begin{lemma} For any $\delta > 0$, $\alpha > 0$, $\gamma > 0$, let $\widehat{\vect{\Theta}}_t$ be the regularized least squares estimate given in \ref{least_squares_estimate_low}. If the confidence sets are given as
\begin{equation}
    \mathcal{C}_t := \{ \vect{\Theta} \in \mathcal{L} : \|\vect{\Theta} - \widehat{\vect{\Theta}}_t \|_{2, E_t} \leq \sqrt{ \beta_t^*(\delta, \alpha, \gamma)}\},
    \label{conf_sets_low}
\end{equation}
then with probability at least $1 - 2 \delta$, $\mathcal{C}_t \ni \vect{\Theta}^*$, for all $t \in \mathbb{N}$.
\end{lemma}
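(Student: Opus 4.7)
The plan is to follow the standard OFU blueprint: extract a ``basic inequality'' from the first-order optimality of the regularized least-squares estimator, and then control the resulting stochastic cross-term uniformly over $\mathcal{L}$ using a covering argument combined with sub-Gaussian martingale concentration. Since $\vect{\Theta}^* \in \mathcal{L}$ by Assumption~\ref{low_assum} and $\widehat{\vect{\Theta}}_t$ minimizes the penalized loss in \eqref{least_squares_estimate_low}, one has $L_{2,t}(\widehat{\vect{\Theta}}_t) - L_{2,t}(\vect{\Theta}^*) \leq \gamma\|\vect{\Theta}^* - \overline{\vect{\Theta}}\|_2^2 - \gamma\|\widehat{\vect{\Theta}}_t - \overline{\vect{\Theta}}\|_2^2$. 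Writing $R_{\tau,u,i} = \theta^*_{u,i} + \eta_{\tau,u,i}$ and expanding the squared losses, the left-hand side becomes $\sum_{\tau,\,(u,i)\in\mathcal{A}_\tau}(\widehat{\theta}_{t,ui}-\theta^*_{u,i})^2 - 2 Z_t(\widehat{\vect{\Theta}}_t)$, where $Z_t(\vect{\Theta}) := \sum_{\tau,\,(u,i)\in\mathcal{A}_\tau}\eta_{\tau,u,i}(\theta_{ui} - \theta^*_{u,i})$. Expanding $\|\widehat{\vect{\Theta}}_t - \overline{\vect{\Theta}}\|_2^2$ around $\vect{\Theta}^*$, applying Cauchy-Schwarz with $\|\vect{\Theta}^* - \overline{\vect{\Theta}}\|_2 \leq G$, and absorbing the resulting cross-term by AM-GM yields the basic inequality $\|\widehat{\vect{\Theta}}_t - \vect{\Theta}^*\|_{2, E_t}^2 \leq 4|Z_t(\widehat{\vect{\Theta}}_t)| + 4\gamma G^2$, which supplies the $4\gamma G^2$ summand of $\beta_t^*$ directly.

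The next task is to bound $|Z_t(\widehat{\vect{\Theta}}_t)|$ uniformly over $\mathcal{L}$. Fix an $\alpha$-cover $\mathcal{L}_\alpha \subseteq \mathcal{L}$ of cardinality $\mathcal{N}(\mathcal{L},\alpha,\|\cdot\|_\text{F})$. For each fixed $\vect{\Theta} \in \mathcal{L}_\alpha$, the partial sums $Z_t(\vect{\Theta})$ form a martingale with respect to the observation filtration: the noises $\eta_{\tau,u,i}$ are independent $\eta$-sub-Gaussian while $\mathcal{A}_\tau$ is predictable, so the increment $\sum_{(u,i)\in\mathcal{A}_\tau}\eta_{\tau,u,i}(\theta_{ui}-\theta^*_{u,i})$ is conditionally sub-Gaussian with parameter $\eta\sqrt{\sum_{(u,i)\in\mathcal{A}_\tau}(\theta_{ui}-\theta^*_{u,i})^2}$. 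Azuma-Hoeffding then yields $|Z_t(\vect{\Theta})| \leq \eta\,\|\vect{\Theta} - \vect{\Theta}^*\|_{2, E_t}\sqrt{2\log(1/\delta')}$ with probability at least $1-\delta'$, using $\sum_{u,i} n_{t,u,i}(\theta_{ui}-\theta^*_{u,i})^2 \leq \|\vect{\Theta}-\vect{\Theta}^*\|_{2, E_t}^2$. A union bound across $\mathcal{L}_\alpha$ and across time (with a $1/(2t^2)$ budget), followed by an AM-GM step that absorbs the linear $\|\widehat{\vect{\Theta}}_t-\vect{\Theta}^*\|_{2, E_t}$ factor into the quadratic left-hand side of the basic inequality, is what delivers the $8\eta^2\log(\mathcal{N}(\mathcal{L},\alpha,\|\cdot\|_\text{F})/\delta)$ term of $\beta_t^*$.

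The remaining piece is the discretization error from replacing $\widehat{\vect{\Theta}}_t$ by the nearest cover point $\vect{\Theta}' \in \mathcal{L}_\alpha$ with $\|\widehat{\vect{\Theta}}_t - \vect{\Theta}'\|_\text{F} \leq \alpha$. Splitting $Z_t(\widehat{\vect{\Theta}}_t) = Z_t(\vect{\Theta}') + [Z_t(\widehat{\vect{\Theta}}_t) - Z_t(\vect{\Theta}')]$ and bounding the second piece termwise gives $|Z_t(\widehat{\vect{\Theta}}_t) - Z_t(\vect{\Theta}')| \leq \sup_{u,i}|\widehat{\theta}_{t,ui}-\theta'_{ui}| \cdot \sum_\tau |\mathcal{A}_\tau| \cdot \max_{\tau,u,i}|\eta_{\tau,u,i}| \leq \alpha\,t\,NM\cdot\max_{\tau,u,i}|\eta_{\tau,u,i}|$, since the Frobenius norm dominates the sup-norm and $|\mathcal{A}_\tau|\leq NM$. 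A separate sub-Gaussian tail bound, union-bounded over at most $tNM$ samples and over time (again by $1/t^2$ peeling), produces $\max_{\tau\leq t, u, i}|\eta_{\tau,u,i}| \leq \sqrt{2\eta^2\log(4NMt^2/\delta)}$ with probability at least $1-\delta$; combining this with a parallel approximation of $\|\vect{\Theta}'-\vect{\Theta}^*\|_{2, E_t}$ via the triangle inequality, where the bounded-mean hypothesis $\theta^*_{u,i}\in[0,B]$ supplies the factor of $B$, yields the middle term $2\alpha t NM\bigl[8B + \sqrt{8\eta^2\log(4NMt^2/\delta)}\bigr]$ of $\beta_t^*$. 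Intersecting the two good events (concentration on the cover, and the noise-tail bound) gives the claimed probability $1-2\delta$. The main obstacle is balancing the two error sources through the free parameter $\alpha$: it must be small enough that the $\alpha t NM$ discretization cost does not dominate, yet coarse enough that $\log\mathcal{N}(\mathcal{L},\alpha,\|\cdot\|_\text{F})$ remains controlled, and the AM-GM and sub-Gaussian constants have to line up just right so that the leading $8\eta^2$ prefactor of $\beta_t^*$ comes out exactly.
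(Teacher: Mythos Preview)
Your proposal follows the paper's overall strategy---basic inequality from optimality of the regularized least-squares estimate, covering of $\mathcal{L}$, martingale concentration on the cover, and separate control of the discretization error via a max-noise bound---and your handling of the $4\gamma G^2$ regularization term and the $\alpha t NM[\ldots]$ discretization term matches the paper's.

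The one genuine gap is the Azuma--Hoeffding step. You invoke it in the multiplicative form $|Z_t(\vect{\Theta})| \leq \eta\,\|\vect{\Theta}-\vect{\Theta}^*\|_{2,E_t}\sqrt{2\log(1/\delta')}$, but this does not follow: the conditional sub-Gaussian parameter of each increment depends on the adaptively chosen $\mathcal{A}_\tau$ and is therefore \emph{random}, whereas Azuma--Hoeffding requires deterministic variance proxies. With predictable but random parameters one only obtains the self-normalized form. The paper exploits exactly this: for each fixed $\vect{\Theta}$ it builds the exponential supermartingale with a \emph{fixed} $\lambda = 1/(4\eta^2)$ and applies a Ville-type maximal inequality to get, simultaneously for all $t$,
\[
L_{2,t}(\vect{\Theta}) - L_{2,t}(\vect{\Theta}^*) \;\geq\; \tfrac{1}{2}\,\|\vect{\Theta}-\vect{\Theta}^*\|_{2,\widetilde E_t}^2 \,-\, 4\eta^2\log(1/\delta).
\]
This already absorbs the linear cross term into the quadratic, so your AM--GM step is unnecessary (which is precisely why the leading constant comes out as $8\eta^2$ rather than the larger constant your route would produce), and the bound is time-uniform, so no union bound over $t$ is needed in the concentration part either. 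After the union bound over the $\alpha$-cover and the discretization correction, plugging in $\vect{\Theta}=\widehat{\vect{\Theta}}_t$ and using optimality yields $\beta_t^*$ with the stated constants.
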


Finally, we show that if the structured combinatorial bandits algorithm follows the OFU allocations given in \eqref{low_oful} while constructing the confidence sets according to \eqref{conf_sets_low}, it obtains the following overall regret guarantee:
\begin{theorem}\label{thm_alloc_regret}
Under Assumptions \ref{rew_assumptio} and \ref{low_assum}, for any $\delta > 0$,  $\alpha > 0$, $\gamma \geq 1$, with probability $1 - 2\delta$, the cumulative regret of Algorithm \ref{alg_low} is bounded by
\begin{equation*}
    \mathcal{R}(T, \pi) \leq \sqrt{ 8 N M \beta_T^* (\delta, \alpha, \gamma) T \log \left(1 + T / \gamma \right) }.
\end{equation*}
\end{theorem}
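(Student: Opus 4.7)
The plan is to follow the standard OFU regret decomposition adapted to the weighted empirical norm $\|\cdot\|_{2,E_t}$ induced by our combinatorial observation model. First I would condition on the good event $\mathcal{E} = \{\vect{\Theta}^* \in \mathcal{C}_t \text{ for all } t\}$, which by the preceding lemma has probability at least $1-2\delta$. Let $\widetilde{\vect{\Theta}}_t \in \mathcal{C}_t$ denote the optimistic parameter achieving the inner maximum in \eqref{low_oful}, so that $(\vect{X}_t, \widetilde{\vect{\Theta}}_t)$ jointly maximize $\langle \vect{X}, \vect{\Theta}\rangle$ over $\mathcal{X}_t \times \mathcal{C}_t$. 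On $\mathcal{E}$, because $(\vect{X}_t^*, \vect{\Theta}^*)$ is a feasible pair, optimism yields $\langle \vect{X}_t, \widetilde{\vect{\Theta}}_t\rangle \geq \langle \vect{X}_t^*, \vect{\Theta}^*\rangle$, and the per-round regret is bounded by $\langle \vect{X}_t, \widetilde{\vect{\Theta}}_t - \vect{\Theta}^*\rangle$.

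Next I would apply a weighted Cauchy--Schwarz to this inner product. Since $\vect{X}_t$ is zero--one and $\|\vect{X}_t\|_0 = |\mathcal{A}_t|$, writing $\vect{\Delta}_t = \widetilde{\vect{\Theta}}_t - \vect{\Theta}^*$ gives
\begin{equation*}
\langle \vect{X}_t, \vect{\Delta}_t\rangle = \sum_{(u,i)\in \mathcal{A}_t} \Delta_{t,ui} \leq \Bigl(\sum_{(u,i)\in \mathcal{A}_t} \tfrac{1}{n_{t,u,i}+\gamma}\Bigr)^{1/2} \Bigl(\sum_{(u,i)\in \mathcal{A}_t} (n_{t,u,i}+\gamma)\Delta_{t,ui}^2\Bigr)^{1/2}.
\end{equation*}
The second factor is at most $\|\vect{\Delta}_t\|_{2,E_t}$, and on $\mathcal{E}$ the triangle inequality combined with the definition of $\mathcal{C}_t$ gives $\|\vect{\Delta}_t\|_{2,E_t} \leq 2\sqrt{\beta_t^*(\delta,\alpha,\gamma)}$. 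Thus the instantaneous regret is bounded by $2\sqrt{\beta_t^*}\,\bigl(\sum_{(u,i)\in \mathcal{A}_t} (n_{t,u,i}+\gamma)^{-1}\bigr)^{1/2}$.

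Summing over $t$ and applying Cauchy--Schwarz in $t$ (using monotonicity of $\beta_t^*$ in $t$), I would obtain
\begin{equation*}
\mathcal{R}(T,\pi) \leq 2\sqrt{\beta_T^*}\sqrt{T}\,\Bigl(\sum_{t=1}^T \sum_{(u,i)\in \mathcal{A}_t} \tfrac{1}{n_{t,u,i}+\gamma}\Bigr)^{1/2}.
\end{equation*}
The remaining task is a diagonal instance of the elliptical potential lemma. Reindexing the sum over $(u,i)$ and writing $T_{u,i} = n_{T+1,u,i}$, the inner sum for each pair becomes $\sum_{k=0}^{T_{u,i}-1}\tfrac{1}{k+\gamma}$. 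Since $\gamma \geq 1$, each term is at most one, and the inequality $a \leq 2\log(1+a)$ for $a \in [0,1]$ gives $\sum_{k=0}^{T_{u,i}-1}\tfrac{1}{k+\gamma} \leq 2\sum_{k=0}^{T_{u,i}-1}\log\bigl(1 + \tfrac{1}{k+\gamma}\bigr) = 2\log(1 + T_{u,i}/\gamma) \leq 2\log(1+T/\gamma)$. Summing over the $NM$ user--item pairs yields $\sum_{t=1}^T\sum_{(u,i)\in \mathcal{A}_t}(n_{t,u,i}+\gamma)^{-1} \leq 2NM\log(1+T/\gamma)$, which plugged back in delivers exactly the claimed bound.

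I expect the main obstacle to be the weighted Cauchy--Schwarz step, specifically identifying the correct ``information gain'' quantity $\sum (n_{t,u,i}+\gamma)^{-1}$ that matches the norm defining the confidence set; the potential--function bound itself is routine once the algebra above reduces to a diagonal elliptic matrix. The condition $\gamma \geq 1$ in the theorem statement is exactly what makes the transition from $a$ to $2\log(1+a)$ valid entrywise, so this assumption should be flagged explicitly in the proof.
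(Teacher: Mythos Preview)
Your proposal is correct and follows essentially the same approach as the paper: condition on the confidence event, use optimism to bound the instantaneous regret by $2\sqrt{\beta_t^*}\,w_t$ with $w_t^2=\sum_{(u,i)\in\mathcal{A}_t}(n_{t,u,i}+\gamma)^{-1}$, then control the sum of squared widths by a potential argument and finish with Cauchy--Schwarz in $t$. The only stylistic difference is that the paper phrases the potential step through $\log\det\vect{A}_t$ (Lemmas on $\det\vect{A}_{t+1}$ and its trace bound), whereas you exploit directly that $\vect{A}_t$ is diagonal and telescope $\sum_{k=0}^{T_{u,i}-1}\log\bigl(1+\tfrac{1}{k+\gamma}\bigr)=\log(1+T_{u,i}/\gamma)$ arm-by-arm; both routes land on the identical bound $2NM\log(1+T/\gamma)$.
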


\subsection{Low-Rank COMbinatorial Bandits (LR-COMB)}

\label{sect_lrcb}

As common in collaborative filtering settings, the correlation between users and arms can be captured through a matrix factorization model that leads to a low-rank mean reward matrix. 
Each user $u$ (item $i$) is associated with a feature vector $\vect{p}_u$ ($\vect{q}_i$) in a shared $R$-dimensional space (typically $R \ll M, N$), and the mean reward of each user-item allocation pair is given by $\theta_{ui}^* = \vect{p}_u^\textrm{T} \vect{q}_i$. Consequently, the mean reward matrix satisfies the factorization $\vect{\Theta}^* = \vect{P} \vect{Q}^\textrm{T}$ for some $\vect{P} \in \mathbb{R}^{N \times R}$ and $\vect{Q} \in \mathbb{R}^{M \times R}$. Based on this observation and the boundedness condition given in Assumption \ref{rew_assumptio}, we can choose the structure set $\mathcal{L}$ as
\begin{equation}
    \mathcal{L} = \{ \vect{\Theta} \in \mathds{R}^{N \times M} : \text{rank}(\vect{\Theta}) \leq R, \theta_{ui} \in [0, B], \forall u,i\}.
    \label{low_l}
\end{equation}

Then, Lemma \ref{lemma_covering} in the appendix shows that the covering number for $\mathcal{L}$ given in equation \eqref{low_l} is upper bounded by $\log \mathcal{N}(\mathcal{L}, \alpha, \| \cdot \|_{\text{F}}) \leq (N + M + 1) R \log ( 9B \sqrt{NM} / \alpha )$. Therefore, the regret guarantee for a setting with low-rank mean reward matrix becomes:
\begin{theorem}[Regret of LR-COMB] \label{low_rank_regret_thm}
Under Assumption \ref{rew_assumptio} and Assumption \ref{low_assum} with $\mathcal{L}$ given in \eqref{low_l}, the Algorithm \ref{alg_low} achieves cumulative regret
\begin{equation}
    \mathcal{R}(T, \pi) = \widetilde{\mathcal{O}} \left( \sqrt{N M (N+M) RT} \right),
\end{equation}
\end{theorem}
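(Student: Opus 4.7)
The plan is to specialize Theorem \ref{thm_alloc_regret} to the low-rank structure set $\mathcal{L}$ defined in \eqref{low_l} by plugging in the covering-number bound stated just before the theorem, and then tuning the free parameters $\alpha$ and $\gamma$ so that every term in $\beta_T^\ast$ becomes at most polylogarithmic in $T$, $N$, $M$ apart from the unavoidable $(N+M)R$ factor.

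First, I would set $\gamma = 1$ (which is allowed since the theorem requires $\gamma \geq 1$), so the regularization term $4\gamma G^2$ in $\beta_T^\ast$ contributes only a constant. Next, I would pick $\alpha$ small enough that the middle term $2\alpha t N M [8B + \sqrt{8\eta^2 \log(4NMt^2/\delta)}]$ evaluated at $t=T$ is $O(1)$ up to logs; a natural choice is $\alpha = 1/(TNM)$. With this choice, $\log(9B\sqrt{NM}/\alpha) = O(\log(T N M B))$, so that
\begin{equation*}
\log \mathcal{N}(\mathcal{L}, \alpha, \|\cdot\|_\text{F}) \leq (N + M + 1) R \log(9B\sqrt{NM}/\alpha) = \widetilde{O}\bigl((N+M) R\bigr).
\end{equation*}
Combining these three contributions, I obtain $\beta_T^\ast(\delta, \alpha, 1) = \widetilde{O}\bigl((N+M)R\bigr)$, where the $\widetilde{O}$ hides logarithmic factors in $T$, $N$, $M$, $B$, $\eta$, $1/\delta$.

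Finally, I would substitute this bound into the conclusion of Theorem \ref{thm_alloc_regret}, yielding with probability $1 - 2\delta$ that
\begin{equation*}
\mathcal{R}(T,\pi) \leq \sqrt{8 N M \, \beta_T^\ast(\delta, \alpha, 1) \, T \log(1 + T)} = \widetilde{O}\!\left(\sqrt{N M (N+M) R T}\right),
\end{equation*}
which is exactly the claimed bound. The proof is essentially a plug-and-chug: there is no real obstacle beyond verifying the tuning of $\alpha$, since the heavy lifting (the martingale concentration establishing the confidence sets, the generic regret bound in Theorem \ref{thm_alloc_regret}, and the low-rank covering-number bound in Lemma \ref{lemma_covering}) has already been done. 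The only point that deserves a moment of care is checking that the choice $\alpha = 1/(TNM)$ indeed makes the second term of $\beta_T^\ast$ of order $\sqrt{\log(NMT/\delta)}$ rather than scaling polynomially with $T$ or $NM$; after that, the final square-root bound reads off immediately.
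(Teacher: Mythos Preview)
Your proposal is correct and matches the paper's own proof essentially line for line: the paper likewise invokes Theorem~\ref{thm_alloc_regret} (via Corollary~\ref{corr_regret_order}) with $\gamma=1$ and $\alpha$ of order $(NMT)^{-1}$, and then substitutes the covering-number bound from Lemma~\ref{lemma_covering}. The only cosmetic difference is that the paper additionally fixes $\delta = \mathcal{O}((NMT)^{-1})$ to absorb the failure event into the $\widetilde{\mathcal{O}}$, whereas you leave the bound as a high-probability statement with $\log(1/\delta)$ hidden in the $\widetilde{\mathcal{O}}$.
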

where $\widetilde{\mathcal{O}}$ is the big-O notation, ignoring the poly-logarithmic factors of $N, M, T, R$. 

In comparison, if we were to ignore the low-rank structure between the mean rewards obtained from user-item allocation pairs and apply the standard combinatorial bandit algorithms (e.g., CUCB \cite{chen_2013}), we would suffer \smash{$\widetilde{\mathcal{O}} ( N M \sqrt{T} )$} regret \cite{kveton_2015}. Since $R \ll M, N$ in many applications of collaborative filtering, our algorithm significantly outperforms this naive approach. As common in the literature of combinatorial bandits, one possible approach to improve upon our theoretical analysis might be by assuming a problem setting where at most $K$ of the arms can be played in each round. However, our current analysis techniques do not allow us to incorporate and leverage such an assumption together with the low-rank structure of collaborative filtering. 

\textbf{Implementation via Matrix Factorization: }
\label{section_mf}
In order to efficiently solve optimization problems  \eqref{low_oful} and \eqref{least_squares_estimate_low} in large scales, we take advantage of the matrix factorization model. As a result, we factorize $\vect{\Theta} = \vect{P} \vect{Q}^\textrm{T}$ where $\vect{P} \in \mathbb{R}^{N \times R}$ and $\vect{Q} \in \mathbb{R}^{M \times R}$, and solve the problems by optimizing over $\vect{P}$ and $\vect{Q}$ rather than directly optimizing over $\vect{\Theta}$. Even if the problem \eqref{least_squares_estimate_low} is not convex in the joint variable ($\vect{P}$, $\vect{Q}$), it is convex in $\vect{P}$ for fixed $\vect{Q}$ and it is convex in $\vect{Q}$ for fixed $\vect{P}$. Therefore, an alternating minimization algorithm becomes a feasible choice to find a reasonable solution for the least squares problem. Similarly, an alternating minimization approach is also useful to solve the problem \eqref{low_oful}. We can fix an allocation $\vect{X}$ and minimize over $\vect{P}$ and $\vect{Q}$. Then, for fixed $\vect{P}$ and $\vect{Q}$, the allocation $\vect{X}$ is determined through the dual decomposition mechanism described in the section \ref{sect_opt_allocations}. We call the resulting algorithm LR-COMB with Matrix Factorization and present it as Algorithm \ref{alg_mf} in the Appendix. 


\section{Experiments}
\label{sect_exp}

In this section, we demonstrate the efficiency of our proposed algorithm by conducting an experimental study over both synthetic and real-world datasets. The goal of our experimental evaluation is twofold: (i) evaluate our algorithm for making online recommendations and allocations in various market settings and (ii) understand the qualitative performance and intuition of our algorithm.

\textbf{Baseline algorithms:} We demonstrate the performance of our method by comparing it with baseline algorithms. To the best of our knowledge, there are no current approaches specifically designed to make interactive recommendations and allocations considering the capacity constraints. Therefore, based on currently available algorithms, we construct our baseline with methods that are designed for similar goals:
\begin{enumerate}[nosep, labelindent= 0pt, align= left, labelsep=0.4em, leftmargin=*]
    \item \textbf{ACF:} (Allocations with Collaborative Filtering) It solves for the least squares problem \eqref{least_squares_estimate_low} to estimate the mean rewards obtained from user-item allocation pairs. Then, makes the best allocation with respect to the estimated parameters at each round.
    \item \textbf{CUCB:} It runs the Combinatorial-UCB algorithm \cite{chen_2013} to decide on allocations without assuming any low-rank structure between the users and items. It views the user-item allocation pairs as arms that have no correlation in between. In every round, it pulls some subset of the arms according to the capacity and demand constraints.
    \item \textbf{ICF:} It runs the Interactive Collaborative Filtering algorithm with linear UCB \cite{zhao_2013} without considering the capacity constraints. For each user, the algorithm recommend the items that it estimates the users will obtain the most reward. Since this method does not consider the capacities, the recommendations do not necessarily satisfy the capacity constraints. Therefore, if an item is recommended to more users than its capacity, we assume that only a randomly chosen subset of the assigned users are able to get the item. The users that are not able to get the item do not send any reward feedback to the system.
    \item \textbf{ICF2:} It is the same as ICF method described above, except that the algorithm observes a zero reward ($R_{t, u, i} = 0$) for all the user-item allocations that were not successfully achieved. As a result of the low rewards obtained from allocations that lead to capacity violations, the algorithm learns to avoid violating the capacities.
\end{enumerate}

\textbf{Experimental setup and datasets:} We use a synthetic dataset and two real world datasets to evaluate our approach. For the synthetic data, we generate an (approximately) low-rank random matrix $\vect{\Theta}^* \in \mathbb{R}^{N \times M}$ with entries from $[0, B]$. For the real-world data, we consider the following publicly available data sets: Movielens 100k \cite{harper_2015} which includes ratings from 943 users on 1682 movies and the RC (Restaurant and Consumer) dataset \cite{blanca_2011} which includes ratings from 138 users on 130 restaurants. As the information of capacities are not given in the considered data, and to the best of our knowledge to any of the publicly available recommendation datasets, we consider instantiating random capacities for all items as described shortly. We consider settings with static and time-varying capacities/demands. For the static case, we assume that all users request one item at all iterations, and the capacity of each each remains unchanged with time. In the dynamic setting, we allow both the demands $\vect{d}_t$ and capacities $\vect{c}_t$ vary with time $t$. At each allocation round, we consider that each entry of $\vect{d}_t$ is independently sampled from a fixed probability distribution over $\{0, 1\}$. Therefore, while active users (with demand 1) are allocated at most one item, the inactive users (with demand 0) do not get allocated any item. Similarly, each entry of $\vect{c}_t$ is independently sampled from a uniform distribution over  $\{0, 1, \dots, C_{\text{max}}\}$. At each round $t$, if user $u$ is allocated the item $i$, the system observes a reward with normal distribution $\mathcal{N}(\theta_{ui}^*, \eta^2)$. 
 
\textbf{Results:} We summarize our results in Figure \ref{fig_regrets}. Further experimental details and results are left to Appendix \ref{sect_additional_exp}. The observations can be summed up into following points: (1) LR-COMB (our proposed approach) is able to achieve lower regret than all other baseline methods in all experimental settings. (2) Even though the ACF method performs slightly better than LR-COMB in the initial rounds, it often gets stuck at high-regret allocations, and hence cannot achieve \emph{no-regret}. It suffers from large regrets in the long-term because it tries to directly exploit the information it acquired so far without making any deliberate explorations. Therefore, we observe the significance of employing a bandit-based approach in achieving a no-regret algorithm. (3) Since CUCB does not leverage the low-rank structure of the parameters, it needs to sample and learn about each user-item allocation pair separately. Hence, it takes much longer for it to learn the optimum allocations. (4) Since ICF does not consider the capacities while making the allocations, it ends up incurring very large regrets. Even if it is able to identify the high-reward allocation pairs via collaborative filtering, the recommendations exceed the respective capacities of the items and we cannot obtain high rewards. (5) One possible ad-hoc approach to mitigate the issues with ICF is to use ICF2 which can indirectly capture the effects of the capacities since it receives zero rewards when the items are not successfully allocated. Nevertheless, ICF2 still does not directly use the knowledge of the capacities and hence it is still quite suboptimal. Even though it is able to show decent performance in static settings, its performance significantly degrades when the capacities dynamically change with time. 

\vspace{-6 pt}
\begin{figure}[ht]
\center
\includegraphics[width=\textwidth]{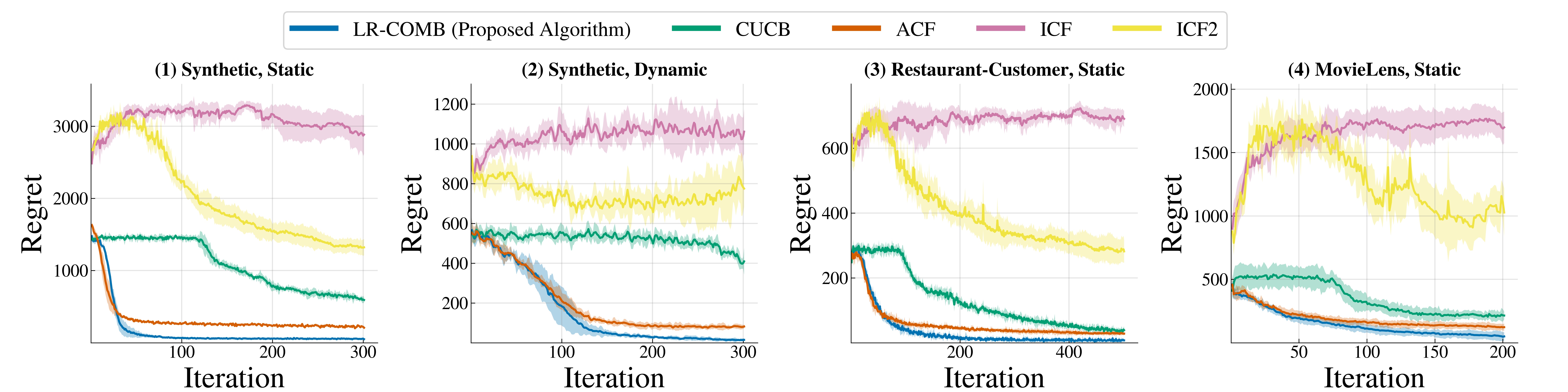}
\caption{Instantaneous regret incurred in each round in different experimental settings. From left to right: (1) synthetic data in a static setting with $N = 800$, $M = 400$, $R = 20$, (2) synthetic data in a dynamic setting with $N = 1000$, $M = 150$, $R = 20$, probability of user activity $0.2$, (3) Restaurant-Customer data in a static setting, (4) Movielens 100k data in a static setting. In all settings, the experiments are run on $10$ problem instances and means are reported together with error regions that indicate one standard deviation of uncertainty.}
\label{fig_regrets}
\end{figure}
\section{Conclusion and future directions}


In this paper, we have studied the setting of interactive
recommendations that achieve socially optimal allocations under capacity constraints. We have formulated the problem as a low-rank combinatorial multi-armed bandit and proposed an algorithm that enjoys low regret. Building on the ideas founded in this work, we aim to pursue joint recommendation and pricing mechanisms that will achieve optimal allocations in the general problem setting with users actively reacting to the recommendations based on the prices determined by the provider. We believe that this is a practically-relevant question to be resolved because it allows for design of many interesting real-world recommendation applications for settings with associated markets.

\clearpage
\bibliographystyle{IEEEtran} 
\bibliography{Bib_Database}

\begin{thebibliography}{10}
\providecommand{\url}[1]{#1}
\csname url@samestyle\endcsname
\providecommand{\newblock}{\relax}
\providecommand{\bibinfo}[2]{#2}
\providecommand{\BIBentrySTDinterwordspacing}{\spaceskip=0pt\relax}
\providecommand{\BIBentryALTinterwordstretchfactor}{4}
\providecommand{\BIBentryALTinterwordspacing}{\spaceskip=\fontdimen2\font plus
\BIBentryALTinterwordstretchfactor\fontdimen3\font minus
  \fontdimen4\font\relax}
\providecommand{\BIBforeignlanguage}[2]{{%
\expandafter\ifx\csname l@#1\endcsname\relax
\typeout{** WARNING: IEEEtran.bst: No hyphenation pattern has been}%
\typeout{** loaded for the language `#1'. Using the pattern for}%
\typeout{** the default language instead.}%
\else
\language=\csname l@#1\endcsname
\fi
#2}}
\providecommand{\BIBdecl}{\relax}
\BIBdecl

\bibitem{schafer_2007}
J.~B. Schafer, D.~Frankowski, J.~Herlocker, and S.~Sen, ``Collaborative
  filtering recommender systems,'' \emph{The Adaptive Web}, vol. 4321, pp.
  291--324, 2007.

\bibitem{bennett_2007}
J.~Bennett and S.~Lanning, ``The netflix prize,'' \emph{Proc. KDD Cup
  Workshop}, pp. 3--6, Aug. 2007.

\bibitem{koren_2009}
Y.~Koren, R.~Bell, and C.~Volinsky, ``Matrix factorization techniques for
  recommender systems,'' \emph{Computer}, vol.~42, no.~8, pp. 30--37, Aug.
  2009.

\bibitem{sarwar_01}
B.~Sarwar, G.~Karypis, J.~Konstan, and J.~Riedl, ``Item-based collaborative
  filtering recommendation algorithms,'' \emph{Proc. 10th International
  Conference on World Wide Web}, pp. 285--295, May 2001.

\bibitem{kawale_2015}
J.~Kawale, H.~H. Bui, B.~Kveton, L.~Tran-Thanh, and S.~Chawla, ``Efficient
  thompson sampling for online matrix-factorization recommendation,''
  \emph{Advances in Neural Information Processing Systems}, vol.~28, pp.
  1297--1305, Dec. 2015.

\bibitem{zhao_2013}
X.~Zhao, W.~Zhang, and J.~Wang, ``Interactive collaborative filtering,''
  \emph{Proceedings of the 22nd ACM International Conference on Information \&
  Knowledge Management}, pp. 1411--1420, Oct. 2013.

\bibitem{smith_1991}
V.~L. Smith, ``Experimental auction markets and the walrasian hypothesis,''
  \emph{Journal of Political Economy}, vol.~73, no.~4, pp. 64--70, 1965.

\bibitem{kelly_97}
F.~Kelly, ``Charging and rate control for elastic traffic,'' \emph{European
  Transactions on Telecommunications}, vol.~8, no.~1, pp. 33--37, Jan. 1997.

\bibitem{barraza_2017}
A.~Barraza-Urbina, ``The exploration-exploitation trade-off in interactive
  recommender systems,'' \emph{Proceedings of the Eleventh ACM Conference on
  Recommender Systems}, pp. 431--435, Aug. 2017.

\bibitem{wang_2019}
Q.~Wang, C.~Zeng, W.~Zhou, T.~Li, S.~S. Iyengar, L.~Shwartz, and G.~Y.
  Grabarnik, ``Online interactive collaborative filtering using multi-armed
  bandit with dependent arms,'' \emph{IEEE Transactions on Knowledge and Data
  Engineering}, vol.~31, no.~8, pp. 1569--1580, Aug. 2019.

\bibitem{dani_2008}
V.~Dani, T.~Hayes, and S.~M. Kakade, ``Stochastic linear optimization under
  bandit feedback,'' \emph{21st Annual Conference on Learning Theory - COLT
  2008, Helsinki, Finland}, pp. 355--366, Jul. 2008.

\bibitem{abbasi_2011}
Y.~Abbasi-Yadkori, ``Improved algorithms for linear stochastic bandits,''
  \emph{Advances in Neural Information Processing Systems}, pp. 2312--2320,
  Dec. 2011.

\bibitem{chen_2013}
W.~Chen, Y.~Wang, and Y.~Yuan, ``Combinatorial multi-armed bandit: General
  framework and applications,'' \emph{Proceedings of the 30th International
  Conference on Machine Learning}, vol.~28, no.~1, pp. 151--159, Jun. 2013.

\bibitem{kveton_2015}
B.~Kveton, Z.~Wen, A.~Ashkan, and C.~Szepesvari, ``{Tight Regret Bounds for
  Stochastic Combinatorial Semi-Bandits},'' \emph{Proceedings of the Eighteenth
  International Conference on Artificial Intelligence and Statistics}, vol.~38,
  pp. 535--543, May 2015.

\bibitem{audibert_2011}
J.-Y. Audibert, S.~Bubeck, and G.~Lugosi, ``Minimax policies for combinatorial
  prediction games,'' \emph{Proceedings of the 24th Annual Conference on
  Learning Theory}, vol.~19, pp. 107--132, Jun. 2011.

\bibitem{johnson_2016}
N.~Johnson, V.~Sivakumar, and A.~Banerjee, ``Structured stochastic linear
  bandits,'' \emph{arXiv preprint arXiv:1606.05693}, 2016.

\bibitem{combes_2017}
R.~Combes, S.~Magureanu, and A.~Proutiere, ``Minimal exploration in structured
  stochastic bandits,'' \emph{Advances in Neural Information Processing
  Systems}, vol.~30, pp. 1761--1769, Dec. 2017.

\bibitem{lu_2021}
Y.~Lu, A.~Meisami, and A.~Tewari, ``Low-rank generalized linear bandit
  problems,'' \emph{Proceedings of the 24th International Conference on
  Artificial Intelligence and Statistics}, vol. 130, pp. 460--468, Apr. 2021.

\bibitem{christakopoulou_2017}
K.~Christakopoulou, J.~Kawale, and A.~Banerjee, ``Recommendation with capacity
  constraints,'' \emph{Proceedings of the 2017 ACM on Conference on Information
  and Knowledge Management}, pp. 1439--1448, Nov. 2017.

\bibitem{makhijani_2019}
R.~Makhijani, S.~Chakrabarti, D.~Struble, and Y.~Liu, ``Lore: A large-scale
  offer recommendation engine with eligibility and capacity constraints,''
  \emph{Proceedings of the 13th ACM Conference on Recommender Systems}, pp.
  160--168, Sep. 2019.

\bibitem{liu_2020}
L.~Liu, H.~Mania, and M.~I. Jordan, ``Competing bandits in matching markets,''
  \emph{Proceedings of the Twenty-Third Conference on Artificial Intelligence
  and Statistics (AISTATS)}, pp. 1618--1628, Aug. 2020.

\bibitem{palomar_2006}
D.~Palomar and M.~Chiang, ``A tutorial on decomposition methods for network
  utility maximization,'' \emph{IEEE Journal on Selected Areas in
  Communications}, vol.~24, no.~8, pp. 1439--1451, Jul. 2006.

\bibitem{harper_2015}
F.~M. Harper and J.~A. Konstan, ``The movielens datasets: History and
  context,'' \emph{ACM Trans. Interact. Intell. Syst.}, vol.~5, no.~4, Dec.
  2015.

\bibitem{blanca_2011}
V.~Blanca, G.~Gabriel, and P.~Rafael, ``Effects of relevant contextual features
  in the performance of a restaurant recommender system,'' \emph{3rd Workshop
  on Context-Aware Recsys}, Oct. 2011.

\bibitem{bertsekas_1991}
D.~P. Bertsekas, \emph{Linear network optimization: algorithms and
  codes}.\hskip 1em plus 0.5em minus 0.4em\relax Mit Press, 1991.

\bibitem{heller_1957}
I.~Heller and C.~B. Tompkins, ``An extension of a theorem of {Dantzig’s},''
  \emph{Linear Inequalities and Related Systems, Annals of Mathematics
  Studies}, vol.~38, pp. 247--254, 1957.

\bibitem{candes_2011}
E.~J. Candes and Y.~Plan, ``Tight oracle inequalities for low-rank matrix
  recovery from a minimal number of noisy random measurements,'' \emph{IEEE
  Transactions on Information Theory}, vol.~57, no.~4, pp. 2342--2359, Mar.
  2011.

\end{thebibliography}


\clearpage
\appendix

\section{Implementation via Matrix Factorization}

The following algorithm describes an efficient implementation of our Low-Rank Combinatorial Bandit algorithm using matrix factorization. Note that converged $\widehat{\vect{\Theta}}_t$ and $\vect{X}_t$ are not necessarily the optimum solution for problems  \eqref{low_oful} and \eqref{least_squares_estimate_low} since the problems are not convex. However, the alternating optimization algorithm guarantees that, in each iteration, the objective value only decreases for \eqref{least_squares_estimate_low}. Similarly, the objective value for \eqref{low_oful} increases in each iteration of the alternating optimization.

\begin{algorithm}
\caption{LR-COMB with Matrix Factorization}
\begin{algorithmic}
\Require horizon $T$, initial estimate $\overline{\vect{\Theta}} \in \mathbb{R}^d$ with $\|\overline{\vect{\Theta}} - \vect{\Theta}^* \|_\text{F} \leq G$, parameters $\delta, \alpha > 0$, $\gamma \geq 1$.
\For{$t = 1, 2, \dots, T$}
\State randomly initialize $\widehat{\vect{P}}$ and $\widehat{\vect{Q}}$
\While{convergence criterion not satisfied}
    \State $\widehat{\vect{P}} \gets \argmin_{\vect{P} \in \mathbb{R}^{N \times R}} \left \{ \sum_{\tau=1}^{t-1} \sum_{(u, i) \in \mathcal{A}_\tau} (\vect{p}_u^\textrm{T} \vect{q}_i  - R_{\tau, u, i})^2 + \gamma \|\vect{P} \vect{Q}^\textrm{T} - \overline{\vect{\Theta}} \|_\text{F}^2 \right \}$
    \State $\widehat{\vect{Q}} \gets \argmin_{\vect{Q} \in \mathbb{R}^{M \times R}} \left \{ \sum_{\tau=1}^{t-1} \sum_{(u, i) \in \mathcal{A}_\tau} (\vect{p}_u^\textrm{T} \vect{q}_i  - R_{\tau, u, i})^2 + \gamma \|\vect{P} \vect{Q}^\textrm{T} - \overline{\vect{\Theta}} \|_\text{F}^2 \right \}$
\EndWhile
\State $\widehat{\vect{\Theta}}_t \gets \widehat{\vect{P}} \widehat{\vect{Q}}^\textrm{T}$
\State $\vect{X} \gets \mathbb{1}_{N \times M}$, $\vect{P} \gets \widehat{\vect{P}}$, $\vect{Q} \gets \widehat{\vect{Q}}$
\While{convergence criterion not satisfied}
\While{convergence criterion not satisfied}
\State $\vect{P} \gets \argmax_{\vect{P} \in \mathbb{R}^{N \times R}} \langle \vect{X}, \vect{P} \vect{Q}^\textrm{T} \rangle$ s.t. $\|\vect{P} \vect{Q}^\textrm{T} - \widehat{\vect{\Theta}}_t \|_{2, E_t} \leq \sqrt{ \beta_t^*(\delta, \alpha, \gamma)}$
\State $\vect{Q} \gets \argmax_{\vect{Q} \in \mathbb{R}^{M \times R}} \langle \vect{X}, \vect{P} \vect{Q}^\textrm{T} \rangle$ s.t. $\|\vect{P} \vect{Q}^\textrm{T} - \widehat{\vect{\Theta}}_t \|_{2, E_t} \leq \sqrt{ \beta_t^*(\delta, \alpha, \gamma)}$
\EndWhile

\State $\vect{\Theta} \gets \vect{P} \vect{Q}^\textrm{T}$

\While{convergence criterion not satisfied}
    \For{$u \in [N]$} 
        \State $\vect{x}_u\gets \argmax_{\vect{x}} \left \{ \vect{x}^\textrm{T} (\vect{\theta}_u - \vect{\lambda}) \middle | \vect{x} \in \{0, 1\}^{M}, \vect{x}^\textrm{T} \mathbb{1}_M \leq d_{t, u} \right \} $
    \EndFor 
    \State $\vect{\lambda} \gets \left[ \vect{\lambda} - \alpha \left( \vect{c}_t - \sum_{u = 1}^{N} \vect{x}_u \right) \right]^{+} $
\EndWhile
\State $\vect{X} \gets [\vect{x}_1, \vect{x}_2, \dots, \vect{x}_N]^\text{T}$
\EndWhile
\State $\vect{X}_t \gets \vect{X}$
\State Play the arms $\mathcal{A}_t$ according to $\vect{X}_t$ 
\State Observe $R_{t, u, i}$ for all $(u, i) \in \mathcal{A}_{t}$
\EndFor
\end{algorithmic}
\label{alg_mf}
\end{algorithm}

\section{Relaxation of Integer Program}
\label{appendix_num}

A traditional linear integer program (IP) in matrix form is formulated as
\begin{align}
\begin{split}
    \max_{\mathbf{x}} &\; \mathbf{t}^\mathrm{T} \mathbf{x}\\
    \text{s.t.} &\; \mathbf{A} \mathbf{x} \leq \mathbf{b}\\
    &\; \mathbf{x} \in \mathds{Z}_+^d\\
    \label{int_prog}
\end{split}
\end{align}

This problem can be relaxed to a linear program by dropping the integral constraints (setting $\mathbf{x} \in \mathds{R}_+^d$). The integrality gap of an integer program is defined as the difference between the optimal values of the integer program in (IP) and its relaxed linear program. When the vector $\mathbf{b}$ is integral and the matrix $\mathbf{A}$ is totally unimodular (all entries are 1, 0, or -1 and every square sub-minor has determinant of +1 or -1) then the integrality gap is zero and the solution of the relaxed linear program is integer valued \cite{bertsekas_1991}. 

Hence, we can solve \eqref{int_prog} by instead solving the following relaxed linear program:
\begin{align}
\begin{split}
    \max_{\mathbf{x}} &\; \mathbf{t}^\mathrm{T} \mathbf{x}\\
    \text{s.t.} &\; \mathbf{A} \mathbf{x} \leq \mathbf{b}\\
    &\; \mathbf{x} \in \mathds{R}_+^d\\
\end{split}
\end{align}

For a matrix $\vect{A}$ whose rows can be partitioned into two disjoint sets $\mathcal{C}$  and $\mathcal{D}$, the following four conditions together are sufficient for $\vect{A}$ to be totally unimodular \cite{heller_1957}:
\begin{enumerate}[nosep, labelindent= 2pt, align= left, labelsep=0.4em, leftmargin=*]
    \item Every entry in $\vect{A}$ is $0$, $+1$, or $-1$.
    \item Every column of $\vect{A}$ contains at most two non-zero entries.
    \item If two non-zero entries in a column of $\vect{A}$ have the same sign, then the row of one is in $\mathcal{C}$  , and the other in $\mathcal{D}$ .
    \item If two non-zero entries in a column of $\vect{A}$ have opposite signs, then the rows of both are in $\mathcal{C}$  , or both in $\mathcal{D}$ .
\end{enumerate}

In the setting of resource allocation, we can write problem \eqref{integer_num} equivalently as problem \eqref{int_prog} where $\mathbf{x} = \text{vec}(\mathbf{X})$, $\mathbf{t} = \text{vec}(\mathbf{\Theta^*})$, $\mathbf{A}$ and $\mathbf{b}$ are given as
\begin{equation}
    \mathbf{A} = 
    \begin{bmatrix}
    \mathds{1}_N^\mathrm{T} \otimes \mathbf{I}_M\\
    \mathbf{I}_N \otimes \mathds{1}_M^\mathrm{T}\\
    \end{bmatrix}
    \qquad 
    \mathbf{b} = 
    \begin{bmatrix}
    \vect{c}_t\\
    \vect{d}_t\\
    \end{bmatrix}
\label{def_Ab}
\end{equation}

For matrix $\vect{A}$ given in \eqref{def_Ab}, we can set $\mathcal{C}$ to be the set of first $M$ rows corresponding to the capacity constraints, and $\mathcal{D}$ to be the set of remaining rows corresponding to the demand constraints. Since this $\vect{A}$ matrix satisfies the conditions of the proposition for sets $\mathcal{C}$  and $\mathcal{D}$, we obtain that $\mathbf{A}$ is totally unimodular. Finally, since the vector $\mathbf{b}$ is integral and the matrix $\mathbf{A}$ is totally unimodular, the integrality gap is zero.

\section{Structured combinatorial multi-armed bandits}
\label{SCMAB}

For the ease of exposition, we present our proofs in the following setting with $d$ structured arms.

We consider a CMAB (Combinatorial Multi Armed Bandit) problem setting with $d$ arms associated with a set of independent random rewards $R_{t, i}$ for $i \in [d]$ and $t \in \mathbb{N}$. Assume that the set of rewards $\{R_{t, i} | t \in \mathbb{N}\}$ associated with arm $i$ are $\eta$-sub-Gaussian with mean $\theta_i^* \in [0, B]$. Let $\vect{\theta}^* = (\theta_1, \theta_2, \dots, \theta_d)$ be the vector of expectations of all arms and assume we know that it belongs to a structure set $\mathcal{F} \subseteq \mathds{R}^{d}$. We further assume that we have access to an initial rough estimate $\overline{\vect{\theta}}$ that satisfies $\|\overline{\vect{\theta}} - \vect{\theta}^* \|_2 \leq G$ (one can also set $\overline{\vect{\theta}} = \vect{0}$ and let $G$ be some number satisfying $\|\vect{\theta}^* \|_2 \leq G$.).

At each round $t$, a subset of arms $\mathcal{A}_t \subseteq [d]$ are played and the individual outcomes of arms in $\mathcal{A}_t$ are revealed.
The total reward at round $t$ is the sum of the rewards obtained from all arms in $\mathcal{A}_t$. 
Letting $\vect{e}_{i} \in \mathbb{R}^{d}$ denote the zero-one vector with a single one at the $i$-th entry, 
define the action vector for time $t$ as $\vect{x}_{t}  = \sum_{i \in \mathcal{A}_t} \vect{e}_{i}$. 
Consequently, $\vect{x}_{t}$ becomes a zero-one vector with ones at entries $\mathcal{A}_t$ and zeros everywhere else. 
The problem contains a constraint that any valid action $\vect{x}_{t}$ must belong to a (time-varying) constraint set $\mathcal{X}_{t} \subseteq \{0, 1\}^{d}$. 

The optimum allocation $\vect{x}^*_t$ at time $t$ is given by $\vect{x}^*_t \in \argmax_{\vect{x} \in \mathcal{X}_t} \; \langle \vect{x}, \vect{\theta}^* \rangle$. We denote by $H_t$ the history $\{\vect{x}_{\tau}, (R_{\tau, i})_{i \in \mathcal{A}_{\tau}}\}_{\tau = 1}^{t-1}$ of observations available when choosing the next action $\vect{x}_{t}$. Let $\pi$ be a policy which takes the action $\vect{x}_{t}$ using the history $H_t$. Then, the $T$ period regret of a policy $\pi$ is the random variable $ \mathcal{R}(T, \pi) =  \sum_{t = 1}^{T} \left[ \langle \vect{x}^*_t, \vect{\theta}^*\rangle - \langle \vect{x}_{t}, \vect{\theta}^* \rangle \right] $.

\subsection{OFU for Structured Combinatorial Bandits}

We present our algorithm in the setting where the mean reward vector $\vect{\theta}^*$ belongs to a structure set $\mathcal{F} \subseteq \mathds{R}^{d}$. Then, we analyze the algorithm to establish performance guarantees.

\begin{algorithm}
\caption{OFU for Structured Combinatorial Bandits}\label{alg:cap}
\begin{algorithmic}
\Require horizon $T$, initial estimate $\overline{\vect{\theta}} \in \mathbb{R}^d$ with $\|\overline{\vect{\theta}} - \vect{\theta}^* \|_2 \leq G$, parameters $\delta, \alpha > 0$, $\gamma \geq 1$.
\For{$t = 1, 2, \dots, T$}
\State Find the least squares estimate $ \widehat{\vect{\theta}}_t = \argmin_{\vect{\theta} \in \mathcal{F}} \left \{ L_{2,t}(\vect{\theta}) + \gamma \|\vect{\theta} - \overline{\vect{\theta}} \|_2^2 \right \}$
\State Construct the confidence set $\mathcal{C}_t = \{ \vect{\theta} \in \mathcal{F} : \|\vect{\theta} - \widehat{\vect{\theta}}_t \|_{2, E_t} \leq \sqrt{ \beta_t^*(\delta, \alpha, \gamma)}\}$
\State Compute the action vector $\vect{x}_t = \argmax_{\vect{x} \in \mathcal{X}_t} \max_{\vect{\theta} \in \mathcal{C}_t} \; \langle \vect{x}, \vect{\theta} \rangle$
\State Play the arms $\mathcal{A}_t$ according to $\vect{x}_t$ 
\State Observe $(R_{\tau, i})_{i \in \mathcal{A}_{\tau}}$
\EndFor
\end{algorithmic}
\label{alg_comb}
\end{algorithm}

The algorithm maintains a confidence set $\mathcal{C}_t$ that contains the true parameter $\vect{\theta}^*$ with high probability and chooses the action $\vect{x}_t$ according to
\begin{equation}
    (\vect{x}_t, \widetilde{\vect{\theta}}_t) = \argmax_{(\vect{x}, \vect{\theta}) \in \mathcal{X}_t \times \mathcal{C}_t} \; \langle \vect{x}, \vect{\theta} \rangle
\end{equation}

The confidence sets that we construct are centered around the regualarized least square estimates defined next. We first let the cumulative squared prediciton error at time $t$ be
\begin{equation}
    L_{2,t}(\vect{\theta}) = \sum_{\tau=1}^{t-1} \sum_{i \in \mathcal{A}_\tau} (\theta_i  - R_{\tau, i})^2
\end{equation}
and define the regularized least squares estimate at time $t$ as
\begin{equation}
    \widehat{\vect{\theta}}_t = \argmin_{\vect{\theta} \in \mathcal{F}} \left \{ L_{2,t}(\vect{\theta}) + \gamma \|\vect{\theta} - \overline{\vect{\theta}} \|_2^2 \right \}
\end{equation}

Then, the confidence sets take the form $\mathcal{C}_t := \{ \vect{\theta} \in \mathcal{F} : \|\vect{\theta} - \widehat{\vect{\theta}}_t \|_{2, E_t} \leq \sqrt{\beta_t}\}$ where $\beta_t$ is an appropriately chosen confidence parameter, and the regularized empirical 2-norm $\| \cdot \|_{2, E_t}$ is defined by 
\begin{equation*}
    \| \vect{\Delta} \|_{2, E_t}^2 := \sum_{\tau=1}^{t-1} \sum_{i \in \mathcal{A}_\tau} \langle \vect{\Delta}, \vect{e}_{i} \rangle^2 + \gamma \| \vect{\Delta} \|_{2}^2 =  \sum_{i=1}^{d} (n_{t, i} + \gamma) (\Delta_{i})^2
\end{equation*}
where $n_{t, i} := \sum_{\tau=1}^{t-1} \mathds{1} \{i \in \mathcal{A}_t\}$ denotes the number of times arm $i$ has been pulled before time $t$ (excluding time $t$). For future reference, we also define the (non-regularized) empirical 2-norm $\| \cdot \|_{2, \widetilde{E}_t}$ by 
\begin{equation*}
    \| \vect{\Delta} \|^2_{2, \widetilde{E}_t} := \sum_{\tau=1}^{t-1} \sum_{i \in \mathcal{A}_\tau} \langle \vect{\Delta}, \vect{e}_{i} \rangle^2 =  \sum_{i=1}^{d} (n_{t, i}) (\Delta_{i})^2
\end{equation*}

Note that the regularized empirical 2-norm is related to (non-regularized) empirical 2-norm as
\begin{equation*}
    \| \vect{\Delta} \|^2_{2, E_t^2} = \| \vect{\Delta} \|^2_{2, \widetilde{E}_t} + \gamma \| \vect{\Delta} \|^2_{2}
\end{equation*}

By Lemma \ref{lemma_lower_bound}, we establish that for any $\vect{\theta} \in \mathbb{R}^{d}$, 
\begin{equation}
    \mathds{P} \left( L_{2,t}(\vect{\theta}) \geq L_{2,t}(\vect{\theta}^*) + \frac{1}{2} \|\vect{\theta}^* - \vect{\theta} \|^2_{2, \widetilde{E}_t} - 4 \eta^2 \log(1/\delta) \quad ,\forall t \in \mathbb{N} \right) \geq 1 - \delta
\end{equation}

Hence, with high probability, $\vect{\theta}$ can achieve lower squared error than $\vect{\theta}^*$ only if the empirical deviation $\|\vect{\theta}^* - \vect{\theta} \|^2_{2, \widetilde{E}_t}$ is less than $8 \eta^2 \log(1/\delta)$. 

In order to make this property hold uniformly for all $\vect{\theta}$ in a subset $\mathcal{C}_t$ of $\mathcal{F}$, we discretize $\mathcal{C}_t$ at some discretization scale $\alpha$ and apply a union bound for this finite discretization set. Let $\mathcal{N}(\mathcal{F}, \alpha, \| \cdot \|_{2})$ denote the $\alpha$-covering number of $\mathcal{F}$ in the 2-norm $\| \cdot \|_{2}$, and let 
\begin{equation}
    \beta_t^*(\delta, \alpha, \gamma) := 8 \eta^2 \log \left(\mathcal{N}(\mathcal{F}, \alpha, \| \cdot \|_{2}) / \delta \right) + 2 \alpha t \sqrt{d} \left [ 8 B + \sqrt{8 \eta^2 \log(4d t^2/\delta)} \right]  + 4 \gamma G^2
\end{equation}

Then, Lemma \ref{lemma_confidence} shows that if we set $\beta_t = \beta_t^*(\delta, \alpha)$, the confidence sets $\mathcal{C}_t$ contain the true parameter $\vect{\theta}^*$ for all $t$ with high probability. Following the construction of the confidence sets, the next step is to obtain the overall regret guarantee. As given in Corollary \ref{corr_regret_order}, we find that the regret of Algorithm \ref{alg_comb} satisfies
\begin{equation}
    \mathcal{R}(T, \pi) = \widetilde{\mathcal{O}} \left( \sqrt{\eta^2 d \log \left(\mathcal{N}(\mathcal{F}, T^{-1}, \| \cdot \|_{2})\right) T} \right)
\end{equation}

\begin{remark}
In the literature of linear bandits, the typical observation model is such that each action $\vect{x}_{t}$ results in a single reward feedback with mean $\langle \vect{x}_{t}, \vect{\theta}^* \rangle$ and sub-Gaussianity parameter $d \eta^2$ (since each arm has a $\eta^2$-sub-Gaussian reward). Therefore, that observation model can only obtain $\widetilde{\mathcal{O}} ( \sqrt{\eta^2 d^2 \log \left(\mathcal{N}(\mathcal{F}, T^{-1}, \| \cdot \|_{2})\right) T} )$ regret guarantee. However, in our setting, the observations are sets of independent rewards $\{R_{t, i}\}_{i \in \mathcal{A}_t}$ where each element is $\eta^2$-sub-Gaussian. Due to this richer nature of the observation model, we are able to achieve lower regret guarantees than only observing a single cumulative reward.
\end{remark}

\section{Proofs for Confidence Sets}
\label{pf_conf_sets}

\subsection{Martingale Exponential Inequalities}

We start with preliminary results on martingale exponential inequalities.

Consider a sequence of random variables $(Z_n)_{n \in \mathds{N}}$ adapted to the filtration $(\mathcal{H}_n)_{n \in \mathds{N}}$. Assume $\mathds{E}[\exp (\lambda Z_i)]$ is finite for all $\lambda$. Define the conditional mean $\mu_i = \mathds{E}[Z_i | \mathcal{H}_{i-1}]$, and define the conditional cumulant generating function of the centered random variable $[Z_i - \mu_i]$ by $\psi_i(\lambda) := \log \mathds{E}[ \exp (\lambda [Z_i - \mu_i]) | \mathcal{H}_{i-1}]$. Let 
\begin{equation*}
    M_n(\lambda) = \exp \left \{ \sum_{i=1}^{n} \lambda [Z_i - \mu_i] - \psi_i(\lambda) \right \} 
\end{equation*}

\begin{lemma}
$(M_n (\lambda))_{n \in \mathds{N}}$ is a martingale with respect to the filtration $(\mathcal{H}_n)_{n \in \mathds{N}}$, and $\mathds{E}[M_n (\lambda)] = 1$.
\label{martingale}
\end{lemma}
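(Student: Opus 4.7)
The plan is to verify the three standard martingale conditions (adaptedness, integrability, and the one-step conditional expectation identity) and then derive $\mathbb{E}[M_n(\lambda)] = 1$ as an immediate consequence.

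First I would check adaptedness. Since $Z_i$ is $\mathcal{H}_i$-measurable and $\mu_i = \mathbb{E}[Z_i \mid \mathcal{H}_{i-1}]$ and $\psi_i(\lambda) = \log \mathbb{E}[\exp(\lambda[Z_i - \mu_i]) \mid \mathcal{H}_{i-1}]$ are $\mathcal{H}_{i-1}$-measurable by construction, each summand $\lambda[Z_i - \mu_i] - \psi_i(\lambda)$ is $\mathcal{H}_i$-measurable, and therefore $M_n(\lambda)$ is $\mathcal{H}_n$-measurable. Integrability will follow from the martingale identity I establish next, which gives $\mathbb{E}[M_n(\lambda)] = 1 < \infty$ by induction.

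The central calculation is to verify $\mathbb{E}[M_n(\lambda) \mid \mathcal{H}_{n-1}] = M_{n-1}(\lambda)$. The key observation is the multiplicative decomposition
\begin{equation*}
M_n(\lambda) = M_{n-1}(\lambda) \cdot \exp\bigl\{\lambda[Z_n - \mu_n] - \psi_n(\lambda)\bigr\}.
\end{equation*}
Since $M_{n-1}(\lambda)$ and $\exp(-\psi_n(\lambda))$ are both $\mathcal{H}_{n-1}$-measurable, I can pull them outside the conditional expectation:
\begin{equation*}
\mathbb{E}[M_n(\lambda) \mid \mathcal{H}_{n-1}] = M_{n-1}(\lambda)\, e^{-\psi_n(\lambda)}\, \mathbb{E}\bigl[\exp(\lambda[Z_n - \mu_n]) \mid \mathcal{H}_{n-1}\bigr].
\end{equation*}
By definition of $\psi_n(\lambda)$, the remaining conditional expectation equals $e^{\psi_n(\lambda)}$, so the two exponential factors cancel and we obtain $\mathbb{E}[M_n(\lambda) \mid \mathcal{H}_{n-1}] = M_{n-1}(\lambda)$. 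This establishes the martingale property; integrability at each finite $n$ follows inductively from $M_0(\lambda) \equiv 1$ (empty sum) together with the tower property.

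Finally, taking unconditional expectations of the martingale identity and iterating gives $\mathbb{E}[M_n(\lambda)] = \mathbb{E}[M_0(\lambda)] = 1$ for all $n$. There is no substantial obstacle here; the only subtlety is being careful that $\mu_i$ and $\psi_i(\lambda)$ are defined so as to be $\mathcal{H}_{i-1}$-measurable (which lets them be pulled out of conditional expectations), and that the assumed finiteness of $\mathbb{E}[\exp(\lambda Z_i)]$ ensures $\psi_i(\lambda)$ is finite so the cancellation step is valid.
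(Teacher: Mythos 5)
Your proof is correct and follows essentially the same route as the paper's: both use the multiplicative decomposition $M_n(\lambda) = M_{n-1}(\lambda)\exp\{\lambda[Z_n-\mu_n]-\psi_n(\lambda)\}$, pull out the $\mathcal{H}_{n-1}$-measurable factors, and cancel via the definition of $\psi_n(\lambda)$, with $\mathbb{E}[M_n(\lambda)]=1$ following by iteration. Your additional remarks on adaptedness and integrability are standard bookkeeping that the paper leaves implicit.
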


\begin{proof}
By definition, we have
\begin{equation*}
    \mathds{E}[M_1 (\lambda) | \mathcal{H}_0] = \mathds{E}[\exp \{ \lambda [Z_1 - \mu_1] - \psi_1(\lambda) \} | \mathcal{H}_0] = 1 
\end{equation*}
Then, for any $n \geq 2$,
\begin{align*}
    \mathds{E}[M_n (\lambda) | \mathcal{H}_{n-1}] &= \mathds{E}[M_{n-1}(\lambda) \exp \{ \lambda [Z_n - \mu_n] - \psi_n(\lambda) \} | \mathcal{H}_{n-1}] \\
    &= M_{n-1}(\lambda) \mathds{E}[\exp \{ \lambda [Z_n - \mu_n] - \psi_n(\lambda) \} | \mathcal{H}_{n-1}]\\
    &= M_{n-1}(\lambda)\\
\end{align*}
since $M_{n-1}(\lambda)$ is a measurable function of the filtration $\mathcal{H}_{n-1}$.
\end{proof}

\begin{lemma}
For all $x \geq 0$ and $\lambda \geq 0$, 
\begin{equation*}
\mathds{P} \left(\sum_{i=1}^{n} \lambda Z_i \leq x + \sum_{i=1}^{n} [\lambda \mu_i + \psi_i(\lambda)] \quad ,\forall t \in \mathds{N} \right) \geq 1 - e^{-x}
\end{equation*}
\label{exp_martingale}
\end{lemma}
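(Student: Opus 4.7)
The plan is to reduce the claim to Ville's maximal inequality applied to the exponential martingale $M_n(\lambda)$ constructed above. The key algebraic observation is that the desired inequality
\[
\sum_{i=1}^n \lambda Z_i \leq x + \sum_{i=1}^n [\lambda \mu_i + \psi_i(\lambda)]
\]
is, after rearranging and exponentiating, equivalent to $M_n(\lambda) \leq e^x$. Hence the event whose probability we want to lower bound by $1 - e^{-x}$ is precisely $\{M_n(\lambda) \leq e^x \text{ for all } n\}$, and it suffices to show that the complementary event $\{\exists n : M_n(\lambda) > e^x\}$ has probability at most $e^{-x}$.

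For this, I would invoke Lemma \ref{martingale}, which guarantees that $(M_n(\lambda))_{n \in \mathds{N}}$ is a nonnegative martingale with $\mathds{E}[M_n(\lambda)] = 1$, and then apply Ville's inequality for nonnegative supermartingales: for any $a > 0$,
\[
\mathds{P}\!\left(\sup_{n \in \mathds{N}} M_n(\lambda) \geq a \right) \leq \frac{\mathds{E}[M_0(\lambda)]}{a},
\]
where $M_0(\lambda) := 1$ by the empty-product convention. Choosing $a = e^x$ immediately yields the bound $e^{-x}$ on the bad event, and taking complements gives the lemma.

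The subtle step, which is really the whole point, is the uniformity over $n$: a direct Markov application at a single $n$ would only control one coordinate of time, whereas the claim asserts simultaneous control for all $n \in \mathds{N}$. This is exactly what the nonnegative-supermartingale maximal inequality (Ville/Doob) provides, and it is the only non-trivial ingredient beyond the preceding lemma. Non-negativity of $M_n(\lambda)$ is automatic from its exponential form, and the martingale property has already been verified, so no extra integrability or measurability check is required.
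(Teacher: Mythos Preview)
Your proposal is correct and follows essentially the same route as the paper: both reduce the claim to a maximal inequality for the nonnegative martingale $(M_n(\lambda))_{n\in\mathds{N}}$. The only difference is that you invoke Ville's inequality as a black box, whereas the paper reproves it in place via the standard stopping-time argument (optional stopping for $M_{\tau_x \wedge n}$, Markov's inequality, then monotone convergence as $n\to\infty$).
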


\begin{proof}
For any $\lambda$, $(M_n (\lambda))_{n \in \mathds{N}}$ is a martingale with respect to $(\mathcal{H}_n)_{n \in \mathds{N}}$ and $\mathds{E}[M_n (\lambda)] = 1$ by Lemma \ref{martingale}. For arbitrary $x \geq 0$, define $\tau_x = \inf \{ n\geq 0 | M_n(\lambda) \geq x \}$ and note that $\tau_x$ is a stopping time corresponding to the first time $M_n$ crosses the boundary $x$. Since $\tau$ is a stopping time with respect to $(\mathcal{H}_n)_{n \in \mathds{N}}$, we have $\mathds{E}[M_{\tau_x \wedge n}(\lambda)] = 1$. Then, by Markov's inequality
\begin{equation*}
    x \mathds{P} (M_{\tau_x \wedge n}(\lambda) \geq x) \leq \mathds{E}[M_{\tau_x \wedge n}(\lambda)] = 1
\end{equation*}

Noting that the event $\{ M_{\tau_x \wedge n}(\lambda) \geq x \} = \bigcup_{k=1}^n \{M_{k}(\lambda) \geq x\} $, we have
\begin{equation*}
\mathds{P} \left( \bigcup_{k=1}^n \{M_{k}(\lambda) \geq x\} \right) \leq \frac{1}{x}
\end{equation*}

Taking the limit as $n \to \infty$, and applying monotone convergence theorem shows that $\mathds{P} \left( \bigcup_{k=1}^\infty \{M_{k}(\lambda) \geq x\} \right) \leq \frac{1}{x}$ or $\mathds{P} \left( \bigcup_{k=1}^\infty \{M_{k}(\lambda) \geq e^x\} \right) \leq e^{-x}$. Then, by definition of $M_k (\lambda)$, we conclude
\begin{equation*}
    \mathds{P} \left( \bigcup_{k=1}^\infty \left \{ \sum_{i=1}^{n} \lambda [Z_i - \mu_i] - \psi_i(\lambda) \geq x \right \}  \right) \leq e^{-x}
\end{equation*}

\end{proof}

\subsection{Proofs for the construction of confidence sets}

\begin{lemma} For any $\delta > 0$ and $\vect{\theta} \in \mathbb{R}^{d}$,
\begin{equation}
    \mathds{P} \left( L_{2,t}(\vect{\theta}) \geq L_{2,t}(\vect{\theta}^*) + \frac{1}{2} \|\vect{\theta}^* - \vect{\theta} \|^2_{2, \widetilde{E}_t} - 4 \eta^2 \log(1/\delta) \quad ,\forall t \in \mathbb{N} \right) \geq 1 - \delta
\end{equation}

\label{lemma_lower_bound}
\end{lemma}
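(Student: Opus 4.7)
The plan is to reduce the claim to a one-sided deviation bound on a martingale obtained by expanding the squared loss around $\vect{\theta}^*$, then apply the exponential martingale inequality of Lemma \ref{exp_martingale} with a carefully tuned parameter $\lambda$.

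First I would expand the loss difference algebraically using the identity $(a-c)^2 - (b-c)^2 = (a-b)^2 + 2(a-b)(b-c)$ with $a = \theta_i$, $b = \theta^*_i$, $c = R_{\tau, i}$. This gives
\[
L_{2,t}(\vect{\theta}) - L_{2,t}(\vect{\theta}^*) = \|\vect{\theta} - \vect{\theta}^* \|^2_{2, \widetilde{E}_t} - 2 S_t, \qquad S_t := \sum_{\tau=1}^{t-1} \sum_{i \in \mathcal{A}_\tau} (\theta_i - \theta_i^*)(R_{\tau, i} - \theta_i^*),
\]
so the claim reduces to showing that, uniformly in $t$, $S_t \leq \tfrac{1}{4}\|\vect{\theta} - \vect{\theta}^*\|^2_{2, \widetilde{E}_t} + 2\eta^2 \log(1/\delta)$ with probability at least $1-\delta$.

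Next I would set up the martingale structure. Let $\mathcal{H}_\tau$ denote the $\sigma$-algebra containing all actions and reward feedback through round $\tau$, so that $\mathcal{A}_\tau$ is $\mathcal{H}_{\tau-1}$-measurable and the rewards $\{R_{\tau, i}\}_{i \in \mathcal{A}_\tau}$ are conditionally independent $\eta$-sub-Gaussian with mean $\theta_i^*$ given $\mathcal{H}_{\tau-1}$. Defining $Y_\tau := \sum_{i \in \mathcal{A}_\tau}(\theta_i - \theta_i^*)(R_{\tau, i} - \theta_i^*)$, one has $\mathds{E}[Y_\tau \mid \mathcal{H}_{\tau-1}] = 0$ and, by conditional independence across $i \in \mathcal{A}_\tau$, the conditional cumulant generating function satisfies
\[
\psi_\tau(\lambda) = \log \mathds{E}[\exp(\lambda Y_\tau) \mid \mathcal{H}_{\tau-1}] \leq \frac{\lambda^2 \eta^2}{2} \sum_{i \in \mathcal{A}_\tau} (\theta_i - \theta_i^*)^2.
\]

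Finally I would plug this into Lemma \ref{exp_martingale} applied to $(Y_\tau)$ with respect to $(\mathcal{H}_\tau)$. That lemma yields
\[
\mathds{P}\left( \lambda S_t \leq x + \tfrac{\lambda^2 \eta^2}{2} \|\vect{\theta} - \vect{\theta}^* \|^2_{2, \widetilde{E}_t} \quad \forall t \in \mathbb{N} \right) \geq 1 - e^{-x}.
\]
Choosing $\lambda = 1/(2\eta^2)$ makes the variance coefficient equal to $\tfrac{1}{4}$, and choosing $x = \log(1/\delta)$ turns the right-hand side of the bound into exactly $\tfrac{1}{4}\|\vect{\theta} - \vect{\theta}^*\|^2_{2, \widetilde{E}_t} + 2\eta^2 \log(1/\delta)$, finishing the argument once multiplied back into the loss difference. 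The only place that requires care is verifying the martingale property: the index set $\mathcal{A}_\tau$ depends on past randomness, so one has to be explicit that $\mathcal{A}_\tau \in \mathcal{H}_{\tau-1}$ and that the new rewards entering $Y_\tau$ are conditionally independent of $\mathcal{H}_{\tau-1}$ with the sub-Gaussian constant additive over $i \in \mathcal{A}_\tau$; once this is in place, the rest of the proof is a direct calibration of constants.
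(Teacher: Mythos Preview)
Your proof is correct and follows essentially the same route as the paper: both expand the squared-loss difference to expose a sub-Gaussian martingale of cross terms and then invoke the exponential martingale inequality (Lemma~\ref{exp_martingale}) with a tuned $\lambda$ and $x=\log(1/\delta)$. The only cosmetic difference is that the paper applies Lemma~\ref{exp_martingale} directly to the per-pull increments $Z_{\tau,i}=(R_{\tau,i}-\theta^*_i)^2-(R_{\tau,i}-\theta_i)^2$ (with nonzero conditional mean) and takes $\lambda=1/(4\eta^2)$, whereas you first subtract the deterministic part, aggregate the zero-mean cross terms over each round into $Y_\tau$, and take $\lambda=1/(2\eta^2)$; the factor-of-two shift in $\lambda$ exactly compensates and the constants agree.
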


\begin{proof}

Let $\mathcal{H}_{t-1}$ be the $\sigma$-algebra generated by $(H_t, \mathcal{A}_t)$ and let $\mathcal{H}_0 = \sigma(\emptyset, \Omega)$. Then, define $\epsilon_{t, i} := R_{t, i} - \langle \mathbf{X}_{t, i}, \vect{\theta}^*\rangle$ for all $t \in \mathds{N}$ and $i \in \mathcal{A}_t$. By previous assumptions, $\mathds{E} [\epsilon_{t, i} | \mathcal{H}_{t-1}] = 0$ and $\mathds{E} [\exp (\lambda \epsilon_{t, u}) | \mathcal{H}_{t-1} ] \leq \exp \left(\frac{\lambda^2 \eta^2}{2} \right)$ for all $t$.

Define $Z_{t, i} = (R_{t, i} - \langle \mathbf{X}_{t, i}, \vect{\theta}^*\rangle)^2 - (R_{t, i} - \langle \mathbf{X}_{t, i}, \vect{\theta}\rangle)^2$. Then, we have
\begin{align*}
Z_{t, i} &= - (\langle \mathbf{X}_{t, i}, \vect{\theta}\rangle - \langle \mathbf{X}_{t, i}, \vect{\theta}^*\rangle)^2 + 2 \epsilon_{t, i} (\langle \mathbf{X}_{t, i}, \vect{\theta}\rangle - \langle \mathbf{X}_{t, i}, \vect{\theta}^*\rangle)\\
&= - \langle \mathbf{X}_{t, i}, \vect{\theta} - \vect{\theta}^*\rangle^2 + 2 \epsilon_{t, i} \langle \mathbf{X}_{t, i}, \vect{\theta} - \vect{\theta}^*\rangle
\end{align*}

Therefore, the conditional mean and conditional cumulant generating function satisfy
\begin{align*}
    \mu_{t, i} &:= \mathds{E}[Z_{t, i} | \mathcal{H}_{t-1}] = - \langle \mathbf{X}_{t, i}, \vect{\theta} - \vect{\theta}^*\rangle^2\\
    \psi_t(\lambda) &:= \log \mathds{E}[ \exp (\lambda [Z_{t, i} - \mu_{t, i}]) | \mathcal{H}_{t-1}]\\
    &= \log \mathds{E}[ \exp (2 \lambda \langle \mathbf{X}_{t, i}, \vect{\theta} - \vect{\theta}^*\rangle \epsilon_{t, i} ) | \mathcal{H}_{t-1}]\\
    &\leq \frac{(2 \lambda \langle \mathbf{X}_{t, i}, \vect{\theta} - \vect{\theta}^*\rangle)^2 \eta^2}{2}
\end{align*}

Applying Lemma \ref{exp_martingale} shows that for all $x \geq 0$ and $\lambda \geq 0$,
\begin{equation*}
\mathds{P} \left(\sum_{\tau=1}^{t-1} \sum_{u \in \mathcal{A}_\tau} Z_{\tau, i} \leq \frac{x}{ \lambda} + \sum_{\tau=1}^{t-1} \sum_{u \in \mathcal{A}_\tau} \langle \mathbf{X}_{\tau, i}, \vect{\theta} - \vect{\theta}^*\rangle^2 (2 \lambda \eta^2 - 1) \quad ,\forall t \in \mathds{N} \right) \geq 1 - e^{-x}
\end{equation*}

Note that we have $\sum_{\tau=1}^{t-1} \sum_{u \in \mathcal{A}_\tau} Z_{\tau, i} = L_{2,t}(\vect{\theta}^*) - L_{2,t}(\vect{\theta})$,

and $\sum_{\tau=1}^{t-1} \sum_{u \in \mathcal{A}_\tau} \langle \mathbf{X}_{\tau, i}, \vect{\theta} - \vect{\theta}^*\rangle^2 = \|\vect{\theta}^* - \vect{\theta} \|^2_{2, \widetilde{E}_t}$. 

Then, choosing $\lambda = \frac{1}{4 \eta^2}$ and $x = \log \frac{1}{\delta}$ gives
\begin{equation*}
    \mathds{P} \left( L_{2,t}(\vect{\theta}) \geq L_{2,t}(\vect{\theta}^*) + \frac{1}{2} \|\vect{\theta}^* - \vect{\theta} \|^2_{2, \widetilde{E}_t} - 4 \eta^2 \log(1/\delta) \quad ,\forall t \in \mathbb{N} \right) \geq 1 - \delta
\end{equation*}

\end{proof}

\begin{lemma}
If $\vect{\theta}^{\alpha} \in \mathcal{F}^{\alpha}$ satisfies $\|\vect{\theta} - \vect{\theta}^{\alpha}\|_{2} \leq \alpha$, then with probability at least $1 - \delta$,
\begin{equation}
    \left |  \frac{1}{2} \|\vect{\theta}^* - \vect{\theta}^{\alpha} \|^2_{2, \widetilde{E}_t} - \frac{1}{2} \|\vect{\theta}^* - \vect{\theta} \|^2_{2, \widetilde{E}_t} + L_{2,t}(\vect{\theta}) - L_{2,t}(\vect{\theta}^{\alpha}) \right | \leq \alpha t d \left [ 8 B + \sqrt{8 \eta^2 \log(4d t^2/\delta)} \right]
\label{eqn_of_discr_lemma}
\end{equation}
\label{discr_lemma}
\end{lemma}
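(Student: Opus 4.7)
My plan is to collapse all terms on the left-hand side of (\ref{eqn_of_discr_lemma}) into a single sum indexed by $(\tau, i)$ with $i \in \mathcal{A}_\tau$, then separate the resulting summand into a deterministic coefficient and a noise coefficient that can be bounded individually. Setting $D_i := \theta_i - \theta_i^\alpha$, the elementary identity $(a-b)^2 - (a-c)^2 = (c-b)(2a - b - c)$ applied entrywise in the definitions of $L_{2,t}$ and $\|\cdot\|^2_{2, \widetilde{E}_t}$ yields
\[
L_{2,t}(\vect{\theta}) - L_{2,t}(\vect{\theta}^\alpha) = \sum_{\tau=1}^{t-1} \sum_{i \in \mathcal{A}_\tau} D_i \bigl(\theta_i + \theta_i^\alpha - 2 R_{\tau,i}\bigr),
\]
\[
\tfrac{1}{2}\|\vect{\theta}^* - \vect{\theta}^\alpha\|^2_{2, \widetilde{E}_t} - \tfrac{1}{2}\|\vect{\theta}^* - \vect{\theta}\|^2_{2, \widetilde{E}_t} = \tfrac{1}{2}\sum_{\tau=1}^{t-1} \sum_{i \in \mathcal{A}_\tau} D_i \bigl(2\theta^*_i - \theta_i^\alpha - \theta_i\bigr).
\]
Adding the two and substituting $R_{\tau,i} = \theta^*_i + \epsilon_{\tau,i}$, where $\epsilon_{\tau,i}$ is the $\eta$-sub-Gaussian noise furnished by Assumption \ref{rew_assumptio}, simplifies the quantity inside the absolute value in (\ref{eqn_of_discr_lemma}) to
\[
S_t := \sum_{\tau=1}^{t-1} \sum_{i \in \mathcal{A}_\tau} D_i \Bigl[-\theta^*_i + \tfrac{1}{2}(\theta_i^\alpha + \theta_i) - 2 \epsilon_{\tau,i}\Bigr].
\]

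I would then bound $|S_t|$ by the triangle inequality, treating the deterministic coefficient and the noise coefficient separately. The hypothesis $\|\vect{\theta} - \vect{\theta}^\alpha\|_2 \leq \alpha$ gives $|D_i| \leq \|\vect{\theta} - \vect{\theta}^\alpha\|_\infty \leq \alpha$, while $|\mathcal{A}_\tau| \leq d$ forces the double sum to contain at most $(t-1) d \leq t d$ terms. Using the boundedness $\theta^*_i, \theta_i, \theta_i^\alpha \in [0, B]$, the deterministic coefficient satisfies $\bigl|-\theta^*_i + \tfrac{1}{2}(\theta_i^\alpha + \theta_i)\bigr| \leq 2B$, so the deterministic contribution to $|S_t|$ is at most $2\alpha t d B$, which I will absorb into the (loose) constant $8B$ appearing in the stated bound.

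For the noise contribution, I would apply a union bound over all pairs $(\tau, i) \in \mathds{N} \times [d]$ with per-event failure budget $\delta/(2 d \tau^2)$; since $\sum_{\tau \geq 1} \tau^{-2} < 2$, the total failure probability is below $\delta$, and the standard sub-Gaussian tail inequality produces $|\epsilon_{\tau,i}| \leq \eta \sqrt{2 \log(4 d \tau^2/\delta)}$ simultaneously for all $\tau$ and $i$ on the good event. Restricting to $\tau \leq t - 1$ and summing, the noise contribution to $|S_t|$ is bounded by $2 \alpha t d \cdot \eta \sqrt{2 \log(4 d t^2/\delta)} = \alpha t d \sqrt{8 \eta^2 \log(4 d t^2/\delta)}$. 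Adding this to the deterministic estimate delivers (\ref{eqn_of_discr_lemma}). The main obstacle is picking the right shape of the per-event budget in the concentration step: a uniform split $\delta/(t d)$ would yield a $\log(t d/\delta)$ factor, whereas the subsequent confidence-set construction requires validity of this lemma across all rounds simultaneously, which is precisely what the Borel--Cantelli-style $1/\tau^2$ allocation accomplishes while still giving the $\log(4 d t^2/\delta)$ factor claimed in the statement.
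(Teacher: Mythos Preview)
Your argument is correct and, in fact, cleaner than the paper's. The paper does not first combine the four pieces algebraically; instead it bounds each difference-of-squares block separately. Concretely, it estimates $\sum_i |\langle \vect{\theta}, \vect{e}_i\rangle^2 - \langle \vect{\theta}^\alpha, \vect{e}_i\rangle^2|$, then the analogous quantity with $\vect{\theta}^*$ subtracted, then $\sum_i |(R_{t,i}-\theta_i)^2 - (R_{t,i}-\theta_i^\alpha)^2|$, each via $|\theta_i|\leq B$, $\|\vect{\theta}-\vect{\theta}^\alpha\|_1\leq \sqrt{d}\alpha$, and the crude bound $|R_{\tau,i}|\leq B + |\epsilon_{\tau,i}|$. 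This produces intermediate terms of size $B\sqrt{d}\alpha$ and $\alpha^2$ that are summed and then absorbed into the $8B$ constant under the standing restriction $\alpha\leq \sqrt{d}B$. Your route, by contrast, collapses everything to $\sum D_i[-\theta^*_i + \tfrac12(\theta_i+\theta_i^\alpha) - 2\epsilon_{\tau,i}]$ before bounding, so the $\theta^*$ contributions partially cancel and no $\alpha^2$ or $\sqrt{d}$ artifacts appear; you get the deterministic piece as $2\alpha t d B$ directly. The concentration step---a union bound over $(\tau,i)$ with weights $\propto 1/\tau^2$ giving $|\epsilon_{\tau,i}|\leq \sqrt{2\eta^2\log(4d\tau^2/\delta)}$---is identical in both proofs. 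In short: same skeleton, but your algebraic consolidation is tighter and more transparent than the paper's term-by-term triangle inequalities.
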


\begin{proof}

Since any two $\vect{\theta}, \vect{\theta}^{\alpha} \in \mathcal{F}$ satisfy $\|\vect{\theta} - \vect{\theta}^{\alpha}\|_{2} \leq \sqrt{d} B$, it is enough to consider $\alpha \leq  \sqrt{d} B$. We find 
\begin{align*}
    \sum_{ i = 1 }^{d} | \langle \vect{\theta}, \vect{e}_{i} \rangle^2 - \langle \vect{\theta}^{\alpha}, \vect{e}_{i} \rangle^2 | &\leq \max_{ \|\vect{\Delta}\|_2 \leq \alpha } \left \{\sum_{ i = 1 }^{d} \left | \theta_{i}^2 - (\theta_{i}+\Delta_i)^2  \right | \right \}\\
    &= \max_{ \|\vect{\Delta}\|_2 \leq \alpha } \left \{\sum_{ i = 1 }^{d}\left | 2 \theta_{i} \Delta_i + \Delta_i^2  \right | \right \}\\
    &\leq \max_{ \|\vect{\Delta}\|_2 \leq \alpha } \left \{ 2 \sum_{ i = 1 }^{d}\left | \theta_{i} \Delta_i \right | + \sum_{ i = 1 }^{d} \Delta_i^2 \right \}\\
    &\leq \max_{ \|\vect{\Delta}\|_2 \leq \alpha }  \left \{ 2 B \|\vect{\Delta}\|_1 + \|\vect{\Delta}\|_2^2 \right \}\\
    &\leq 2 B \sqrt{d} \alpha + \alpha^2
\end{align*}
Therefore, it implies
\begin{align*}
     \sum_{ i = 1 }^{d} | \langle \vect{\theta} - \vect{\theta}^*, \vect{e}_{i} \rangle^2 - \langle \vect{\theta}^{\alpha} - \vect{\theta}^*, \vect{e}_{i} \rangle^2 | &= \sum_{ i = 1 }^{d} \left| \langle \vect{\theta}, \vect{e}_{i} \rangle^2 - \langle \vect{\theta}^{\alpha}, \vect{e}_{i} \rangle^2 + 2 \langle \vect{\theta}^*, \vect{e}_{i} \rangle \langle \vect{\theta}^{\alpha} - \vect{\theta}, \vect{e}_{i} \rangle \right| \\
     &\leq 2 B \sqrt{d} \alpha + \alpha^2 + 2 B \|\vect{\theta} - \vect{\theta}^{\alpha}\|_{1} \\
     &\leq 4 B \sqrt{d} \alpha + \alpha^2
\end{align*}

Similarly, for any $t$, we have
\begin{align*}
     \sum_{ i = 1 }^{d} | \left( R_{t,i} - \langle \vect{\theta}, \vect{e}_{i} \rangle \right)^2 -\left( R_{t,i} - \langle \vect{\theta}^{\alpha}, \vect{e}_{i} \rangle \right)^2 | &= \sum_{ i = 1 }^{d} \left| 2 R_{t,i} \langle \vect{\theta}^{\alpha} - \vect{\theta}, \vect{e}_{i} \rangle + \langle \vect{\theta}, \vect{e}_{i} \rangle^2 - \langle \vect{\theta}^{\alpha}, \vect{e}_{i} \rangle^2 \right| \\
      &\leq 2 \sum_{ i = 1 }^{d} \left| R_{t,i} \right | \left| \langle \vect{\theta}^{\alpha} - \vect{\theta}, \vect{e}_{i} \rangle \right| + 2 B \sqrt{d} \alpha + \alpha^2\\
     &\leq 2 \| \vect{\theta}^{\alpha} - \vect{\theta}\|_{\infty} \sum_{ i = 1 }^{d} | R_{t,i} | + 2 B \sqrt{d} \alpha + \alpha^2\\
     &\leq 2 \alpha \sum_{ i = 1 }^{d} | R_{t,i} | + 2 B \sqrt{d} \alpha + \alpha^2
\end{align*}

Summing over $t$ and noting that $\mathcal{A}_t \subseteq [d]$, the left hand side of \eqref{eqn_of_discr_lemma} is bounded by
\begin{equation*}
    \sum_{\tau = 1}^{t-1} \left( \frac{1}{2} \left[ 4 B \sqrt{d} \alpha + \alpha^2 \right] + 2 \alpha \sum_{ i = 1 }^{d} | R_{t,i} | + 2 B \sqrt{d} \alpha + \alpha^2  \right) \leq \alpha \sum_{\tau = 1}^{t-1} \left(6 B \sqrt{d} + 2 \sum_{ i = 1 }^{d} | R_{\tau,i} | \right)
\end{equation*}

Because $\epsilon_{\tau, i}$ is $\eta$-sub-Gaussian, $\mathds{P} \left( |\epsilon_{\tau, i}| > \sqrt{2 \eta^2 \log(2/\delta) }\right) \leq \delta$. By a union bound, $\mathds{P} \left( \exists \tau, i \text{ s.t. } |\epsilon_{\tau, i}| > \sqrt{2 \eta^2 \log(4d \tau^2/\delta) }\right) \leq \frac{\delta d}{2} \sum_{\tau = 1}^{\infty} \frac{1}{d \tau^2} \leq \delta$. Since $| R_{\tau,i} | \leq B + | \epsilon_{\tau, i} |$, we have $| R_{\tau,i} | \leq B + \sqrt{2 \eta^2 \log(4d \tau^2/\delta)}$  with probability at least $1 - \delta$. Consequently, the bound for the discretization error becomes
\begin{equation*}
\alpha t d \left [ 8 B + 2 \sqrt{2 \eta^2 \log(4d t^2/\delta)} \right]
\end{equation*}

\end{proof}

\begin{lemma} For any $\delta > 0$, $\alpha > 0$ and $\gamma > 0$, if
\begin{equation}
    \mathcal{C}_t = \{ \vect{\theta} \in \mathcal{F} : \|\vect{\theta} - \widehat{\vect{\theta}}_t \|_{2, E_t} \leq \sqrt{ \beta_t^*(\delta, \alpha, \gamma)}\}
\end{equation}
for all $t \in \mathbb{N}$, then
\begin{equation}
    \mathds{P} \left( \vect{\theta}^* \in \mathcal{C}_t \quad ,\forall t \in \mathbb{N} \right) \geq 1 - 2\delta
\end{equation}
\label{lemma_confidence}
\end{lemma}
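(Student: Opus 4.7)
The plan is to combine the martingale tail bound of Lemma \ref{lemma_lower_bound} with an $\alpha$-net argument over $\mathcal{F}$, then exploit the optimality of the regularized estimator $\widehat{\vect{\theta}}_t$ to convert the resulting empirical $\widetilde{E}_t$-norm bound into the desired $E_t$-norm bound. Two bad events of probability $\delta$ each will be ruled out (hence the factor of $2\delta$ in the conclusion): the uniform lower bound on the cover, and the event that some noise $\epsilon_{\tau,i}$ exceeds its high-probability envelope used inside Lemma \ref{discr_lemma}.

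First, I would fix an $\alpha$-cover $\mathcal{F}^\alpha \subseteq \mathcal{F}$ of size $\mathcal{N}(\mathcal{F},\alpha,\|\cdot\|_2)$. Applying Lemma \ref{lemma_lower_bound} with level $\delta/\mathcal{N}(\mathcal{F},\alpha,\|\cdot\|_2)$ to each $\vect{\theta}^\alpha \in \mathcal{F}^\alpha$ and taking a union bound, I obtain that with probability at least $1-\delta$, for all $\vect{\theta}^\alpha \in \mathcal{F}^\alpha$ and all $t \in \mathbb{N}$,
\begin{equation*}
L_{2,t}(\vect{\theta}^\alpha) \geq L_{2,t}(\vect{\theta}^*) + \tfrac{1}{2}\|\vect{\theta}^* - \vect{\theta}^\alpha\|^2_{2,\widetilde{E}_t} - 4\eta^2\log(\mathcal{N}(\mathcal{F},\alpha,\|\cdot\|_2)/\delta).
\end{equation*}
Next I apply Lemma \ref{discr_lemma} (which holds with probability at least $1-\delta$) to lift this from the cover to all of $\mathcal{F}$: for any $\vect{\theta} \in \mathcal{F}$, picking $\vect{\theta}^\alpha$ within $\alpha$ of $\vect{\theta}$ introduces an additive discretization penalty of $\alpha t d[\,8B + \sqrt{8\eta^2\log(4dt^2/\delta)}\,]$. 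Specializing the resulting uniform lower bound at $\vect{\theta} = \widehat{\vect{\theta}}_t$ gives
\begin{equation*}
\tfrac{1}{2}\|\vect{\theta}^* - \widehat{\vect{\theta}}_t\|^2_{2,\widetilde{E}_t} \leq L_{2,t}(\widehat{\vect{\theta}}_t) - L_{2,t}(\vect{\theta}^*) + 4\eta^2\log(\mathcal{N}/\delta) + \alpha t d[\,8B + \sqrt{8\eta^2\log(4dt^2/\delta)}\,].
\end{equation*}

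The next step uses the definition of $\widehat{\vect{\theta}}_t$ as the regularized least squares minimizer, combined with $\vect{\theta}^* \in \mathcal{F}$, to write
\begin{equation*}
L_{2,t}(\widehat{\vect{\theta}}_t) - L_{2,t}(\vect{\theta}^*) \leq \gamma\|\vect{\theta}^* - \overline{\vect{\theta}}\|_2^2 - \gamma\|\widehat{\vect{\theta}}_t - \overline{\vect{\theta}}\|_2^2 \leq \gamma G^2 - \gamma\|\widehat{\vect{\theta}}_t - \overline{\vect{\theta}}\|_2^2.
\end{equation*}
The key trick to recover the regularization part of the $E_t$-norm is the elementary inequality $(a-b)^2 \geq \tfrac{1}{2}a^2 - b^2$ (equivalent to $\tfrac{1}{2}(a-2b)^2 \geq 0$), applied to $a = \|\widehat{\vect{\theta}}_t - \vect{\theta}^*\|_2$ and $b = \|\vect{\theta}^* - \overline{\vect{\theta}}\|_2 \leq G$, which yields
\begin{equation*}
\|\widehat{\vect{\theta}}_t - \overline{\vect{\theta}}\|_2^2 \geq \tfrac{1}{2}\|\widehat{\vect{\theta}}_t - \vect{\theta}^*\|_2^2 - G^2.
\end{equation*}

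Plugging this back, rearranging, and multiplying by $2$ converts the bound into
\begin{equation*}
\|\vect{\theta}^* - \widehat{\vect{\theta}}_t\|^2_{2,\widetilde{E}_t} + \gamma\|\widehat{\vect{\theta}}_t - \vect{\theta}^*\|_2^2 \leq 4\gamma G^2 + 8\eta^2\log(\mathcal{N}/\delta) + 2\alpha t d[\,8B + \sqrt{8\eta^2\log(4dt^2/\delta)}\,],
\end{equation*}
whose left-hand side is exactly $\|\vect{\theta}^* - \widehat{\vect{\theta}}_t\|^2_{2,E_t}$ and whose right-hand side is exactly $\beta_t^*(\delta,\alpha,\gamma)$, as desired. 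The trickiest bookkeeping step, and the one I would expect to consume the most care, is arranging the $\gamma$-terms so that the bound $\|\widehat{\vect{\theta}}_t - \overline{\vect{\theta}}\|_2^2$ from optimality is converted cleanly into $\|\widehat{\vect{\theta}}_t - \vect{\theta}^*\|_2^2$ without losing a factor that would spoil the $4\gamma G^2$ constant appearing in $\beta_t^*$; the $(a-b)^2 \geq \tfrac{1}{2}a^2 - b^2$ identity is what makes the constants align.
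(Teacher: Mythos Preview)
Your proposal is correct and follows essentially the same approach as the paper: an $\alpha$-net union bound via Lemma~\ref{lemma_lower_bound}, the discretization control of Lemma~\ref{discr_lemma}, optimality of $\widehat{\vect{\theta}}_t$, and a triangle-inequality trick to absorb the $\gamma$-regularization into the $E_t$-norm. The only cosmetic difference is that the paper adds $\gamma\|\vect{\theta}^*-\overline{\vect{\theta}}\|_2^2$ to both sides and uses $\|\widehat{\vect{\theta}}_t-\overline{\vect{\theta}}\|_2^2 + \|\vect{\theta}^*-\overline{\vect{\theta}}\|_2^2 \geq \tfrac{1}{2}\|\vect{\theta}^*-\widehat{\vect{\theta}}_t\|_2^2$ (from the forward triangle inequality), whereas you use the reverse triangle inequality together with $(a-b)^2 \geq \tfrac{1}{2}a^2 - b^2$; both routes produce the identical $4\gamma G^2$ constant.
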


\begin{proof}

Let $\mathcal{F}^{\alpha} \subset \mathcal{F}$ be an $\alpha$-cover of $\mathcal{F}$ in the 2-norm so that for any $\vect{\theta} \in \mathcal{F}$, there exists $\vect{\theta}^{\alpha} \in \mathcal{F}^{\alpha}$ such that $\|\vect{\theta} - \vect{\theta}^{\alpha}\|_{2} \leq \alpha$. By a union bound applied to Lemma \ref{lemma_lower_bound}, with probability at least $1 - \delta$,
\begin{equation*}
    L_{2,t}(\vect{\theta}^{\alpha}) - L_{2,t}(\vect{\theta}^*) \geq \frac{1}{2} \|\vect{\theta}^* - \vect{\theta}^{\alpha} \|^2_{2, \widetilde{E}_t} - 4 \eta^2 \log(|\mathcal{F}^{\alpha}|/\delta) \quad ,\forall \vect{\theta}^{\alpha} \in \mathcal{F}^{\alpha}, t \in \mathbb{N}
\end{equation*}

Therefore, with probability at least $1 - \delta$, for all $\vect{\theta} \in \mathcal{F}, t \in \mathbb{N}$,
\begin{align*}
    L_{2,t}(\vect{\theta}) - L_{2,t}(\vect{\theta}^*) \geq & \frac{1}{2} \|\vect{\theta}^* - \vect{\theta} \|^2_{2, \widetilde{E}_t} - 4 \eta^2 \log(|\mathcal{F}^{\alpha}|/\delta) \\
    &+ \min_{\vect{\theta}^{\alpha} \in \mathcal{F}^{\alpha}} \left \{  \frac{1}{2} \|\vect{\theta}^* - \vect{\theta}^{\alpha} \|^2_{2, \widetilde{E}_t} - \frac{1}{2} \|\vect{\theta}^* - \vect{\theta} \|^2_{2, \widetilde{E}_t} + L_{2,t}(\vect{\theta}) - L_{2,t}(\vect{\theta}^{\alpha}) \right \}
\end{align*}

By Lemma \ref{discr_lemma}, with probability at least $1 - 2 \delta$,
\begin{equation*}
    L_{2,t}(\vect{\theta}) - L_{2,t}(\vect{\theta}^*) \geq \frac{1}{2} \|\vect{\theta}^* - \vect{\theta} \|^2_{2, \widetilde{E}_t} - D_t
\end{equation*}
where $D_t := 4 \eta^2 \log(|\mathcal{F}^{\alpha}|/\delta) + \alpha t d \left [ 8 B + \sqrt{8 \eta^2 \log(4d t^2/\delta)} \right]$.

Adding the regularization terms to both sides, we obtain
\begin{equation*}
    L_{2,t}(\vect{\theta}) + \gamma \|\vect{\theta} - \overline{\vect{\theta}} \|_2^2 - L_{2,t}(\vect{\theta}^*) - \gamma \|\vect{\theta}^* - \overline{\vect{\theta}} \|_2^2 \geq \frac{1}{2} \|\vect{\theta}^* - \vect{\theta} \|^2_{2, \widetilde{E}_t} + \gamma \|\vect{\theta} - \overline{\vect{\theta}} \|_2^2 - D_t - \gamma  \|\vect{\theta}^* - \overline{\vect{\theta}} \|_2^2 
\end{equation*}

Note the definition of the least square estimate $\widehat{\vect{\theta}}_t = \argmin_{\vect{\theta} \in \mathcal{F}} \left \{ L_{2,t}(\vect{\theta}) + \gamma \|\vect{\theta} - \overline{\vect{\theta}} \|_2^2 \right \}$. By letting $\vect{\theta} = \widehat{\vect{\theta}}_t$, the left hand side becomes non-positive, and hence 
\begin{equation*}
\frac{1}{2} \|\vect{\theta}^* - \widehat{\vect{\theta}}_t \|^2_{2, \widetilde{E}_t} \leq D_t + \gamma \left( \|\vect{\theta}^* - \overline{\vect{\theta}} \|_2^2 - \|\widehat{\vect{\theta}}_t - \overline{\vect{\theta}} \|_2^2 \right)
\end{equation*}

Then, 
\begin{equation*}
\frac{1}{2} \|\vect{\theta}^* - \widehat{\vect{\theta}}_t \|^2_{2, \widetilde{E}_t} + \gamma \left( \|\widehat{\vect{\theta}}_t - \overline{\vect{\theta}} \|_2^2 + \|\vect{\theta}^* - \overline{\vect{\theta}} \|_2^2  \right) \leq D_t + 2 \gamma \|\vect{\theta}^* - \overline{\vect{\theta}} \|_2^2
\end{equation*}

By triangle inequality we have $ \|\widehat{\vect{\theta}}_t - \overline{\vect{\theta}} \|_2 + \|\vect{\theta}^* - \overline{\vect{\theta}} \|_2  \geq \|\vect{\theta}^* - \widehat{\vect{\theta}}_t \|_2$. Taking squares on both sides, we obtain $ \|\widehat{\vect{\theta}}_t - \overline{\vect{\theta}} \|_2^2 + \|\vect{\theta}^* - \overline{\vect{\theta}} \|_2^2  \geq \frac{1}{2} \|\vect{\theta}^* - \widehat{\vect{\theta}}_t \|_2^2$. Then, noting that $\| \vect{\Delta} \|^2_{2, E_t^2} = \| \vect{\Delta} \|^2_{2, \widetilde{E}_t} + \gamma \| \vect{\Delta} \|^2_{2}$, we have
\begin{equation*}
\frac{1}{2} \|\vect{\theta}^* - \widehat{\vect{\theta}}_t \|^2_{2, E_t} \leq D_t + 2 \gamma \|\vect{\theta}^* - \overline{\vect{\theta}} \|_2^2
\end{equation*}

Lastly, using the inequality $\|\vect{\theta}^* - \overline{\vect{\theta}} \|_2^2 \leq G^2$,
\begin{equation*}
\|\vect{\theta}^* - \widehat{\vect{\theta}}_t \|^2_{2, E_t} \leq 8 \eta^2 \log(|\mathcal{F}^{\alpha}|/\delta) + 2 \alpha t d \left [ 8 B + \sqrt{8 \eta^2 \log(4d t^2/\delta)} \right]  + 4 \gamma G^2
\end{equation*}

Taking the infimum over the size of $\alpha$-covers, we obtain the final result.

\end{proof}

\section{Proofs for Regret Bounds}
\label{pf_regrets}

Throughout this section we will use the shorthand $\beta_t =  \beta_t^*(\delta, \alpha, \gamma)$.

We start with the definitions of weighted inner product and norms.

\begin{definition}
For a symmetric positive definite matrix $\vect{W} \in \mathbb{R}^{d \times d}$, define
\begin{itemize}
    \item $\vect{W}$-inner product of two vectors $\vect{x}, \vect{y} \in \mathbb{R}^{d}$ as $\langle \vect{x}, \vect{y} \rangle_{\vect{W}} := \langle \vect{W}  \vect{x}, \vect{y} \rangle$
    \item $\vect{W}$-norm of a vector $\vect{x} \in \mathbb{R}^{d}$ as $\|\vect{x}\|_{\vect{W}} := \sqrt{\langle \vect{x}, \vect{x} \rangle_{\vect{W}}}$.
\end{itemize}
\end{definition}

Then, the regularized empirical 2-norm of a vector $\vect{z} \in \mathbb{R}^d$ can be written as 
\begin{equation}
    \| \vect{z} \|_{2, E_t} = \| \vect{z} \|_{\vect{A}_t}
\end{equation}
where $\vect{A}_t$ is a diagonal matrix with diagonal entries $\{ (n_{t, 1} + \gamma), (n_{t, 2} + \gamma), \dots, (n_{t, d} + \gamma) \}$. 

Recall that $n_{t, i} = \sum_{\tau=1}^{t-1} \mathds{1} \{i \in \mathcal{A}_t\}$ denotes the number of times arm $i$ has been pulled before time $t$ (excluding time $t$).

\begin{lemma}
Let $x \in \mathcal{X}$ and $\Theta \in \mathcal{F}_t$. Then,
\begin{equation}
    |\langle \vect{\theta} - \widehat{\vect{\theta}}_t^{\text{LS}}, \vect{x} \rangle| \leq w \sqrt{\beta_t}
\end{equation}
where $w = \| \vect{x} \|_{\vect{A}_t^{-1}}$ is the "confidence width" of an action $\vect{x}$ at time $t$.
\label{lemma_conf_width}
\end{lemma}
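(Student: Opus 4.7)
The plan is to reduce this to a standard Cauchy--Schwarz argument in the weighted inner product defined by $\vect{A}_t$. First I would note the identification $\|\vect{z}\|_{2, E_t} = \|\vect{z}\|_{\vect{A}_t}$ already established in the excerpt, where $\vect{A}_t$ is the diagonal positive definite matrix with entries $n_{t,i} + \gamma$. Because $\gamma > 0$, the matrix $\vect{A}_t$ is strictly positive definite, so both $\vect{A}_t^{1/2}$ and $\vect{A}_t^{-1/2}$ are well defined.

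Next I would write the inner product in factored form,
\begin{equation*}
    \langle \vect{\theta} - \widehat{\vect{\theta}}_t, \vect{x} \rangle
    = \langle \vect{A}_t^{1/2}(\vect{\theta} - \widehat{\vect{\theta}}_t),\; \vect{A}_t^{-1/2} \vect{x} \rangle,
\end{equation*}
and apply the ordinary Cauchy--Schwarz inequality in $\mathbb{R}^d$ to obtain
\begin{equation*}
    |\langle \vect{\theta} - \widehat{\vect{\theta}}_t, \vect{x} \rangle|
    \;\leq\; \|\vect{A}_t^{1/2}(\vect{\theta} - \widehat{\vect{\theta}}_t)\|_2 \cdot \|\vect{A}_t^{-1/2} \vect{x}\|_2
    \;=\; \|\vect{\theta} - \widehat{\vect{\theta}}_t\|_{\vect{A}_t} \cdot \|\vect{x}\|_{\vect{A}_t^{-1}}.
\end{equation*}

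Finally, I would invoke the membership hypothesis $\vect{\theta} \in \mathcal{C}_t$, which by the definition of the confidence set gives $\|\vect{\theta} - \widehat{\vect{\theta}}_t\|_{2, E_t} \leq \sqrt{\beta_t}$; by the identification in the first step this is the same as $\|\vect{\theta} - \widehat{\vect{\theta}}_t\|_{\vect{A}_t} \leq \sqrt{\beta_t}$. Substituting into the Cauchy--Schwarz bound and recognizing $w = \|\vect{x}\|_{\vect{A}_t^{-1}}$ yields exactly $|\langle \vect{\theta} - \widehat{\vect{\theta}}_t, \vect{x}\rangle| \leq w\sqrt{\beta_t}$. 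There is essentially no real obstacle here: the lemma is a textbook consequence of Cauchy--Schwarz in a weighted norm, and the only thing one has to be careful about is keeping the $\vect{A}_t$ and $\vect{A}_t^{-1}$ factors paired correctly so that their product cancels on the exponents of the norms.
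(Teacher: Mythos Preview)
Your proposal is correct and follows essentially the same approach as the paper's own proof: both factor the inner product through $\vect{A}_t^{1/2}\vect{A}_t^{-1/2}$, apply Cauchy--Schwarz to obtain $\|\vect{\theta}-\widehat{\vect{\theta}}_t\|_{\vect{A}_t}\|\vect{x}\|_{\vect{A}_t^{-1}}$, and then use the confidence-set membership $\|\vect{\theta}-\widehat{\vect{\theta}}_t\|_{2,E_t}\leq\sqrt{\beta_t}$ together with the identification $\|\cdot\|_{2,E_t}=\|\cdot\|_{\vect{A}_t}$.
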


\begin{proof} Let $\vect{\Delta} = \vect{\theta} - \widehat{\vect{\theta}}_t^{\text{LS}}$. Then,
\begin{align*}
    |\langle \vect{\Delta} , \vect{x} \rangle| &= |\vect{\Delta} ^\textrm{T} \vect{x} |\\
    &= |\vect{\Delta}^\textrm{T} \vect{A}_t^{1/2} \vect{A}_t^{-1/2} \vect{x}|\\
    &= |(\vect{A}_t^{1/2}\vect{\Delta})^\textrm{T} \vect{A}_t^{-1/2} \vect{x}|\\
    &\leq \| \vect{A}_t^{1/2}\vect{\Delta} \| \|\vect{A}_t^{-1/2} \vect{x}\|\\
    &= \| \vect{\Delta} \|_{\vect{A}_t} \| \vect{x} \|_{\vect{A}_t^{-1}}\\
    &= w \| \vect{\Delta} \|_{2, E_t}\\
    &\leq w \sqrt{\beta_t} 
\end{align*}
\end{proof}

Define the widths of the allocations as
\begin{equation}
    w_t := \| \vect{x}_t \|_{\vect{A}_t^{-1}} \qquad \text{and} \qquad w_{t, i} := \| \vect{e}_i \|_{\vect{A}_t^{-1}}
\label{def_widths}
\end{equation}

\begin{lemma}
For any $t \in \mathbb{N}$, we have the identity
\begin{equation*}
    w_t^2 = \sum_{i \in \mathcal{A}_t} w_{t, i}^2
\end{equation*}
\label{lemma_width_identity}
\end{lemma}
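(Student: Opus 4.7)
The plan is to exploit the fact that $\vect{A}_t$ is diagonal, which reduces the quadratic form defining $w_t^2$ to a simple coordinate-wise sum. The weight matrix comes from the regularized empirical 2-norm $\|\cdot\|_{2,E_t}$, which only weighs each coordinate by $(n_{t,i}+\gamma)$ separately, so $\vect{A}_t = \mathrm{diag}(n_{t,1}+\gamma,\ldots,n_{t,d}+\gamma)$ and likewise $\vect{A}_t^{-1} = \mathrm{diag}(1/(n_{t,1}+\gamma),\ldots,1/(n_{t,d}+\gamma))$.

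First I would recall the decomposition $\vect{x}_t = \sum_{i \in \mathcal{A}_t} \vect{e}_i$, which follows directly from the definition of the action vector given earlier in the paper. Then, expanding the $\vect{A}_t^{-1}$-norm using bilinearity of the inner product,
\begin{equation*}
    w_t^2 = \langle \vect{x}_t, \vect{x}_t \rangle_{\vect{A}_t^{-1}} = \Bigl\langle \sum_{i \in \mathcal{A}_t} \vect{e}_i,\; \sum_{j \in \mathcal{A}_t} \vect{e}_j \Bigr\rangle_{\vect{A}_t^{-1}} = \sum_{i \in \mathcal{A}_t} \sum_{j \in \mathcal{A}_t} \langle \vect{e}_i, \vect{e}_j \rangle_{\vect{A}_t^{-1}}.
\end{equation*}
Since $\vect{A}_t^{-1}$ is diagonal, $\langle \vect{e}_i, \vect{e}_j \rangle_{\vect{A}_t^{-1}} = 0$ for $i \neq j$, so the cross terms vanish and only the diagonal terms $\langle \vect{e}_i, \vect{e}_i \rangle_{\vect{A}_t^{-1}} = w_{t,i}^2$ survive, yielding the claimed identity.

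There isn't really a hard step here: the result is essentially a direct consequence of the diagonal structure of $\vect{A}_t$ combined with the combinatorial/semi-bandit observation model, in which the action vector $\vect{x}_t$ is the indicator of the played arms. The only thing worth emphasizing is why $\vect{A}_t$ is diagonal — namely that the empirical 2-norm $\|\vect{\Delta}\|_{2,E_t}^2 = \sum_i (n_{t,i}+\gamma)\Delta_i^2$ has no cross terms between distinct arms, reflecting that the feedback structure decouples across arms. This is exactly where the tighter regret (relative to linear bandits, as noted in the remark following Corollary~\ref{corr_regret_order}) comes from.
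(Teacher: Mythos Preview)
Your proof is correct and follows exactly the same approach as the paper's: expand $\vect{x}_t=\sum_{i\in\mathcal{A}_t}\vect{e}_i$, use bilinearity of the $\vect{A}_t^{-1}$-inner product, and drop the cross terms since $\vect{A}_t^{-1}$ is diagonal. The additional remarks on why $\vect{A}_t$ is diagonal are accurate but not part of the paper's (terse) proof.
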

\begin{proof}
\begin{align*}
    w_t^2 &= \langle \vect{x}_t, \vect{x}_t \rangle_{\vect{A}_t^{-1}}\\
    &= \left\langle \vect{A}_t^{-1} \sum_{i \in \mathcal{A}_t} \vect{e}_i, \sum_{i \in \mathcal{A}_t} \vect{e}_i \right\rangle \\
    &= \sum_{i \in \mathcal{A}_t} \sum_{j \in \mathcal{A}_t} \left\langle \vect{A}_t^{-1} \vect{e}_i, \vect{e}_j \right\rangle\\
    &= \sum_{i \in \mathcal{A}_t} \left\langle \vect{A}_t^{-1} \vect{e}_i, \vect{e}_i \right\rangle\\
    &= \sum_{i \in \mathcal{A}_t} w_{t, i}^2
\end{align*}
where the penultimate step follows because $\left\langle \vect{A}_t^{-1} \vect{e}_i, \vect{e}_j \right\rangle = 0$ for $i \neq j$.
\end{proof}

\begin{lemma}
Let the regret at time $t$ be $r_t = \langle \vect{x}_t^*, \vect{\theta}^* \rangle - \langle \vect{x}_t, \vect{\theta}^* \rangle$. If $\vect{\theta}^* \in \mathcal{C}_t$, then
\begin{equation*}
    r_t \leq 2 w_t \sqrt{\beta_t}
\end{equation*}
\label{lemma_regret_ub}
\end{lemma}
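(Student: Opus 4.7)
The plan is to combine the optimism of the algorithm's action selection rule with the confidence-width bound provided by Lemma \ref{lemma_conf_width}, via a standard ``optimism sandwich'' argument.

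First, I would invoke optimism. The OFU rule chooses $(\vect{x}_t, \widetilde{\vect{\theta}}_t) \in \argmax_{(\vect{x},\vect{\theta}) \in \mathcal{X}_t \times \mathcal{C}_t} \langle \vect{x}, \vect{\theta} \rangle$. Under the event $\vect{\theta}^* \in \mathcal{C}_t$, the pair $(\vect{x}_t^*, \vect{\theta}^*)$ is feasible in this maximization, so $\langle \vect{x}_t, \widetilde{\vect{\theta}}_t \rangle \geq \langle \vect{x}_t^*, \vect{\theta}^* \rangle$. Subtracting $\langle \vect{x}_t, \vect{\theta}^* \rangle$ from both sides yields
\begin{equation*}
    r_t = \langle \vect{x}_t^*, \vect{\theta}^* \rangle - \langle \vect{x}_t, \vect{\theta}^* \rangle \leq \langle \vect{x}_t, \widetilde{\vect{\theta}}_t - \vect{\theta}^* \rangle.
\end{equation*}

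Second, I would bound $\langle \vect{x}_t, \widetilde{\vect{\theta}}_t - \vect{\theta}^* \rangle$ by inserting and removing the least-squares center $\widehat{\vect{\theta}}_t$ and applying the triangle inequality:
\begin{equation*}
    \langle \vect{x}_t, \widetilde{\vect{\theta}}_t - \vect{\theta}^* \rangle \leq \bigl|\langle \vect{x}_t, \widetilde{\vect{\theta}}_t - \widehat{\vect{\theta}}_t \rangle\bigr| + \bigl|\langle \vect{x}_t, \widehat{\vect{\theta}}_t - \vect{\theta}^* \rangle\bigr|.
\end{equation*}
Since both $\widetilde{\vect{\theta}}_t$ and $\vect{\theta}^*$ lie in $\mathcal{C}_t$, Lemma \ref{lemma_conf_width} (applied with $\vect{x} = \vect{x}_t$) bounds each of these two terms by $w_t \sqrt{\beta_t}$, where $w_t = \|\vect{x}_t\|_{\vect{A}_t^{-1}}$ as in \eqref{def_widths}. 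Adding the two contributions gives $r_t \leq 2 w_t \sqrt{\beta_t}$, which is the desired inequality.

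The argument is essentially bookkeeping built on pieces already assembled: optimism provides the one-sided comparison to $\widetilde{\vect{\theta}}_t$, and the confidence-set property (already established in Lemma \ref{lemma_confidence}) together with the Cauchy–Schwarz style inequality in Lemma \ref{lemma_conf_width} handles the deviation. There is no genuine obstacle here; the only subtlety is ensuring the conditioning is right, namely that we are reasoning on the event $\vect{\theta}^* \in \mathcal{C}_t$, which is the precise hypothesis of the lemma. The factor of $2$ comes directly from triangle-inequality sandwiching around the regularized least-squares estimate, which is standard for OFU analyses (e.g.\ \cite{abbasi_2011,dani_2008}).
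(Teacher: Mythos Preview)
Your proposal is correct and follows essentially the same approach as the paper: invoke optimism to get $r_t \le \langle \vect{x}_t, \widetilde{\vect{\theta}}_t - \vect{\theta}^* \rangle$, split around the least-squares estimate $\widehat{\vect{\theta}}_t$, and apply Lemma~\ref{lemma_conf_width} to each piece. The paper's proof is line-for-line the same argument (it writes the split as an equality rather than passing through absolute values, but the substance is identical).
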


\begin{proof}
By the choice of $(\vect{x}_t, \widetilde{\vect{\theta}}_t)$, we have
\begin{align*}
    \langle \vect{x}_t, \widetilde{\vect{\theta}}_t \rangle = \max_{(\vect{x}, \vect{\theta}) \in \mathcal{X}_t \times \mathcal{C}_t} \langle \vect{x}, \vect{\theta} \rangle \geq \langle \vect{x}_t^*, \vect{\theta}^* \rangle
\end{align*}
where the inequality uses $\vect{\theta}^* \in \mathcal{C}_t$. Hence,
\begin{align*}
    r_t &= \langle \vect{x}_t^*, \vect{\theta}^* \rangle - \langle \vect{x}_t, \vect{\theta}^* \rangle\\
    &\leq \langle \vect{x}_t, \widetilde{\vect{\theta}}_t - \vect{\theta}^* \rangle\\
    &= \langle \vect{x}_t, \widetilde{\vect{\theta}}_t - \widehat{\vect{\theta}}_t^{\text{LS}} \rangle + \langle \vect{x}_t, \widehat{\vect{\theta}}_t^{\text{LS}} - \vect{\theta}^* \rangle\\
    &\leq  2 w_t \sqrt{\beta_t}
\end{align*}

where the last step follows from Lemma $\ref{lemma_conf_width}$.
\end{proof}

Next, we show that the confidence widths do not grow too fast.

\begin{lemma}
For every t,
\begin{equation}
    \log \det \vect{A}_{t+1} = d \log \gamma + \sum_{\tau = 1}^{t} \sum_{i \in \mathcal{A}_\tau} \log(1 + w_{\tau,i}^2) 
\end{equation}
\label{lemma_logdet_multiply}
\end{lemma}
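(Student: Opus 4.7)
The plan is to exploit the fact that $\vect{A}_t$ is diagonal. Since its diagonal entries are $\{n_{t,i}+\gamma\}_{i=1}^{d}$, the determinant factors completely, giving
$$\log \det \vect{A}_{t+1} \;=\; \sum_{i=1}^{d} \log(n_{t+1,i} + \gamma).$$
Likewise, $\vect{A}_\tau^{-1}$ is diagonal, so the confidence width of a single arm simplifies to $w_{\tau,i}^2 = \|\vect{e}_i\|_{\vect{A}_\tau^{-1}}^2 = (n_{\tau,i} + \gamma)^{-1}$, which is the key identification that makes the RHS clean.

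Next, I would telescope arm-by-arm. Using $n_{1,i} = 0$, one writes
$$\log(n_{t+1,i} + \gamma) \;=\; \log \gamma \;+\; \sum_{\tau=1}^{t} \bigl[\log(n_{\tau+1,i} + \gamma) - \log(n_{\tau,i} + \gamma)\bigr].$$
Because $n_{\tau+1,i} - n_{\tau,i} = \mathds{1}\{i \in \mathcal{A}_\tau\}$, the $\tau$-th summand vanishes unless $i \in \mathcal{A}_\tau$, in which case it equals $\log\!\frac{n_{\tau,i}+\gamma+1}{n_{\tau,i}+\gamma} = \log\!\bigl(1 + (n_{\tau,i}+\gamma)^{-1}\bigr) = \log(1 + w_{\tau,i}^2)$.

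Summing over $i \in [d]$ and swapping the order of summation to group by round yields
$$\log \det \vect{A}_{t+1} \;=\; d \log \gamma \;+\; \sum_{\tau=1}^{t} \sum_{i \in \mathcal{A}_\tau} \log(1 + w_{\tau,i}^2),$$
which is the claim. I do not anticipate a genuine obstacle here: the identity is the diagonal-matrix analogue of the standard matrix-determinant-lemma argument used for generic linear bandits (where $\vect{A}_{\tau+1} = \vect{A}_\tau + \vect{x}_\tau \vect{x}_\tau^{\mathrm{T}}$ and one invokes $\det(\vect{A} + \vect{u}\vect{u}^{\mathrm{T}}) = \det(\vect{A})(1 + \|\vect{u}\|_{\vect{A}^{-1}}^2)$). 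The diagonal structure here reduces that tool to an elementary telescoping of scalar logarithms, and the only care needed is to check that the per-round increment $\log(1 + w_{\tau,i}^2)$ is indexed with $\vect{A}_\tau$ (not $\vect{A}_{\tau+1}$), which is exactly what the telescoping produces.
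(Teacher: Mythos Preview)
Your proof is correct. It differs from the paper's argument in organization rather than substance: the paper derives a one-step recursion in $t$ by writing $\det \vect{A}_{t+1} = \det(\vect{A}_t)\det\bigl(\vect{I} + \sum_{i \in \mathcal{A}_t} \vect{A}_t^{-1/2}\vect{e}_i\vect{e}_i^{\mathrm{T}}\vect{A}_t^{-1/2}\bigr)$, observes that the second factor is $\prod_{i \in \mathcal{A}_t}(1+w_{t,i}^2)$ because the rank-one terms hit distinct diagonal positions, and then unrolls by induction from $\vect{A}_1 = \gamma \vect{I}$. You instead expand $\log\det\vect{A}_{t+1}$ directly as $\sum_i \log(n_{t+1,i}+\gamma)$ and telescope each coordinate separately over $\tau$. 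The paper's route mirrors the standard matrix-determinant-lemma argument from linear bandits (as you yourself note), which has the advantage of making the analogy transparent; your route is slightly more elementary since it never invokes the $\vect{A}_t^{1/2}$ factorization and works purely with scalar logs. Both hinge on the same identifications $w_{\tau,i}^2 = (n_{\tau,i}+\gamma)^{-1}$ and $n_{\tau+1,i}-n_{\tau,i}=\mathds{1}\{i\in\mathcal{A}_\tau\}$, so the distinction is cosmetic.
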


\begin{proof} By the definition of $\vect{A}_t$, we have
\begin{align*}
    \det \vect{A}_{t+1} &= \det \left( \vect{A}_{t} + \sum_{i \in \mathcal{A}_t} \vect{e}_{i} \vect{e}_{i}^\textrm{T} \right)\\
    &= \det \left( \vect{A}_{t}^{1/2} \left( \vect{I} + \vect{A}_{t}^{-1/2}\left( \sum_{i \in \mathcal{A}_t} \vect{e}_{i} \vect{e}_{i}^\textrm{T} \right) \vect{A}_{t}^{-1/2} \right) \vect{A}_{t}^{1/2} \right) \\
    &= \det (\vect{A}_{t}) \det\left( \vect{I} + \sum_{i \in \mathcal{A}_t} \vect{A}_{t}^{-1/2}  \vect{e}_{i} \vect{e}_{i}^\textrm{T} \vect{A}_{t}^{-1/2}  \right)
\end{align*}

Each $\vect{A}_{t}^{-1/2}  \vect{e}_{i} \vect{e}_{i}^\textrm{T} \vect{A}_{t}^{-1/2}$ term has zeros everywhere except one entry on the diagonal and that non-zero entry is equal to $w_{t, i}^2$. Furthermore, the location of the non-zero entry is different in for each term. Hence,
\begin{equation*}
    \det\left( \vect{I} + \sum_{i \in \mathcal{A}_t} \vect{A}_{t}^{-1/2}  \vect{e}_{i} \vect{e}_{i}^\textrm{T} \vect{A}_{t}^{-1/2}  \right) = \prod_{i \in \mathcal{A}_t} (1 + w_{t, i}^2)
\end{equation*}

Therefore, we have
\begin{equation*}
    \log \det \vect{A}_{t+1} = \log\det \vect{A}_{t} + \sum_{i \in \mathcal{A}_t} \log(1 + w_{t, i}^2)
\end{equation*}
Since $\vect{A}_1 = \gamma \vect{I}$, we have $\log \det \vect{A}_1 = d \log \gamma$ and the result follows by induction.
\end{proof}

\begin{lemma}
For all t, $\log \det \vect{A}_t \leq d \log (t + \gamma - 1)$.
\label{lemma_logdet_upper}
\end{lemma}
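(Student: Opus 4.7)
The plan is to exploit the fact that $\vect{A}_t$ is diagonal, so its log-determinant factors into a sum of logs of the diagonal entries, and then bound that sum by controlling the total number of pulls up to round $t$.

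First I would recall from the paragraph preceding Lemma~\ref{lemma_logdet_multiply} that $\vect{A}_t = \mathrm{diag}(n_{t,1} + \gamma, \ldots, n_{t,d} + \gamma)$, where $n_{t,i} = \sum_{\tau=1}^{t-1} \mathds{1}\{i \in \mathcal{A}_\tau\}$. Consequently,
\begin{equation*}
\log \det \vect{A}_t = \sum_{i=1}^{d} \log(n_{t,i} + \gamma).
\end{equation*}
Since $\log$ is concave, Jensen's inequality gives
\begin{equation*}
\frac{1}{d}\sum_{i=1}^{d} \log(n_{t,i} + \gamma) \le \log\left(\gamma + \frac{1}{d}\sum_{i=1}^{d} n_{t,i}\right).
\end{equation*}

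Next I would bound $\sum_{i=1}^d n_{t,i}$. Interchanging the sums gives $\sum_{i=1}^d n_{t,i} = \sum_{\tau=1}^{t-1} |\mathcal{A}_\tau|$, and since $\mathcal{A}_\tau \subseteq [d]$ we have $|\mathcal{A}_\tau| \le d$, hence $\sum_{i=1}^d n_{t,i} \le d(t-1)$. Plugging this into the previous display yields
\begin{equation*}
\log \det \vect{A}_t \le d \log\left(\gamma + (t-1)\right) = d \log(t + \gamma - 1),
\end{equation*}
which is exactly the claim.

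There is no real obstacle here: the lemma is essentially a one-line consequence of the diagonal structure of $\vect{A}_t$ plus concavity of $\log$. The only point worth double-checking is the worst-case bound $|\mathcal{A}_\tau| \le d$, which holds trivially because $\vect{x}_\tau \in \{0,1\}^d$. (If the problem set-up later specializes to at most $K \le d$ arms per round, the same argument would give the sharper bound $d\log(\gamma + K(t-1)/d)$, but the stated lemma only needs the crude $|\mathcal{A}_\tau|\le d$ version.)
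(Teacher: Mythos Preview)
Your proof is correct and matches the paper's argument essentially line for line: the paper computes $\tr \vect{A}_t \le d(t+\gamma-1)$ and then invokes the AM--GM fact that, for a positive definite matrix with fixed trace, the determinant is maximized when all eigenvalues are equal, which is exactly your Jensen step on the diagonal entries. The only cosmetic difference is that the paper writes $\sum_i n_{t,i} = d(t-1)$ as an equality, whereas you (more carefully) derive it as the inequality $\sum_i n_{t,i} = \sum_{\tau=1}^{t-1}|\mathcal{A}_\tau| \le d(t-1)$; either way the upper bound follows.
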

\begin{proof}
Noting that $\vect{A}_t$ is a diagonal matrix with diagonals $(n_{t, i} + \gamma)$,
\begin{align*}
    \tr \vect{A}_t &= d \gamma + \sum_{i = 1}^{d} n_{t, i} \\
    &= d \gamma + d (t-1) \\
    &= d(t + \gamma - 1)
\end{align*}
Now, recall that $\tr \vect{A}_t$ equals the sum of the eigenvalues of $\vect{A}_t$. On the other hand, $\det(\vect{A}_t)$ equals the product of the eigenvalues. Since $\vect{A}_t$ is positive definite, its eigenvalues are all positive. Subject to these constraints, $\det(\vect{A}_t)$ is maximized when all the eigenvalues are equal; the desired bound follows. 
\end{proof}

\begin{lemma}
Let $\gamma \geq 1$. Then, for all t, we have
\begin{equation*}
    \sum_{\tau = 1}^{t} \sum_{i \in \mathcal{A}_\tau} w_{\tau, i}^2 \leq 2 d \log \left(1 + \frac{t}{\gamma} \right)
\end{equation*}
\label{lemma_width_sum_ub}
\end{lemma}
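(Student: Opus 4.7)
The plan is to combine the identity from Lemma \ref{lemma_logdet_multiply} with the determinant upper bound from Lemma \ref{lemma_logdet_upper}, using the elementary inequality $x \leq 2\log(1+x)$ valid for $x \in [0,1]$.

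First I would observe that the per-coordinate widths are uniformly bounded. Since $\vect{A}_t$ is diagonal with entries $(n_{t,i}+\gamma)$, we have $w_{t,i}^2 = \langle \vect{A}_t^{-1} \vect{e}_i, \vect{e}_i \rangle = 1/(n_{t,i} + \gamma)$. Because $n_{t,i} \geq 0$ and we assumed $\gamma \geq 1$, this gives $w_{t,i}^2 \leq 1/\gamma \leq 1$ for every $\tau$ and every $i \in \mathcal{A}_\tau$. This boundedness is the key structural fact that unlocks the logarithmic comparison.

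Next I would apply the scalar inequality $x \leq 2\log(1+x)$ (valid on $[0,1]$) term-by-term to convert the sum of squared widths into a sum of logarithms, obtaining
\begin{equation*}
\sum_{\tau=1}^{t} \sum_{i \in \mathcal{A}_\tau} w_{\tau,i}^2 \;\leq\; 2 \sum_{\tau=1}^{t} \sum_{i \in \mathcal{A}_\tau} \log\bigl(1 + w_{\tau,i}^2\bigr).
\end{equation*}
By Lemma \ref{lemma_logdet_multiply}, the right-hand sum equals $2\bigl(\log\det \vect{A}_{t+1} - d\log\gamma\bigr)$, and Lemma \ref{lemma_logdet_upper} (applied at time $t+1$) bounds $\log\det \vect{A}_{t+1} \leq d\log(t+\gamma)$. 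Combining these yields
\begin{equation*}
\sum_{\tau=1}^{t} \sum_{i \in \mathcal{A}_\tau} w_{\tau,i}^2 \;\leq\; 2d\bigl(\log(t+\gamma) - \log\gamma\bigr) \;=\; 2d\log\!\left(1 + \frac{t}{\gamma}\right),
\end{equation*}
which is the desired bound.

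There is no real obstacle here: the whole argument is a clean three-line chain once the hypothesis $\gamma \geq 1$ is used to justify $w_{\tau,i}^2 \leq 1$. The only thing worth double-checking is that Lemma \ref{lemma_logdet_upper} is being invoked at index $t+1$ (so the trace bound gives $d(t+\gamma)$, not $d(t+\gamma-1)$); either version works, since $\log(t+\gamma) \geq \log(t+\gamma-1)$ only strengthens the final inequality in the direction we need.
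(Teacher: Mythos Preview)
Your proposal is correct and matches the paper's proof essentially line for line: both use $\gamma\geq 1$ to ensure $w_{\tau,i}^2\leq 1$, apply $x\leq 2\log(1+x)$ termwise, then invoke Lemma~\ref{lemma_logdet_multiply} followed by Lemma~\ref{lemma_logdet_upper} at index $t+1$ to obtain $2d\log(1+t/\gamma)$. Your observation about indexing Lemma~\ref{lemma_logdet_upper} at $t+1$ is exactly right.
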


\begin{proof}
Note that $0 \leq w_{\tau, i}^2 \leq 1$, if $\gamma \geq 1$. Using the inequality $y \leq 2 \log(1 + y)$ for $0 \leq y \leq 1$, we have
\begin{align*}
    \sum_{\tau = 1}^{t} \sum_{i \in \mathcal{A}_\tau} w_{\tau, i}^2 &\leq 2 \sum_{\tau = 1}^{t} \sum_{i \in \mathcal{A}_\tau} \log (1 + w_{\tau, i}^2)\\
    &= 2 \log \det \vect{A}_{t+1} - 2 d \log \gamma\\
    &\leq 2 d \log \left(1 + \frac{t}{\gamma} \right)
\end{align*}
where the last two lines follow from Lemmas \ref{lemma_logdet_multiply} and \ref{lemma_logdet_upper} respectively.
\end{proof}

\begin{lemma}
Let the instantaneous regret at time $t$ be $r_t = \langle \vect{x}_t^*, \vect{\theta}^* \rangle - \langle \vect{x}_t, \vect{\theta}^* \rangle$. If $\gamma \geq 1$ and $\vect{\theta}^* \in \mathcal{C}_t$ for all $t \leq T$, then
\begin{equation*}
    \sum_{t = 1}^{T} r_t^2 \leq 8 \beta_T d \log \left(1 + \frac{T}{\gamma} \right)
\end{equation*}
\label{lemma_square_regret}
\end{lemma}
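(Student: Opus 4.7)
The plan is to combine the three preceding lemmas in a direct chain. First, I would square the instantaneous regret bound from Lemma \ref{lemma_regret_ub}, which gives $r_t^2 \leq 4 w_t^2 \beta_t$ under the event that $\vect{\theta}^* \in \mathcal{C}_t$. Summing over $t \in [T]$ and using the monotonicity of $\beta_t$ in $t$ (which follows from its explicit definition: the dominant $\log(\mathcal{N}(\mathcal{F}, \alpha, \|\cdot\|_2)/\delta)$ term is constant in $t$ and the remaining terms are clearly non-decreasing), I can pull out $\beta_T$ and obtain $\sum_{t=1}^T r_t^2 \leq 4 \beta_T \sum_{t=1}^T w_t^2$.

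Next, I would rewrite the cumulative squared widths using Lemma \ref{lemma_width_identity}, which expresses the action width as a sum of per-arm widths: $w_t^2 = \sum_{i \in \mathcal{A}_t} w_{t,i}^2$. This exchange is the crucial step that bridges the combinatorial action width with the per-arm pulls recorded in $\vect{A}_t$. After substitution, the double sum $\sum_{t=1}^T \sum_{i \in \mathcal{A}_t} w_{t,i}^2$ is exactly the quantity controlled by the elliptical potential-type bound in Lemma \ref{lemma_width_sum_ub}, which gives the bound $2 d \log(1 + T/\gamma)$ provided $\gamma \geq 1$ (so that each $w_{t,i}^2 \leq 1$, enabling the $y \leq 2\log(1+y)$ inequality used there).

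Chaining the two bounds yields $\sum_{t=1}^T r_t^2 \leq 4 \beta_T \cdot 2 d \log(1 + T/\gamma) = 8 \beta_T d \log(1 + T/\gamma)$, which is the claim. There is no real obstacle here — the whole argument is assembly of existing pieces. The only minor care needed is to confirm monotonicity of $\beta_t$ (immediate from its formula) and to note that the lemma's hypothesis $\vect{\theta}^* \in \mathcal{C}_t$ for all $t \leq T$ is precisely what justifies applying Lemma \ref{lemma_regret_ub} uniformly in $t$.
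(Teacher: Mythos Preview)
Your proposal is correct and matches the paper's proof essentially step for step: the paper also squares Lemma~\ref{lemma_regret_ub}, pulls out $\beta_T$ by monotonicity, invokes Lemma~\ref{lemma_width_identity} to pass to per-arm widths, and finishes with Lemma~\ref{lemma_width_sum_ub}.
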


\begin{proof}
 Assuming that $\vect{\theta}^* \in \mathcal{C}_t$ for all $t \leq T$,
\begin{align*}
    \sum_{t = 1}^{T} r_t^2 &\leq  \sum_{t = 1}^{T} 4 w_t^2 \beta_t\\
    &\leq 4 \beta_T \sum_{t = 1}^{T} w_t^2\\
    &= 4 \beta_T \sum_{t = 1}^{T} \sum_{i \in \mathcal{A}_t}  w_{t, i}^2 \\
    &\leq 8 \beta_T d \log \left(1 + \frac{T}{\gamma} \right)
\end{align*}
where the first step follows from Lemma \ref{lemma_regret_ub}, second step follows from the definition of $\beta_t$, third step uses the identity given in Lemma \ref{lemma_width_identity} and the last step is due to Lemma \ref{lemma_width_sum_ub}.
\end{proof}

\begin{theorem}
Let $\gamma \geq 1$. Then, with probability at least $1 - 2 \delta$, the T period regret is bounded by
\begin{equation*}
    \mathcal{R}(T, \pi) \leq \sqrt{ 8 d \beta_T^* (\delta, \alpha, \gamma) T \log \left(1 + \frac{T}{\gamma} \right) }
\end{equation*}
where
\begin{equation}
    \beta_T^*(\delta, \alpha, \gamma) = 8 \eta^2 \log \left(\mathcal{N}(\mathcal{F}, \alpha, \| \cdot \|_{2}) / \delta \right) + 2 \alpha d T \left(8 B + \sqrt{8 \eta^2 \log(4 d T^2  / \delta)} \right) + 4 \gamma G^2
\end{equation}
\label{thm_regret}
\end{theorem}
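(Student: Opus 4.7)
The plan is to chain together the three main lemmas already proved in Appendix \ref{pf_regrets} and apply Cauchy--Schwarz at the end. The two ingredients are (i) Lemma \ref{lemma_confidence}, which gives us that the event $\mathcal{E} := \{\vect{\theta}^* \in \mathcal{C}_t \text{ for all } t \in \mathbb{N}\}$ has probability at least $1 - 2\delta$ whenever the confidence radius is taken to be $\sqrt{\beta_t^*(\delta,\alpha,\gamma)}$, and (ii) Lemma \ref{lemma_square_regret}, which on the event $\mathcal{E}$ gives a bound on the sum of squared instantaneous regrets. Everything else is essentially bookkeeping.

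First, I would condition on the good event $\mathcal{E}$. On this event, all confidence sets contain the true parameter, so the optimistic choice $(\vect{x}_t,\widetilde{\vect{\theta}}_t)$ is guaranteed to satisfy $\langle \vect{x}_t, \widetilde{\vect{\theta}}_t\rangle \geq \langle \vect{x}^*_t, \vect{\theta}^*\rangle$, and each instantaneous regret $r_t = \langle \vect{x}^*_t,\vect{\theta}^*\rangle - \langle \vect{x}_t,\vect{\theta}^*\rangle$ can be controlled as in Lemma \ref{lemma_regret_ub} by $2 w_t \sqrt{\beta_t}$. Summing the squares and using the width-sum bound that comes from Lemmas \ref{lemma_width_identity}--\ref{lemma_width_sum_ub}, Lemma \ref{lemma_square_regret} yields, on $\mathcal{E}$,
\begin{equation*}
\sum_{t=1}^{T} r_t^2 \;\leq\; 8 \beta_T^*(\delta,\alpha,\gamma)\, d\, \log\!\left(1 + \tfrac{T}{\gamma}\right),
\end{equation*}
where we also used the monotonicity $\beta_t \leq \beta_T^*$ for $t \leq T$.

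Second, I would pass from $\sum_t r_t^2$ to $\sum_t r_t = \mathcal{R}(T,\pi)$ via the Cauchy--Schwarz inequality:
\begin{equation*}
\mathcal{R}(T,\pi) \;=\; \sum_{t=1}^{T} r_t \;\leq\; \sqrt{T \sum_{t=1}^{T} r_t^2} \;\leq\; \sqrt{ 8 d \beta_T^*(\delta,\alpha,\gamma)\, T \log\!\left(1 + \tfrac{T}{\gamma}\right) }.
\end{equation*}
Since the event $\mathcal{E}$ has probability at least $1-2\delta$, the displayed inequality holds with the same probability, which is exactly the claimed bound.

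There is no genuinely hard step left: all nontrivial work (self-normalized/covering arguments in Lemma \ref{lemma_confidence}, the log-determinant potential argument behind Lemmas \ref{lemma_logdet_multiply}--\ref{lemma_width_sum_ub}) has already been done. The only minor points to watch are making sure that (a) $\gamma \geq 1$ is invoked precisely where Lemma \ref{lemma_width_sum_ub} is used (so that $w_{\tau,i}^2 \leq 1$ and the $y \leq 2\log(1+y)$ inequality applies), and (b) the supremum over $t \leq T$ in ``$\beta_t \leq \beta_T^*$'' is justified by the fact that $\beta_t^*(\delta,\alpha,\gamma)$ is nondecreasing in $t$ from its explicit form. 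Neither of these causes any friction.
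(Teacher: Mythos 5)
Your proposal is correct and follows essentially the same route as the paper's own proof: condition on the event that $\vect{\theta}^* \in \mathcal{C}_t$ for all $t \leq T$ (Lemma \ref{lemma_confidence}), invoke Lemma \ref{lemma_square_regret} for the squared-regret sum, and finish with Cauchy--Schwarz. Your two side remarks --- that $\gamma \geq 1$ enters only through Lemma \ref{lemma_width_sum_ub}, and that $\beta_t^* \leq \beta_T^*$ by monotonicity in $t$ --- are exactly the implicit steps the paper relies on, so there is nothing to add.
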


\begin{proof}
Assuming that $\vect{\theta}^* \in \mathcal{C}_t$ for all $t \leq T$,
\begin{align*}
    \mathcal{R}(T, \pi) &= \sum_{t = 1}^{T} r_t \\
    &\leq \left( T \sum_{t = 1}^{T} r_t^2 \right)^{1/2}\\
    &\leq \left( 8 d \beta_T T \log \left(1 + \frac{T}{\gamma} \right) \right)^{1/2}
\end{align*}
where the last step follows from Lemma \ref{lemma_square_regret}. Then, by Lemma \ref{lemma_confidence}, $\vect{\theta}^* \in \mathcal{C}_t$ for all $t \leq T$ with probability at least $1 - 2 \delta$. Therefore, the bound holds true with probability at least $1 - 2 \delta$.
\end{proof}

\begin{corollary}
Letting $\delta = \mathcal{O} ((dT)^{-1})$, $\alpha = \mathcal{O} ((dT)^{-1})$ and $\gamma = 1$ results in a regret bound that satisfies
\begin{equation}
    \mathcal{R}(T, \pi) = \widetilde{\mathcal{O}} \left( \sqrt{\eta^2 d \log \left(\mathcal{N}(\mathcal{F}, T^{-1}, \| \cdot \|_{2})\right) T} \right)
\end{equation}
\label{corr_regret_order}
\end{corollary}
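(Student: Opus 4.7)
The plan is to show that this corollary is a direct substitution/book-keeping consequence of Theorem~\ref{thm_regret}: plug in the specified choices $\delta = \Theta((dT)^{-1})$, $\alpha = \Theta((dT)^{-1})$, $\gamma = 1$ into the explicit expression for $\beta_T^*(\delta,\alpha,\gamma)$, show that each of its three additive terms either collapses to the advertised covering-entropy term or to something polylogarithmic that gets swallowed by $\widetilde{\mathcal{O}}$, and then chase the resulting bound through the square-root in Theorem~\ref{thm_regret}. I do not anticipate needing any new probabilistic ingredient; the work is entirely asymptotic bookkeeping in $N,M,T,d$.

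Concretely, I would analyze the three summands of $\beta_T^*(\delta,\alpha,1)$ in turn. For the first, $8\eta^2\log(\mathcal{N}(\mathcal{F},\alpha,\|\cdot\|_2)/\delta)$ becomes $8\eta^2\bigl[\log\mathcal{N}(\mathcal{F},(dT)^{-1},\|\cdot\|_2) + \log(dT) + O(1)\bigr]$; this is the leading term and contributes the advertised $\eta^2\log\mathcal{N}(\mathcal{F},T^{-1},\|\cdot\|_2)$ factor, with the discrepancy between the scale $T^{-1}$ and $(dT)^{-1}$ absorbed into polylogarithmic factors (discussed below). For the second, $2\alpha dT\bigl[8B + \sqrt{8\eta^2\log(4dT^2/\delta)}\bigr]$ with $\alpha dT = \Theta(1)$ becomes $O\bigl(B + \eta\sqrt{\log(dT)}\bigr) = \widetilde{\mathcal{O}}(1)$, so it is dominated. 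For the third, $4\gamma G^2 = 4G^2 = O(1)$ since $\gamma=1$. Putting these together, $\beta_T^*(\delta,\alpha,1) = \widetilde{\mathcal{O}}\bigl(\eta^2\log\mathcal{N}(\mathcal{F},T^{-1},\|\cdot\|_2)\bigr)$.

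Finally, since $\gamma=1$ the outer factor $\log(1+T/\gamma) = \log(1+T) = \widetilde{\mathcal{O}}(1)$, so Theorem~\ref{thm_regret} yields
\begin{equation*}
\mathcal{R}(T,\pi) \le \sqrt{8 d\,\beta_T^*(\delta,\alpha,1)\,T\log(1+T)} = \widetilde{\mathcal{O}}\Bigl(\sqrt{\eta^2 d\,T\,\log\mathcal{N}(\mathcal{F},T^{-1},\|\cdot\|_2)}\Bigr),
\end{equation*}
with probability at least $1-2\delta = 1-\widetilde{\mathcal{O}}((dT)^{-1})$, which is the claimed bound.

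The only genuinely non-mechanical point, and hence the main (minor) obstacle, is justifying that the covering-entropy at scale $(dT)^{-1}$ that naturally appears from the substitution is equivalent, up to polylog factors, to the stated covering-entropy at scale $T^{-1}$. For this I would observe monotonicity of the covering number in the radius and recall that in all settings of interest (e.g.\ $\mathcal{F}$ a bounded subset of a $p$-dimensional manifold, for which $\log\mathcal{N}(\mathcal{F},\alpha,\|\cdot\|_2) = O(p\log(1/\alpha))$, as used in Section~\ref{sect_lrcb} and Lemma~\ref{lemma_covering}) one has $\log\mathcal{N}(\mathcal{F},(dT)^{-1}) = \log\mathcal{N}(\mathcal{F},T^{-1}) + O(p\log d)$, so the two entropies agree up to a $\mathrm{polylog}(d,T)$ factor that $\widetilde{\mathcal{O}}$ hides. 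With that remark the corollary follows.
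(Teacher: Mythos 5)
Your proof is correct and follows essentially the same route as the paper: substitute the stated $\delta,\alpha,\gamma$ into Theorem~\ref{thm_regret} and verify term by term that $\beta_T^*(\delta,\alpha,1)=\widetilde{\mathcal{O}}\bigl(\eta^2\log\mathcal{N}(\mathcal{F},T^{-1},\|\cdot\|_2)\bigr)$, with the $\alpha dT=\Theta(1)$ cancellation killing the discretization term and $4\gamma G^2=\mathcal{O}(1)$. The one step the paper takes that you omit is converting the probability-$(1-2\delta)$ statement into an unconditional bound: it writes $\mathcal{R}(T,\pi)\leq(1-\delta)\sqrt{8d\beta_T^*T\log(1+T/\gamma)}+2\delta BdT$, using the trivial per-horizon regret cap $BdT$ on the failure event; this is precisely what forces $\delta=\mathcal{O}((dT)^{-1})$ (so that $2\delta BdT=\mathcal{O}(B)$), whereas your version ends at a high-probability statement and the role of the specific $\delta$ scaling is left implicit. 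Conversely, your closing remark on the covering-scale mismatch is more careful than the paper: the paper's proof silently passes from entropy at the substituted scale to $\mathcal{N}(\mathcal{F},T^{-1},\|\cdot\|_2)$ (indeed its proof text sets $\alpha=\mathcal{O}(T^{-1})$, inconsistent with the corollary's statement and, taken literally, leaving a non-negligible $\mathcal{O}(d)$ additive term in $\beta_T^*$), while your observation that $\log\mathcal{N}(\mathcal{F},(dT)^{-1})$ and $\log\mathcal{N}(\mathcal{F},T^{-1})$ agree up to $\mathrm{polylog}(d)$ factors for the parametric classes actually used (e.g.\ the low-rank class of Lemma~\ref{lemma_covering}) is the right justification---note that bare monotonicity alone would not suffice, since the covering number at the finer scale $(dT)^{-1}$ is the larger one.
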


\begin{proof}
By Theorem \ref{thm_regret}, with probability $1$,
\begin{equation*}
    \mathcal{R}(T, \pi) \leq (1 - \delta) \sqrt{ 8 d \beta_T^* (\delta, \alpha, \gamma) T \log \left(1 + \frac{T}{\gamma} \right) } + 2 \delta BdT
\end{equation*}

Letting $\delta = \mathcal{O} (T^{-1})$, $\alpha = \mathcal{O} = (T^{-1})$ and $\gamma = 1$,
\begin{equation*}
    \mathcal{R}(T, \pi) = \widetilde{\mathcal{O}} \left( \sqrt{d \beta_T^* (T, T^{-1}, 1) T} \right)
\end{equation*}

Noting that $\beta_T^* (T, T^{-1}, 1) = \widetilde{\mathcal{O}} \left(\eta^2 \log \left(\mathcal{N}(\mathcal{F}, T^{-1}, \| \cdot \|_{2})\right) \right)$, the proof is complete.

\end{proof}

\section{Proofs for OFU-based Allocations}
\label{pf_allocation}

\subsection{Proof of Theorem \ref{thm_alloc_regret}}

In the allocation problem, the mean reward of the arms are given in the matrix $\vect{\Theta} \in \mathbb{R}^{N \times M}$. Consider setting $\vect{\theta} = \text{vec}({\vect{\Theta}})$ as the mean reward vector for the problem described in Appendix Section \ref{SCMAB}. Noting that $d = NM$ and $\|\cdot\|_\text{F} = \|\text{vec}(\cdot)\|_2$, the proof becomes a direct extension of Theorem \ref{thm_regret}.

\subsection{Proof of Theorem  \ref{low_rank_regret_thm}}

The proof is direct extension of Corollary \ref{corr_regret_order} where the covering number is replaced with the following upper bound given for the choice of $\mathcal{L}$ defined in equation \eqref{low_l}. We provide the upper bound for the covering number of $\mathcal{L}$ in the following lemma.

\begin{lemma}[Covering number for low-rank matrices]
The covering number of $\mathcal{L}$ given in \eqref{low_l} obeys
\begin{equation}
    \log \mathcal{N}(\mathcal{L}, \alpha, \| \cdot \|_{\text{F}}) \leq (N + M + 1) R \log \left( \frac{9B \sqrt{NM}}{\alpha} \right)
\end{equation}
\label{lemma_covering}
\end{lemma}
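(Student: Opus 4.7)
My strategy is the standard SVD-based covering argument, which exploits the fact that a rank-$R$ matrix in $\mathbb{R}^{N \times M}$ has only $(N+M+1)R$ effective degrees of freedom. For any $\vect{\Theta} \in \mathcal{L}$, the entry-wise bound $\theta_{ui} \in [0,B]$ gives $\|\vect{\Theta}\|_\text{F} \leq B\sqrt{NM}$. Write the thin SVD $\vect{\Theta} = \vect{U} \vect{\Sigma} \vect{V}^\mathrm{T}$ where $\vect{U} \in \mathbb{R}^{N \times R}$ and $\vect{V} \in \mathbb{R}^{M \times R}$ have orthonormal columns (so $\|\vect{U}\|_{\text{op}} = \|\vect{V}\|_{\text{op}} = 1$) and $\vect{\Sigma} \in \mathbb{R}^{R \times R}$ is diagonal with $\|\vect{\Sigma}\|_\text{F} = \|\vect{\Theta}\|_\text{F} \leq B\sqrt{NM}$.

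Next I cover each factor separately. Standard volume arguments give: (i) the operator-norm ball $\{\vect{U} \in \mathbb{R}^{N \times R} : \|\vect{U}\|_{\text{op}} \leq 1\}$ admits a $\delta_U$-net (in operator norm) of cardinality at most $(3/\delta_U)^{NR}$, (ii) analogously a $\delta_V$-net of size $(3/\delta_V)^{MR}$ for $\vect{V}$, and (iii) the Frobenius ball $\{\vect{\Sigma} \in \mathbb{R}^{R \times R} \text{ diagonal} : \|\vect{\Sigma}\|_\text{F} \leq B\sqrt{NM}\}$ admits a $\delta_\Sigma$-net of size $(3B\sqrt{NM}/\delta_\Sigma)^{R}$. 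Taking the product of these three nets and forming the corresponding products $\vect{U}'\vect{\Sigma}'\vect{V}'^{\mathrm{T}}$ yields a candidate cover of $\mathcal{L}$.

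The key quantitative step is a telescoping Frobenius estimate:
\begin{align*}
\|\vect{U}\vect{\Sigma}\vect{V}^\mathrm{T} - \vect{U}'\vect{\Sigma}'\vect{V}'^\mathrm{T}\|_\text{F}
&\leq \|(\vect{U}-\vect{U}')\vect{\Sigma}\vect{V}^\mathrm{T}\|_\text{F} + \|\vect{U}'(\vect{\Sigma}-\vect{\Sigma}')\vect{V}^\mathrm{T}\|_\text{F} + \|\vect{U}'\vect{\Sigma}'(\vect{V}-\vect{V}')^\mathrm{T}\|_\text{F} \\
&\leq \|\vect{U}-\vect{U}'\|_{\text{op}} \|\vect{\Sigma}\|_\text{F} + \|\vect{\Sigma}-\vect{\Sigma}'\|_\text{F} + \|\vect{\Sigma}'\|_\text{F} \|\vect{V}-\vect{V}'\|_{\text{op}},
\end{align*}
where I repeatedly use $\|\vect{A}\vect{B}\|_\text{F} \leq \|\vect{A}\|_{\text{op}}\|\vect{B}\|_\text{F}$ together with the orthonormality of columns. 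Choosing $\delta_U = \delta_V = \alpha/(3B\sqrt{NM})$ and $\delta_\Sigma = \alpha/3$ makes the right-hand side at most $\alpha$, so the product net is an $\alpha$-cover of $\mathcal{L}$ in $\|\cdot\|_\text{F}$.

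Finally, multiplying the three cardinalities gives
\[
\mathcal{N}(\mathcal{L}, \alpha, \|\cdot\|_\text{F}) \leq \left(\frac{9B\sqrt{NM}}{\alpha}\right)^{NR} \left(\frac{9B\sqrt{NM}}{\alpha}\right)^{MR} \left(\frac{9B\sqrt{NM}}{\alpha}\right)^{R},
\]
and taking logarithms yields the claimed bound. The only nontrivial book-keeping is the triangle-inequality decomposition above, including the need to use operator-norm nets on $\vect{U},\vect{V}$ (not Frobenius) so that the prefactors $\|\vect{\Sigma}\|_\text{F}$ appear rather than extra $\sqrt{R}$ factors; this is the step I expect to be the main place one could lose a factor and get a suboptimal constant, but it poses no conceptual difficulty.
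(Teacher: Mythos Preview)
Your proposal is correct and follows essentially the same SVD-based covering argument as the paper (which adapts the argument of Cand\`es--Plan). The only cosmetic differences are that the paper covers the factors $\vect{U},\vect{V}$ in the column-wise norm $\|\vect{U}\|_{1,2}=\max_i\|\vect{u}_i\|_2$ rather than the operator norm, and first normalizes to the unit Frobenius ball before rescaling by $\kappa=B\sqrt{NM}$; the telescoping estimate and the resulting cardinality bound are identical to yours.
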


\begin{proof}
This proof is modified from \cite{candes_2011}. Let $\mathcal{S} = \{ \vect{\Theta} \in \mathds{R}^{N \times M} : \text{rank}(\vect{\Theta}) \leq R, \|\vect{\Theta}\|_\text{F} \leq 1\}$. We will first show that there exists an $\epsilon$-net $\mathcal{S}^\epsilon$ for the Frobenious norm obeying 
\begin{equation*}
    |\mathcal{S}^\epsilon| \leq \left(9 / \epsilon \right)^{(N+M+1)R}
\end{equation*} 

For any $\vect{\Theta} \in \mathcal{S}$, singular value decomposition gives $\vect{\Theta} = \vect{U} \vect{\Sigma} \vect{V}^\textrm{T}$, where $\|\vect{\Sigma}\|_{\text{F}} \leq 1$. We will construct an $\epsilon$-net for $\mathcal{S}$ by covering the set of permisible $\vect{U}$, $\vect{\Sigma}$ and $\vect{V}$. Let $\mathcal{D}$ be the set of diagonal matrices with nonnegative diagonal entries and Frobenious norm less than or equal to one. We take $\mathcal{D}^{\epsilon/3}$ be an $\epsilon$/3-net for $\mathcal{D}$ with $|\mathcal{D}^{\epsilon/3}| \leq (9/\epsilon)^R$. Next, let $\mathcal{O}_{N,R} = \{\vect{U} \in \mathbb{R}^{N \times R} : \vect{U}^\textrm{T} \vect{U} = \vect{I} \}$. To cover $\mathcal{O}_{N,R}$, we use the $\|\cdot\|_{1,2}$ norm defined as
\begin{equation*}
    \|\vect{U}\|_{1,2} = \max_{i} \|\vect{u_i}\|_{\ell_2}
\end{equation*}
where $\vect{u_i}$ denotes the $i$th column of $\vect{U}$. Let $\mathcal{Q}_{N, R} = \{\vect{U} \in \mathbb{R}^{N, R} : \|\vect{U}\|_{1,2} \leq 1\}$. It is easy to see that $\mathcal{O}_{{N, R}} \subset \mathcal{Q}_{{N, R}}$ since the columns of an orthogonal matrix are unit normed. We see that there is an $\epsilon/3$-net $\mathcal{O}_{{N, R}}^{\epsilon/3}$ for $\mathcal{O}_{{N, R}}$ obeying $|\mathcal{O}_{{N, R}}^{\epsilon/3}| \leq (9/\epsilon)^{NR}$. Similarly, let $\mathcal{P}_{M,R} = \{\vect{V} \in \mathbb{R}^{M \times R} : \vect{V}^\textrm{T} \vect{V} = \vect{I} \}$.  By the same argument, there is an $\epsilon/3$-net $\mathcal{P}_{M, R}^{\epsilon/3}$ for $\mathcal{P}_{M, R}$ obeying $|\mathcal{P}_{{M, R}}^{\epsilon/3}| \leq (9/\epsilon)^{MR}$. We now let $\mathcal{S}^\epsilon = \{ \bar{\vect{U}} \bar{\vect{\Sigma}} \bar{\vect{V}}^\textrm{T} : \bar{\vect{U}} \in \mathcal{O}_{{N, R}}^{\epsilon/3}, \bar{\vect{V}} \in \mathcal{P}_{{M, R}}^{\epsilon/3}, \bar{\vect{\Sigma}} \in \mathcal{D}^{\epsilon/3}\}$, and remark $|\mathcal{S}^\epsilon| \leq |\mathcal{O}_{{N, R}}^{\epsilon/3}| |\mathcal{P}_{{M, R}}^{\epsilon/3}| |\mathcal{D}^{\epsilon/3}| \leq (9/\epsilon)^{(N+M+1)R}$. It remains to show that for all $\vect{\Theta} \in \mathcal{S}$, there exists $\bar{\vect{\Theta}} \in \mathcal{S}^\epsilon$ with $\|\vect{\Theta} - \bar{\vect{\Theta}}\|_\text{F} \leq \epsilon$.

Fix $\vect{\Theta} \in \mathcal{S}$ and decompose it as $\vect{\Theta} = \vect{U} \vect{\Sigma} \vect{V}^\textrm{T}$. Then, there exists $\bar{\vect{\Theta}} = \bar{\vect{U}} \bar{\vect{\Sigma}} \bar{\vect{V}}^\textrm{T} \in \mathcal{S}^\epsilon$ with $\bar{\vect{U}} \in \mathcal{O}_{{N, R}}^{\epsilon/3}, \bar{\vect{V}} \in \mathcal{P}_{{M, R}}^{\epsilon/3}, \bar{\vect{\Sigma}} \in \mathcal{D}^{\epsilon/3}$ satisfying $\|\vect{U} - \bar{\vect{U}}\|_{1, 2} \leq \epsilon/3$, $\|\vect{V} - \bar{\vect{V}}\|_{1, 2} \leq \epsilon/3$ and $\|\vect{\Sigma} - \bar{\vect{\Sigma}}\|_{\text{F}} \leq \epsilon/3$. This gives 
\begin{align*}
    \|\vect{\Theta} - \bar{\vect{\Theta}}\|_{\text{F}} &= \|\vect{U} \vect{\Sigma} \vect{V}^\textrm{T} - \bar{\vect{U}} \bar{\vect{\Sigma}} \bar{\vect{V}}^\textrm{T}\|_{\text{F}}\\
    &= \|\vect{U} \vect{\Sigma} \vect{V}^\textrm{T} - \bar{\vect{U}} \vect{\Sigma} \vect{V}^\textrm{T} + \bar{\vect{U}} \vect{\Sigma} \vect{V}^\textrm{T} - \bar{\vect{U}} \bar{\vect{\Sigma}} \vect{V}^\textrm{T} + \bar{\vect{U}} \bar{\vect{\Sigma}} \vect{V}^\textrm{T} - \bar{\vect{U}} \bar{\vect{\Sigma}} \bar{\vect{V}}^\textrm{T}\|_{\text{F}} \\
    &\leq \|(\vect{U} - \bar{\vect{U}}) \vect{\Sigma} \vect{V}^\textrm{T}\|_{\text{F}} + \|\bar{\vect{U}} (\vect{\Sigma} - \bar{\vect{\Sigma}}) \vect{V}^\textrm{T}\|_{\text{F}} + \|\bar{\vect{U}} \bar{\vect{\Sigma}} (\vect{V} - \bar{\vect{V}})^\textrm{T} \|_{\text{F}}
\end{align*}

For the first term, since $\vect{V}$ is an orthogonal matrix,
\begin{align*}
    \|(\vect{U} - \bar{\vect{U}}) \vect{\Sigma} \vect{V}^\textrm{T}\|^2_{\text{F}} &= \|(\vect{U} - \bar{\vect{U}}) \vect{\Sigma} \|^2_{\text{F}} \\
    &\leq \|\Sigma\|^2_{\text{F}} \|\vect{U} - \bar{\vect{U}} \|^2_{1, 2} \leq (\epsilon/3)^2
\end{align*}
By the same argument, $\|\bar{\vect{U}} \bar{\vect{\Sigma}} (\vect{V} - \bar{\vect{V}})^\textrm{T} \|_{\text{F}} \leq \epsilon/3$ as well. Lastly, $\|\bar{\vect{U}} (\vect{\Sigma} - \bar{\vect{\Sigma}}) \vect{V}^\textrm{T}\|_{\text{F}} = \|\vect{\Sigma} - \bar{\vect{\Sigma}}\|_{\text{F}} \leq \epsilon / 3$. Therefore, $\|\vect{\Theta} - \bar{\vect{\Theta}}\|_{\text{F}} \leq \epsilon$, showing that $\mathcal{S}^\epsilon$ is an $\epsilon$-net for $\mathcal{S}$ with respect to the Frobenious norm.

Next, we will construct an $\alpha$-net for $\mathcal{L}$ given in equation \ref{low_l}. Let $\kappa = B\sqrt{NM}$. We start by noting that for all $\vect{\Theta} \in \mathcal{L}$, the Frobenious norm obeys $\|\vect{\Theta}\|_\text{F} \leq \kappa$. Then, define $\vect{X} = \frac{1}{\kappa} \vect{\Theta} \in \mathcal{S}$ and $\mathcal{L}^\alpha := \left\{ \kappa \bar{\vect{X}} : \bar{\vect{X}} \in \mathcal{S}^\epsilon \right\}$. We previously showed that for any $\vect{X} \in \mathcal{S}$,  there exists $\bar{\vect{X}} \in \mathcal{S}^\epsilon$ such that $\|\vect{X} - \bar{\vect{X}}\|_\text{F} \leq \epsilon$. Therefore, for any $\vect{\Theta} \in \mathcal{L}$,  there exists $\bar{\vect{\Theta}} = \kappa \bar{\vect{X}} \in \mathcal{L}^\alpha$ such that $\|\vect{\Theta} - \bar{\vect{\Theta}}\|_\text{F} \leq \kappa \epsilon$. Setting $\epsilon = \alpha / \kappa$, we obtain that $\mathcal{L}^\alpha$ is an $\alpha$-net for $\mathcal{L}$ with respect to the Frobenious norm. Finally, the size of $\mathcal{L}^\alpha$ obeys
\begin{equation*}
    |\mathcal{L}^\alpha| = |\mathcal{S}^{\alpha / \kappa}| \leq \left(9 \kappa / \alpha \right)^{(N+M+1)R}
\end{equation*} 
This completes the proof.
\end{proof}

\section{Additional Experimental Details}
\label{sect_additional_exp}

All experiments are implemented in Python and carried out on a machine with 2.3GHz 8-core Intel Core i9 CPU and 16GB of RAM. We solve the allocation integer program \eqref{integer_num} using large-scale mixed integer programming (MIP) solver packages to have efficient computations.

\textbf{Parameter setup}:
\begin{itemize}[labelindent= 0pt, align= left, labelsep=0.4em, leftmargin=*]
\item In synthetic data, $\vect{\Theta}^*$ is scaled such that $B = 10$.
\item Standard deviation of the rewards: $\eta = 1$.
\item In the static setting, $d_{t, u} = 1$ for all $u \in [N]$.
\item In the dynamic setting, $d_{t, u} = 1$ with probability $0.2$, $0$ otherwise, independently for each $u \in [N]$.
\item $C_\text{max} = \text{ceil} \left( \frac{3}{M} \sum_{u = 1}^{N} d_{t, u} \right)$.
\item $c_{t, i}$ are uniformly sampled over $\{1, \dots, C_\text{max}\}$ independently for each $t \in [T]$ and $i \in [M]$.

\end{itemize}

To complement our discussion on the importance of capacity-aware recommendations, Figure \ref{fig_cap} shows the effects of having stricter capacity constraints. When $C_\text{max}$ is low (the capacity constraints are stricter), we see that the performance gap between our proposed method and ICF2 is larger. Therefore, it is more crucial to employ capacity-aware mechanisms in settings with tighter capacity constraints. 

\begin{figure}[ht]
\center
\includegraphics[width=0.4\textwidth]{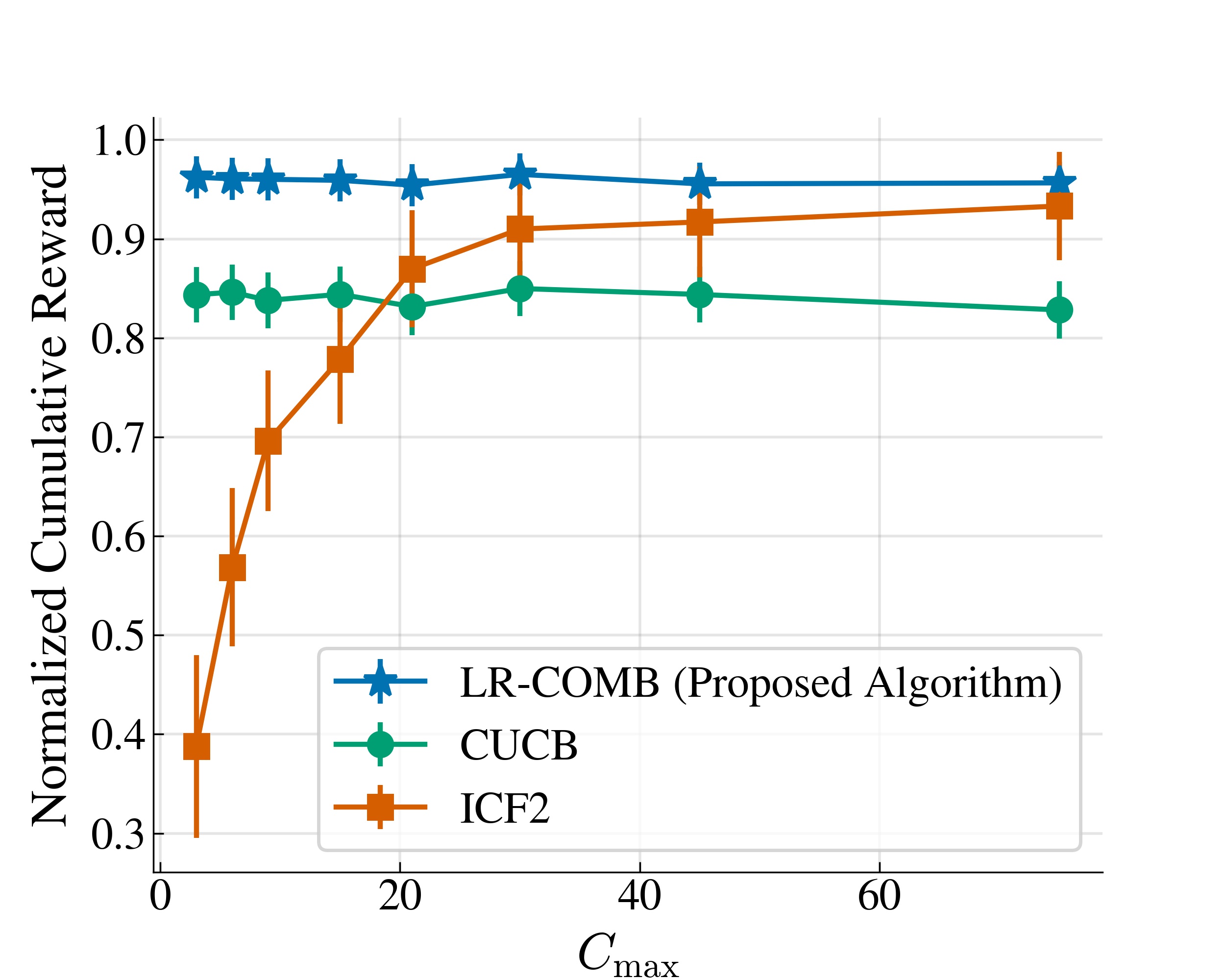}
\caption{Normalized cumulative reward obtained in $T = 300$ rounds for different choices of $C_{\text{max}}$ (normalized by the cumulative reward of optimal allocations). Synthetic data in a static setting with $N = 400$, $M = 200$, $R = 10$. For each data point, the experiments are run on $20$ problem instances and means are reported together with error regions that indicate one standard deviation of uncertainty.}
\label{fig_cap}
\end{figure}

In Figures \ref{exp_1}, \ref{exp_2}, \ref{exp_3} and \ref{exp_4}, we provide detailed  results for different experimental settings described in Section \ref{sect_exp}. Reward indicates the instantaneous reward obtained in each iteration, regret is the gap between the reward of the optimum allocation and the allocation achieved by the algorithm. Cumulative regret (defined in \eqref{cumulative_regret}) is the cumulative sum of instantaneous regrets up to iteration $t$.  The average cumulative regret is obtained by normalizing the cumulative regret with $1 / t$.

\begin{figure}[ht]
\center
\includegraphics[width=\textwidth]{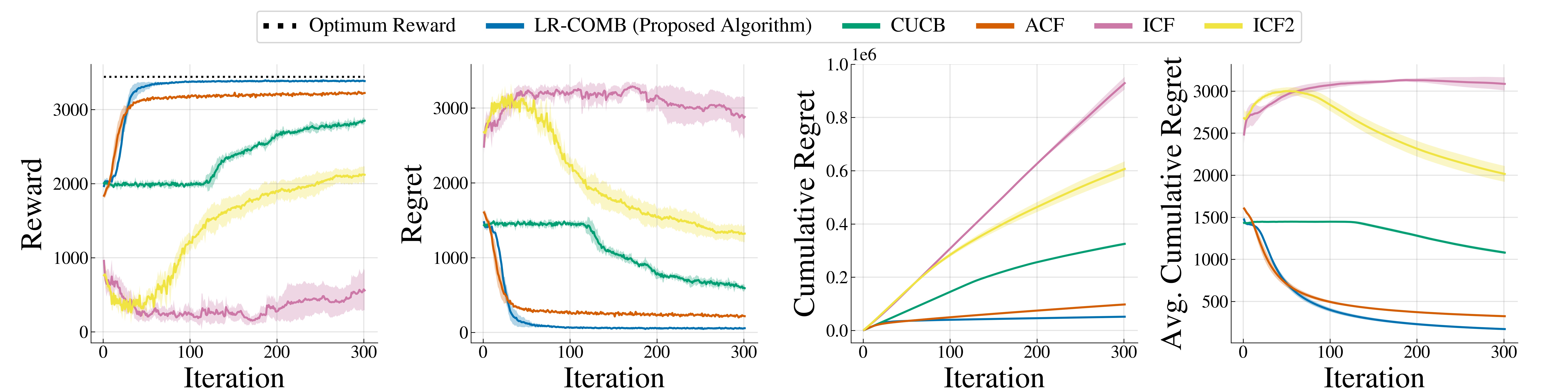}
\caption{Experimental results for synthetic data in a static setting with $N = 800$, $M = 400$, $R = 20$. The experiments are run on $10$ problem instances and means are reported together with error regions that indicate one standard deviation of uncertainty.}
\label{exp_1}
\end{figure}

\begin{figure}[ht]
\center
\includegraphics[width=\textwidth]{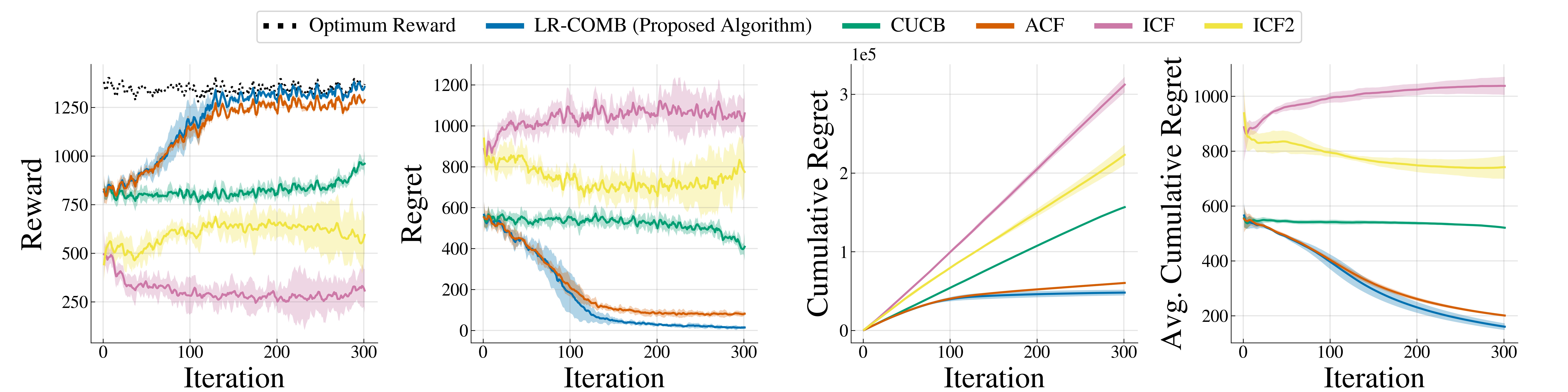}
\caption{Experimental results for synthetic data in a dynamic setting with $N = 1000$, $M = 150$, $R = 20$, probability of activity $0.2$. The experiments are run on $10$ problem instances and means are reported together with error regions that indicate one standard deviation of uncertainty.}
\label{exp_2}
\end{figure}

\begin{figure}[ht]
\center
\includegraphics[width=\textwidth]{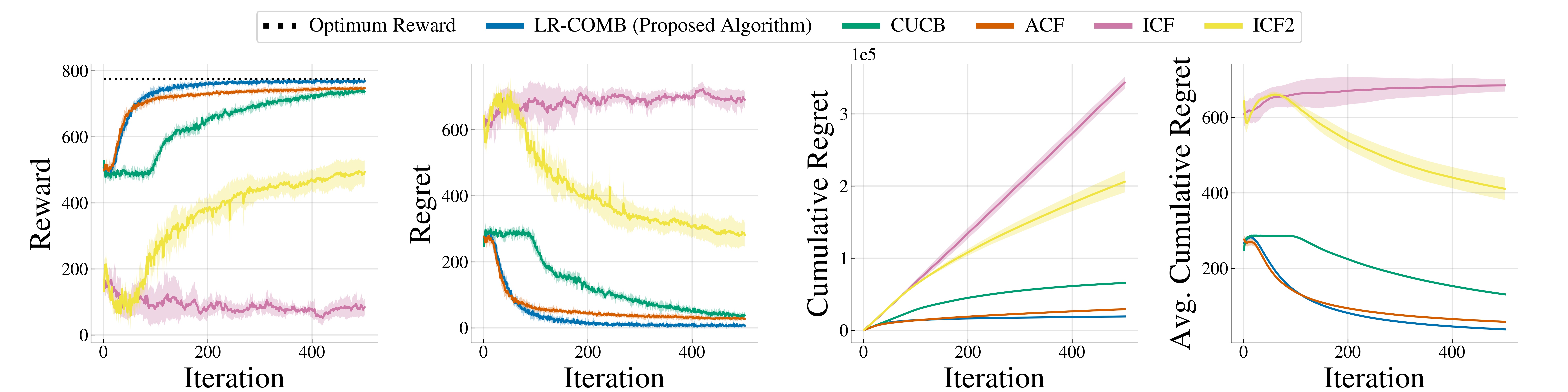}
\caption{Experimental results for Restaurant-Customer data in a static setting. The experiments are run on $10$ problem instances and means are reported together with error regions that indicate one standard deviation of uncertainty.}
\label{exp_3}
\end{figure}

\begin{figure}[ht]
\center
\includegraphics[width=\textwidth]{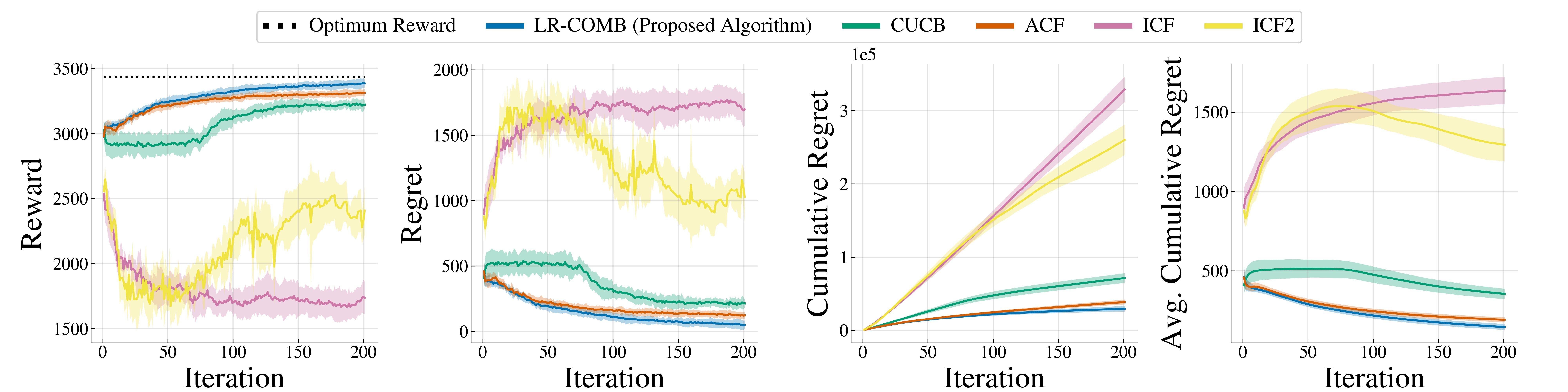}
\caption{Experimental results for MovieLens 100k data in a static setting. The experiments are run on $10$ problem instances and means are reported together with error regions that indicate one standard deviation of uncertainty.}
\label{exp_4}
\end{figure}

\end{document}